\newcommand{\del}[1]{}
\DeclareMathOperator*{\argmin}{arg\,min}
\newtheorem{lemma}{Lemma}
\newtheorem{theorem}{Theorem}
\newtheorem{corollary}{Corollary}
\newtheorem{assumption}{Assumption}
\newtheorem{definition}{Definition}
\def\X{\mathcal{X}}
\def\E{\mathbb{E}}
\def\V{\mathbb{V}}
\def\1{\mathbf{1}}
\def\P{\mathbb{P}}
\def\R{\mathbb{R}}
\def\E{\mathbb{E}}
\def\P{\mathbb{P}}
\def\R{\mathbb{R}}
\def\V{\mathcal{V}}
\def\X{\mathcal{X}}
\def\argmin{\text{argmin}}
\def\t{\top}
\def\1{\bm{1}}
\def\HighProbConst{c_0}
\def\ours{RIPS}
\def\tr{\mathrm{Trace}}
\newcommand{\mc}[1]{\mathcal{#1}}
\newcommand{\mb}[1]{\mathbf{#1}}
\def\t{\top}
\newcommand{\norm}[1]{\left\lVert#1\right\rVert}
\begin{document}

\twocolumn[

\aistatstitle{High-Dimensional Experimental Design and Kernel Bandits}

\aistatsauthor{ Romain Camilleri \And Julian Katz-Samuels \And  Kevin Jamieson }

\aistatsaddress{ University of Washington \\ \texttt{camilr@cs.washington.edu} \And University of Wisconsin \\ \texttt{katzsamuels@wisc.edu} \And University of Washington \\ \texttt{jamieson@cs.washington.edu} } ]

\begin{abstract}
In recent years methods from optimal linear experimental design have been leveraged to obtain state of the art results for linear bandits. 
A design returned from an objective such as $G$-optimal design is actually a probability distribution over a pool of potential measurement vectors. 
Consequently, one nuisance of the approach is the task of converting this continuous probability distribution into a discrete assignment of $N$ measurements. 
While sophisticated rounding techniques have been proposed, in $d$ dimensions they require $N$ to be at least $d$, $d \log(\log(d))$, or $d^2$ based on the sub-optimality of the solution.
In this paper we are interested in settings where $N$ may be much less than $d$, such as in experimental design in an RKHS where $d$ may be effectively infinite.  
In this work, we propose a rounding procedure that frees $N$ of any dependence on the dimension $d$, while achieving nearly the same performance guarantees of existing rounding procedures.
We evaluate the procedure against a baseline that projects the problem to a lower dimensional space and performs rounding which requires $N$ to just be at least a notion of the effective dimension. We also leverage our new approach in a new algorithm for kernelized bandits to obtain state of the art results for regret minimization and pure exploration. 
An advantage of our approach over existing UCB-like approaches is that our kernel bandit algorithms are also robust to model misspecification. 
\end{abstract}

\section{Introduction}\label{sct:setting}
This work studies a non-parametric multi-armed bandit game through the lens of experimental design. 
Fix a finite set of measurements $\mc{X} \subset \R^d$ and a function $\mu: \mc{X} \rightarrow \R$. 
We consider the following game between a learner and nature: 
at each time $t=1 \ldots T$, the learner requests $x_t \in \mc{X}$ and nature immediately reveals 
\begin{align*}
    y_t = \mu_{x_t}+\xi_t 
\end{align*}
where $\{\xi_t\}_{t=1}^T$ is a sequence of independent, mean-zero random variables with bounded variance.
We are interested in two objectives:

\paragraph{Regret minimization}
In this setting, we evaluate the performance of an algorithm choosing actions $\{x_t\}_{t=1}^T$ by its cumulative regret: $R_T = \max_{x \in \mc{X}}\sum_{t=1}^T \left(\mu_x - \mu_{x_t}\right)$.

\paragraph{Pure exploration in the PAC setting}
For a tolerance $\epsilon \geq 0$ and confidence level $\delta\in (0, 1)$, the aim of the learner in pure exploration is to sequentially take samples until a learner-defined stopping criterion is met, at which time the learner outputs an arm $\widehat{x} \in \mc{X}$ such that $\mu_{\widehat{x}} \geq \max_{x \in \mc{X}} \mu_{x} - \epsilon$ with probability at least $1 - \delta$. 

To aid us in our objectives, we assume some structure on the reward function $\mu$.

\begin{assumption}
There exists a known feature map $\phi : \R^d \mapsto \mc{H}$ that maps each $x \in \mc{X}$ to a (possibly infinite dimensional) Hilbert space $\mc{H}$, and moreover, there exists a $\theta_* \in \mc{H}$ and $h \geq 0$ such that $\max_{x\in\mc{X}} |\mu_x - \langle \theta_*,\phi(x)\rangle_{\mc{H}}| \leq h$.
\end{assumption}
Consequently, if $h$ is not too big, the expected value of each of the observations $y_t$ is nearly a linear function of its associated features $\phi(x_t)$. We say the model is \emph{misspecified} when $h >0$, and otherwise the setting is well-specified and reduces to the classical stochastic setting when $h=0$.

\begin{assumption}
Rewards are bounded $\max_{x\in\mc{X}}|\mu_x| \leq B$.
\end{assumption}

\begin{assumption}
For every time $t$, the additive stochastic noise $\xi_t$ is independent, mean-zero with $\E[\xi_t^2] \leq \sigma^2$.
\end{assumption}

While we assume the learner knows $B$ and $\sigma^2$, we assume that the learner \emph{does not} know the extent of the model misspecification $h \geq 0$. 
Note that we do \emph{not} assume $\xi_t$ is bounded, indeed, it can even be heavy tailed.

\subsection{Elimination algorithms and experimental design}\label{sec:elim_algorithm_motivation}
Whether the model is misspecified ($h > 0$) or not ($h=0$), a popular class of algorithms for both the objectives of regret minimization and pure exploration is known as \emph{elimination algorithms}. 
Elimination algorithms proceed in stages, maintaining a set $\widehat{\mc{X}} \subset \mc{X}$ of candidates that may achieve $\max_{x \in \mc{X}} \mu_x$ given all previous observations. At the beginning of the stage $\ell \geq 1$ the algorithm decides which measurements to take, nature reveals the observations, and the stage ends by constructing an estimate $\widehat{\mu}_{(\cdot)}$ of $\mu_{(\cdot)}$ and removing all elements $x \in \widehat{\mc{X}}$ from $\widehat{\mc{X}}$ where $\max_{x' \in \widehat{\mc{X}}} \widehat{\mu}_{x'} - \widehat{\mu}_x > \epsilon_\ell$. 
This process is repeated indefinitely in the case of regret minimization, or until $\widehat{\mc{X}}$ contains a single element in the case of pure exploration.
To be as effective as possible at discarding as many candidates as possible in the elimination stage (without discarding the best arm), a natural strategy of selecting how many and which measurements to take in the beginning of the round is to select $x_1,\dots,x_n \in \mc{X}$ to accurately estimate the differences of the estimates
\begin{align}
    \max_{x,x' \in \widehat{\mc{X}}} (\widehat{\mu}_{x'} - \widehat{\mu}_x) - (\mu_{x'} - \mu_x)  \leq \epsilon_\ell. \label{eqn:planning_objective}
\end{align}
If $x_* := \arg\max_{x \in \mc{X}} \mu_x$ and $x_* \in \widehat{\mc{X}}$ at the start of the round, then we have that $x_*$ will not be eliminated at the end since 
\begin{align*}
    \max_{x' \in \widehat{\mc{X}}} \widehat{\mu}_{x'} - \widehat{\mu}_{x_*} &\leq \max_{x' \in \widehat{\mc{X}}} \mu_{x'} - \mu_{x_*} + \epsilon_\ell \leq \epsilon_\ell.
\end{align*}
And moreover, it is straightforward to show that after the discarding step of stage $\ell$, $\max_{x \in \widehat{\mc{X}}}{\mu}_{x_*} - \mu_x \leq 2 \epsilon_\ell$.
To guide our choice of $x_1,\dots,x_n \in \mc{X}$ to achieve \eqref{eqn:planning_objective}, we exploit the assumed (nearly) linear model of above.

\subsection{Optimal experimental design and the problem of rounding continuous designs}\label{sct:rounding}

This section introduces the method of experimental design with the goal of achieving \eqref{eqn:planning_objective} by taking as few total samples as possible. 
Shortly, we will consider the case when $h > 0$ and $\phi$ is an arbitrary feature map.
But for now, let us make the simplifying assumption that $h=0$, $\phi$ is the identity map so that  $\mu_x = \langle \theta_* , x \rangle$, and $\xi_t \sim \mc{N}(0,\sigma^2)$.
Thus, if at time $t$ we select $x_t \in \mc{X} \subset \R^d$ we observe $\langle \theta_*, x_t \rangle + \xi_t$.
Suppose we observed pairs $\{(x_t,y_t)\}_{t=1}^T$ where each $x_t \in \mc{X}$ was chosen independently of any $y_s$ for $s \leq t$.
If we wished to achieve \eqref{eqn:planning_objective} for $\widehat{\mc{X}} \subset \mc{X}$ with
$\mu_x = \langle x, \theta_* \rangle$, perhaps the most natural way forward would be to compute the least squares estimator 
$ \widehat{\theta}_{LS} = \arg\min_\theta \sum_{t=1}^T (y_t - \langle x_t,\theta \rangle)^2$, and set $\widehat{\mu}_x = \langle \widehat{\theta}_{LS},x \rangle$.
Then \eqref{eqn:planning_objective} is equivalent to $\max_{v \in \mc{V}} \langle \widehat{\theta}_{LS} - \theta_* , v \rangle \leq \epsilon_\ell$ with $\mc{V} = \widehat{\mc{X}} - \widehat{\mc{X}}$.
By a standard sub-Gaussian tail-bound \cite{lattimore2020bandit}, we have with probability at least $1-\delta$ that for all $v \in \mc{V}\subset\R^d$
\begin{align}
    |\langle v,\! \widehat{\theta}_{LS} \!-\! \theta_* \rangle| \! \leq \! \| v \|_{( \sum_{t=1}^T \!x_t x_t^\top )^{-1}} \! \sqrt{ 2 \sigma^2\! \log(2|\V|/\delta) },\label{eq:least_squares_bound}
\end{align}  
where we adopt the notation $\|z\|_A = \sqrt{z^\top A z}$ for any $z \in \R^d$ and symmetric semi-definite positive $A$.
Note that this error bound only depends on those $x_t$ measurements that we choose \emph{before} any responses $y_t$ are observed. This allows us to plan, that is, choose the $T$ measurement vectors to minimize the RHS of \eqref{eq:least_squares_bound}. Unfortunately, this minimization problem is known to be NP-hard \cite{pukelsheim2006optimal,allen2017near}. 
As a consequence, approximation algorithms based on the relaxation 
\begin{align}
    \bar{\lambda} = \argmin_{\lambda \in \triangle_{\mc{X}}} \max_{v \in \mc{V}} v^\top \left( \textstyle\sum_{x \in \mc{X}} \lambda_x x x^\top \right)^{-1} v \label{eq:continuous_design_objective}
\end{align}
have been proposed. These first solve for $\bar{\lambda}$ and ``round'' this to a discrete allocation of measurements.
 
\paragraph{Deterministic rounding} Perhaps the simplest scheme is to obtain a solution $\bar{\lambda}$ of \eqref{eq:continuous_design_objective} and then sample $x \in \mc{X}$ exactly $\lceil \bar{\lambda}_x T \rceil$ times. In the worst case, this will result in $|\text{support}(\bar{\lambda})|$ additional measurements than the intended $T$.
Caratheodory's theorem provides a polynomial-time algorithm for constructing $\tilde{\lambda} \in \triangle_{\mc{X}}$ such that $\sum_{x \in \mc{X}} \tilde{\lambda}_x x x^\top = \sum_{x \in \mc{X}} \bar{\lambda}_x x x^\top$ and $|\text{support}(\tilde{\lambda})| \leq (d+1)d/2$. However, more sophisticated rounding procedures exist. \cite{allen2017near} inflates the RHS of \eqref{eq:least_squares_bound} by a constant factor while only requiring that $T = \Omega(d)$. When $\V = \X$, another strategy is to solve the optimization problem \eqref{eq:continuous_design_objective} with a Frank-Wolfe style algorithm that is terminated only after $O(d \log\log(d))$ iterations so that the rounding according to the naive ceiling operation only inflates $T$ by the number of iterations which is $O(d \log\log(d))$ \cite{todd2016minimum}. 

\paragraph{Stochastic rounding} 
Another basic rounding algorithm simply samples $x_1,\ldots, x_T \sim \bar{\lambda}$. 
Unfortunately, using the least squares estimator $\widehat{\theta}_{LS}$, we may have that $\sum_{t=1}^T x_t x_t^\top $ deviates dramatically from $T\sum_{x \in \mc{X}} \bar{\lambda}_x x x^\top$ for moderate $T$, thus any guarantees require $T$ to be $\text{poly}(d)$ and moreover, performance relies on the spectrum of $\sum_{x \in \mc{X}} \bar{\lambda}_x x x^\top$ \cite{rizk2020refined}.
As a consequence, \cite{tao2018best} proposed using the inverse propensity score (IPS) estimator $\widehat{\theta}_{IPS} := ( \sum_{x \in \mc{X}} \bar{\lambda}_x x x^\top )^{-1} ( \frac{1}{T} \sum_{t=1}^T x_t y_t )$. 
From \cite{tao2018best}, with probability at least $1-\delta$ we have for all $v \in \V$ simultaneously 
\begin{align}
&|\langle v, \widehat{\theta}_{IPS} - \theta_* \rangle| \leq \sqrt{\frac{2\sigma^2 \norm{v}_{A(\bar\lambda)^{-1}}^2 \log(2|\mc{V}|/\delta)}{T}} \label{eqn:ips_estimator_bound}\\
&\quad+ \frac{\log(2|\mc{V}|/\delta)(1 + \max_{x \in \X} |v^\t A(\bar\lambda)^{-1} x|)}{T}. \nonumber
\end{align}
where $A(\lambda):= \sum_{x \in \mc{X}} \lambda_x x x^\top$. The second term of \eqref{eqn:ips_estimator_bound} accounts for potentially rare but large deviations of size $\max_{x \in \X} |v^\t A(\bar\lambda)^{-1} x|$. 
Sadly, this second term is cumbersome in analyses since it can dominate the first term, and it cannot be removed in the worst-case. 
A final class of algorithms rely on \emph{proportional volume sampling}, or sampling from a determinantal point process (DPP), but are limited to specific optimality criteria \cite{nikolov2019proportional,derezinski2020bayesian}.

\subsection{Main contributions}
The main contributions of this paper include a novel scheme for experimental design and its application to kernel bandits. 
\begin{itemize}
    \item We propose an estimator $\widehat{\theta}_{\ours}$ that overcomes many of the shortcomings of the prior art reviewed in Section~\ref{sct:rounding} for $h=0$ and $\phi\equiv$ identity. For any fixed $\theta_* \in \R^d$, $\mc{V} \subset\R^d, \mc{X} \subset \R^d$, $\lambda \in \triangle_{\mc{X}}$, and $T \in \mathbb{N}$, if $T$ samples are drawn randomly according to $\lambda$ to construct $\widehat{\theta}_{\ours}$, then with probability at least $1-\delta$ we have for all $v \in \mc{V}$
    \begin{align*}
&|\langle v, \widehat{\theta}_{\ours} - \theta_* \rangle| \\
&\leq \| v \|_{( \sum_{x \in \mc{X}} \lambda_x x x^\top)^{-1}}  \sqrt{ \frac{c (\sigma^2+B^2) \log(2|\V|/\delta)}{T} }
\end{align*}
for an absolute constant $c$. 
Note that our method puts no restrictions on $T$ but matches the ideal discrete allocation of \eqref{eq:least_squares_bound} up to a constant by realizing that 
\begin{align*}
    \frac{\inf_{\lambda \in \triangle_{\mc{X}}} \max_{v \in \mc{V}}\| v \|_{( \sum_{x \in \mc{X}} T \lambda_x x x^\top)^{-1}}}{\min_{\{x_t\}_{t=1}^T \in \mc{X}} \max_{v \in \mc{V}}\| v \|_{( \sum_{t=1}^T x_t x_t^\top )^{-1}}} \leq 1.
\end{align*}
We also note that we only assume the stochastic noise has bounded variance and do not rule out heavy-tailed distributions. 
The estimator $\widehat{\theta}_{\ours}$ is a special case of our more general estimator.
 
\item We extend our estimator to the misspecified setting where $h\geq 0$ and to use feature maps $\phi : \R^d \rightarrow \mc{H}$ for an RKHS $\mc{H}$.
When $\mc{H}$ can represent a high or even an infinite dimensional space, restrictions on $T$ based on the dimension start to become paramount.  
For any fixed $\theta_* \in \mc{H}$, $\mc{V} \subset\mc{H}, \mc{X} \subset \R^d$, $\lambda \in \triangle_{\mc{X}}$, $T \in \mathbb{N}$, and $\gamma \geq 0$, if $T$ samples are drawn randomly according to $\lambda$ to construct $\widehat{\theta}_{\ours}(\gamma)$, then with probability at least $1-\delta$ we have for all $v \in \mc{V}$
\begin{align*}
|\langle  v, & \widehat{\theta}_{\ours}(\gamma) - \theta_* \rangle| \leq  \| v \|_{( \sum_{x \in \mc{X}} \lambda_x \phi(x) \phi(x)^\top\! + \gamma I )^{-1}}  \\ &\times\Big(\sqrt{\gamma} \|\theta_*\|_2 + h \!+\! \! \sqrt{ \!\frac{c (\sigma^2\! +\! B^2)\! \log(2|\V|/\delta)}{T} } \Big).
\end{align*}
Note that since $\mc{H}$ may be infinite-dimensional, the estimator $\widehat{\theta}_{\ours}(\gamma)$ is constructed implicitly and is implemented through kernel evaluations only.
\item We empirically compare $\widehat{\theta}_{\ours}(\gamma)$ to the sampling and estimator pairs of Section~\ref{sct:rounding} and show that $\widehat{\theta}_{\ours}(\gamma)$ is competitive on both finite dimensional $G$-optimal design as well as its regularized RKHS variant sometimes called Bayesian experimental design. 

\item We employ $\widehat{\theta}_{\ours}(\gamma)$ in a novel elimination style algorithm for kernel bandits. Our regret bounds match state of the art results in the well-specified setting, and are the first linear bounds that we are aware of for the misspecified setting. 
In addition, we state an instance-dependent pure-exploration result for identifying an $\epsilon$-good arm with probability at least $1-\delta$ that compares favorably to known lower bounds. 
One advantage of our algorithm over prior kernel bandits and Bayesian Optimization algorithms \cite{srinivas2009gaussian,valko2013finitetime,frazier2018tutorial} is that our approach naturally allows for taking batches of pulls per round. 

\end{itemize}

\section{Robust Inverse Propensity Score (\ours) estimator}
In this section we introduce the $\widehat{\theta}_{\ours}$ estimator. 
In finite dimensions, our estimator first constructs $\widehat{\theta}_{IPS}$ but then to avoid the large deviations term of \eqref{eqn:ips_estimator_bound} applies robust mean estimation on each $\langle v, \theta_* \rangle$ to obtain a $\widehat{\theta}_{\ours}$ which is consistent with all of these estimates. 
When we move to an RKHS setting, we add regularization to avoid vacuous bounds and account for the introduced bias. 
The bias of misspecification is handled similarly.  
We begin with robust mean estimation.
\begin{definition}
 Let $X_1,\ldots, X_n$ be i.i.d. random variables with mean $\bar{x}$ and variance $\nu^2$. Let $\delta \in (0,1)$. We say that $\widehat{\mu}(X_1,\ldots,X_n)$ is a \emph{$\delta$-robust estimator} if there exist universal constants $c_1,\HighProbConst > 0$ such that if $n \geq c_1 \log(1/\delta)$, then with probability at least $1-\delta$
\begin{align*}
|\widehat{\mu}( \{X_t\}_{t=1}^n )- \bar{x}| \leq \HighProbConst \sqrt{\frac{\nu^2 \log(1/\delta)}{n}}.
\end{align*}
\end{definition}
Examples of $\delta$-robust estimators include the median-of-means estimator and Catoni's estimator \cite{lugosi2019mean}. 
This work employs the use of the Catoni estimator which satisfies $|\widehat{\mu}( \{X_t\}_{t=1}^n )- \bar{x}| \leq \sqrt{ \frac{2 \nu^2 \log(1/\delta)}{n - 2 \log(1/\delta)}}$ for $n > 2 \log(1/\delta)$ which leads to an optimal leading constant as $n \rightarrow \infty$.
We will use a separate robust mean estimate for each $v \in \mc{V}$. In particular, to estimate $\langle v, \theta_* \rangle$ we use $\widehat{\mu}( \{ v^\t A^{(\gamma)}(\lambda)^{-1} \phi(x_t) y_t \}_{t=1}^T )$ where
\begin{align}
    A^{(\gamma)}(\lambda) := \sum_{x \in \mc{X}} \lambda_x \phi(x) \phi(x)^\top + \gamma I.\label{eq:A_lambda}
\end{align}

\begin{figure}[h]
\begin{algorithm}[H] 
  \caption{RIPS for Experimental Designs in an RKHS}
\begin{algorithmic}
\State {\bfseries Input:} Finite sets $\mc{X} \subset \R^d$ and $\mc{V} \subset \mc{H}$, feature map $\phi: \R^d \rightarrow \mc{H}$, number of samples $\tau$, regularization $\gamma>0$, robust mean estimator $\widehat{\mu}: \R^* \rightarrow \R$
\begin{align}\label{eqn:RKHS_exp_design_in_RIPS_alg}
    \lambda^* := \arg\min_{\lambda \in \triangle_{\mc{X}}} \max_{v \in \mc{V}}\|v\|_{\left(\sum_{x}\lambda_x \phi(x)\phi(x)^\top + \gamma I\right)^{-1}}
\end{align}
\State Randomly draw $\widetilde{x}_1, \ldots, \widetilde{x}_\tau$ from $\mc{X}$ according to $\lambda^*$
\State Set $W^{(v)} = \widehat{\mu}( \{ v^\t A^{(\gamma)}(\lambda^*)^{-1} \phi(\widetilde{x}_t) \widetilde{y}_t \}_{t=1}^\tau )$
\State Set $\displaystyle\widehat{\theta} := \arg\min_{\theta} \max_{v \in \mc{V}} \frac{ | \langle \theta, v \rangle - W^{(v)}| }{\|v\|_{(\sum_{x \in \mc{X}} \lambda_x^* \phi(x) \phi(x)^\top + \gamma I)^{-1}}}$
\State {\bfseries Return:} $\{W^{(v)}\}_{v \in \mc{V}}$, $\widehat{\theta}$
\end{algorithmic}
\end{algorithm}
\caption{In this work, we assume each element in $\mc{V}$ is a linear combination of $\phi \circ \mc{X}$ which makes all quantities well-defined and can be computed using kernel evaluations $k(x,x') := \langle\phi(x),\phi(x')\rangle_{\mc{H}}$. Moreover, Equation~\ref{eqn:RKHS_exp_design_in_RIPS_alg} is convex with gradients that can be computed using kernel evaluations (See Section~\ref{sec:computing_with_kernel_evaluations}).}
\label{alg:RIPS_exp_design_for_RKHS}
\end{figure}

Our {\ours} procedure for experimental design in an RKHS is presented in Figure~\ref{alg:RIPS_exp_design_for_RKHS}. It has the following guarantee.
\begin{theorem}\label{thm:robust_estimator_lemma}
Fix any finite sets $\mc{X} \subset \R^d$ and $\mc{V} \subset \mc{H}$, feature map $\phi: \R^d \rightarrow \mc{H}$, number of samples $\tau$ and regularization $\gamma>0$. If the {\ours} procedure of Figure~\ref{alg:RIPS_exp_design_for_RKHS} is run with $\tfrac{\delta}{|\mc{V}|}$-robust mean estimator  $\widehat{\mu}(\cdot)$ and if $\tau \geq c_1 \log(|\mc{V}|/\delta)$ then with probability at least $1-\delta$, we have 
\begin{align*}
    \max_{v \in \mc{V}} &\frac{ | W^{(v)} - \langle \theta_*, v \rangle| }{\|v\|_{(\sum_{x \in \mc{X}} \lambda_x \phi(x) \phi(x)^\top + \gamma I)^{-1}} } \\
    &\leq  \sqrt{\gamma} \|\theta_* \|_2 + h +   \HighProbConst \sqrt{\tfrac{(B^2 + \sigma^2)}{\tau}\log(2|\mc{V}|/\delta) } ,
\end{align*}
Moreover, $W^{(v)} = \widehat{\mu}( \{ v^\t A^{(\gamma)}(\lambda)^{-1} \phi(x_t) y_t \}_{t=1}^\tau )$ can be replaced by $\langle \widehat{\theta},v\rangle$ by multiplying the RHS by a factor of $2$.
\end{theorem}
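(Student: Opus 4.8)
The plan is to fix an arbitrary $v \in \mc{V}$, abbreviate $A := A^{(\gamma)}(\lambda^*)$ and $\Sigma := \sum_{x}\lambda^*_x \phi(x)\phi(x)^\t = A - \gamma I$, and analyze the per-sample summand $Z_t := v^\t A^{-1}\phi(\widetilde{x}_t)\widetilde{y}_t$ whose robust mean is exactly $W^{(v)}$. I would split the error through the triangle inequality into a \emph{bias} term $|\E[Z_t] - \langle\theta_*,v\rangle|$ and a \emph{deviation} term $|W^{(v)} - \E[Z_t]|$, bound each separately, and then union bound over $\mc{V}$. The key facts I plan to feed the robust estimator are a bound on the bias (at most $(\sqrt{\gamma}\|\theta_*\|_2 + h)\|v\|_{A^{-1}}$) and a bound on the variance (at most $(B^2+\sigma^2)\|v\|_{A^{-1}}^2$).

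First I would compute $\E[Z_t]$. Using $\E[\widetilde{y}_t \mid \widetilde{x}_t] = \mu_{\widetilde{x}_t}$ and writing $\mu_x = \langle\theta_*,\phi(x)\rangle + \eta_x$ with $|\eta_x|\le h$ (Assumption~1), one obtains $\E[Z_t] = v^\t A^{-1}\Sigma\,\theta_* + v^\t A^{-1}\sum_x \lambda^*_x\eta_x\phi(x)$. Substituting $\Sigma = A - \gamma I$ rewrites the first piece as $\langle\theta_*,v\rangle - \gamma\, v^\t A^{-1}\theta_*$, so the bias has exactly two sources: a regularization term $\gamma|v^\t A^{-1}\theta_*|$ and a misspecification term $|v^\t A^{-1}w|$ with $w := \sum_x\lambda^*_x\eta_x\phi(x)$. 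For the regularization term, Cauchy--Schwarz in the $A^{-1}$ geometry gives $\gamma|v^\t A^{-1}\theta_*| \le \gamma\|v\|_{A^{-1}}\|\theta_*\|_{A^{-1}} \le \sqrt{\gamma}\,\|v\|_{A^{-1}}\|\theta_*\|_2$, using $A^{-1}\preceq \gamma^{-1}I$.

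The misspecification term is where the main subtlety lies, and it is the step I expect to be the crux. I would bound it via $|v^\t A^{-1}w| \le \|v\|_{A^{-1}}\|w\|_{A^{-1}}$ and then show $\|w\|_{A^{-1}}\le h$. Writing $\|w\|_{A^{-1}} = \sup_{\|u\|_A\le 1}\langle u, w\rangle$ and using $|\eta_x|\le h$ together with $\sum_x\lambda^*_x = 1$, one applies Cauchy--Schwarz as $\sum_x\lambda^*_x|\langle u,\phi(x)\rangle| \le \big(\sum_x\lambda^*_x\big)^{1/2}\big(\sum_x\lambda^*_x\langle u,\phi(x)\rangle^2\big)^{1/2} = \|u\|_\Sigma \le \|u\|_A \le 1$, where $\Sigma\preceq A$ since $\gamma>0$. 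This yields $\|w\|_{A^{-1}}\le h$, so the total bias is at most $(\sqrt{\gamma}\|\theta_*\|_2 + h)\|v\|_{A^{-1}}$. The nontrivial point is that the misspecification bias stays proportional to $\|v\|_{A^{-1}}$ with no dimension dependence, which is what makes the final bound clean.

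Next I would bound the variance feeding the robust estimator: $\mathrm{Var}(Z_t)\le \E[Z_t^2]$, and since $\E[\widetilde{y}_t^2 \mid \widetilde{x}_t] = \mu_{\widetilde{x}_t}^2 + \E[\xi_t^2] \le B^2+\sigma^2$ (Assumptions~2--3), $\E[Z_t^2] \le (B^2+\sigma^2)\,v^\t A^{-1}\Sigma A^{-1}v \le (B^2+\sigma^2)\|v\|_{A^{-1}}^2$ using $A^{-1}\Sigma A^{-1}\preceq A^{-1}$. Invoking the $\tfrac{\delta}{|\mc{V}|}$-robust estimator guarantee (valid since $\tau\ge c_1\log(|\mc{V}|/\delta)$) gives, for this $v$ with probability $1-\delta/|\mc{V}|$, $|W^{(v)} - \E[Z_t]| \le \HighProbConst\,\|v\|_{A^{-1}}\sqrt{(B^2+\sigma^2)\log(|\mc{V}|/\delta)/\tau}$. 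A union bound over $\mc{V}$, combined with the bias bound and division by $\|v\|_{A^{-1}}$, delivers the first claim (the $2$ inside $\log(2|\mc{V}|/\delta)$ is harmless slack). Finally, for the ``moreover'' part, I would note that $\theta_*$ is feasible for the convex program defining $\widehat{\theta}$ and attains objective value equal to the just-established bound $\varepsilon$, so optimality forces $\max_{v} |\langle\widehat{\theta},v\rangle - W^{(v)}|/\|v\|_{A^{-1}} \le \varepsilon$; one further triangle inequality among $\langle\widehat{\theta},v\rangle$, $W^{(v)}$, and $\langle\theta_*,v\rangle$ produces the factor of $2$.
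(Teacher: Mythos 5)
Your proposal is correct and follows essentially the same route as the paper's proof: the same bias--variance decomposition of $W^{(v)} - \langle\theta_*,v\rangle$, the same $\sqrt{\gamma}\|\theta_*\|_2\|v\|_{A^{-1}}$ bound on the regularization bias, the same $(B^2+\sigma^2)\|v\|_{A^{-1}}^2$ variance bound, and the same feasibility-of-$\theta_*$ argument for the factor of $2$. The only cosmetic difference is that you bound the misspecification bias via the dual norm $\|w\|_{A^{-1}}\le h$ of the aggregated bias vector, whereas the paper applies Jensen and Cauchy--Schwarz directly to $\E[|v^\top A^{-1}\phi(x_1)\eta_{x_1}|]$; both reduce to the same Cauchy--Schwarz inequality with respect to $\lambda$.
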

\begin{proof}[Proof sketch]
Due to the regularization and potential misspecification if $h >0$, each $v^\t A^{(\gamma)}(\lambda)^{-1} \phi(x_t) y_t$ is biased.
Thus, we apply the guarantee of $W^{(v)} = \widehat{\mu}( \{ v^\t A^{(\gamma)}(\lambda)^{-1} \phi(x_t) y_t \}_{t=1}^\tau )$ to the expectation of its arguments.
The triangle inequality followed by repeated applications of Cauchy-Schwartz yields
\begin{align*}
    |W^{(v)} - \langle v, \theta_* \rangle| 
    \leq& |W^{(v)} - \E[ v^\t A^{(\gamma)}(\lambda)^{-1} \phi(x_1) y_1 ]| \\
     &+|\E[ v^\t A^{(\gamma)}(\lambda)^{-1} \phi(x_1) y_1 ] - \langle v, \theta_* \rangle| \\
     \leq&\HighProbConst \sqrt{\frac{\nu^2 \log(1/\delta)}{\tau}} +\sqrt{\gamma} \|\theta_* \|_2 + h
\end{align*}
where we obtain an upper bound on the variance $\nu^2$ by
\begin{align*}
&\mathbb{V}\text{ar}( v^\t\! A^{(\gamma)}(\lambda)^{-1}\! \phi(x_1) y_1 ) \!\leq\! \E[ ( v^\t \!A^{(\gamma)}(\lambda)^{-1} \phi(x_1) y_1 )^2 ] \\
&\quad= \E\left[ \left( v^\t\! A\!^{(\gamma)}\!(\lambda)^{-1}\! \phi(x_1) \right)^2\mu_{x_1} ^2 \right] \\
&+\! \E\left[\! \left( v^\t\! A^{(\gamma)}\!(\lambda)^{-1} \phi(x_1)  \right)^2 \!\xi_1^2 \right]\!\leq\! (B^2 \!+\! \sigma^2) \|v\|_{A^{(\gamma)}\!(\lambda)^{-1}}^2.
\end{align*}
\end{proof}
\subsection{Practical implementation of the algorithms}
The construction of $\widehat{\theta}$ in the algorithms may--at first glance--look confusing in the infinite dimensional case. In actuality, the equivalent dual representation $\widehat{\theta} = \sum_{i=1}^{|\mc{X}|}\alpha_i \phi(x_i)$ would be used. That is, the potentially infinite dimensional object $\widehat{\theta}$ is represented by a finite dimensional weight vector $\alpha \in \R^{|\mc{X}|}$.
With that, the optimizations in the algorithms (e.g., to compute the {\ours} estimator) are over the dual vector $\alpha \in \R^{|\mc{X}|}$, and inner products $\langle \widehat{\theta}, v \rangle = \sum_{i=1}^{|\mc{X}|}\alpha_i \langle  \phi(x_i), v \rangle$ are computed using the kernel matrix of $\mc{X}$ since in all instances of $v$ used in the algorithms, $v$ is a linear combination of $\{\phi(x)\}_{x \in \mc{X}}$.

\subsection{Comparison to IPS estimator}
Note the difference between the bound of {\ours} in Theorem~\ref{thm:robust_estimator_lemma} with the bound of the IPS estimator stated in equation~\eqref{eqn:ips_estimator_bound}. Consider the setting of equation~\eqref{eqn:ips_estimator_bound}. Ignoring log factors and constants, the confidence bound of the IPS estimator essentially scales as $\sqrt{\frac{\sigma^2 \norm{v}_{A(\bar\lambda)^{-1}}^2 }{T}} + \frac{ \max_{x \in \X} |v^\t A(\bar\lambda)^{-1} x|}{T}$, while the confidence bound of RIPS essentially scales as $\sqrt{\frac{\sigma^2 \norm{v}_{A(\bar\lambda)^{-1}}^2 }{T}}$. 
It can be shown that in the instance in the experiment corresponding to figure~\ref{fig:ips}, the term $ \frac{ \max_{x \in \X} |v^\t A(\bar\lambda)^{-1} x|}{T} \approx \frac{d}{T}$ while $\sqrt{\frac{\sigma^2 \norm{v}_{A(\bar\lambda)^{-1}}^2 }{T}} \approx \frac{\sqrt{d}}{\sqrt{T}}$. Thus, the first term dominates by a polynomial factor in the dimension until $T \geq d$, and the experiment shows that indeed the IPS estimator has larger deviations than RIPS, as suggested by the above upper bounds.

\subsection{Experimental Design optimization in an RKHS}\label{sec:computing_with_kernel_evaluations}
We now discuss how to actually compute an allocation in a potentially infinite dimensional RKHS $\mc{H}$. The following lemma will be helpful and is proved in the appendix.
\begin{lemma}\label{lmm:compute_kernel_with_alloc}
If $ A_\lambda \!= \!\sum_{x\in\mc{X}}\lambda_x \phi(x)\phi(x)^\top$ then for $a, b \in \mc{H}$
\begin{align*}
    a^\top \!(A_\lambda\! +\! \gamma I)^{-1} b 
    \!=\! \frac{1}{\gamma} a^\top b - \frac{1}{\gamma}k_\lambda(a)^\top (K_\lambda \!+\! \gamma I_{|\mc{X}|})^{-1} k_\lambda(b)
\end{align*}
with $k_\lambda(\cdot) \in \R^{|\mc{X}|}$ so that for any $c\in\mc{H}$, $[k_\lambda(c)]_i =  \sqrt{\lambda_i} \phi(x_i)^\top c$, and  $K_\lambda \in \R^{|\mc{X}| \times |\mc{X}|}$ so that
\begin{align*}
    \left[ K_\lambda \right]_{i, j} = \sqrt{\lambda_i} \sqrt{\lambda_j} \phi(x_j)^\top\phi(x_j) =: \sqrt{\lambda_i} \sqrt{\lambda_j}  k(x_i,y_i).
\end{align*}
For $x \in \mc{X}$, $[k_\lambda(x)]_i = \sqrt{\lambda_i} \phi(x_i)^\top \phi(x) = \sqrt{\lambda_i} k(x_i,x)$.
\end{lemma}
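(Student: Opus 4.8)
The plan is to prove the identity of Lemma~\ref{lmm:compute_kernel_with_alloc} by verifying it algebraically rather than by any spectral or limiting argument, so that it holds verbatim even when $\mc{H}$ is infinite-dimensional. First I would introduce the feature operator $\Phi \in \mc{H}^{|\mc{X}|}$ whose $i$-th ``column'' is $\sqrt{\lambda_i}\,\phi(x_i)$, viewed as a linear map $\R^{|\mc{X}|} \to \mc{H}$. With this notation $A_\lambda = \Phi \Phi^\top$ (a self-adjoint operator on $\mc{H}$), while $K_\lambda = \Phi^\top \Phi$ (the Gram matrix in $\R^{|\mc{X}| \times |\mc{X}|}$), and the vectors defined in the statement are simply $k_\lambda(a) = \Phi^\top a$ and $k_\lambda(b) = \Phi^\top b$. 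The claimed identity then reads
\begin{align*}
    a^\top (\Phi\Phi^\top + \gamma I)^{-1} b = \tfrac{1}{\gamma} a^\top b - \tfrac{1}{\gamma}(\Phi^\top a)^\top (\Phi^\top\Phi + \gamma I_{|\mc{X}|})^{-1}(\Phi^\top b),
\end{align*}
which is a bilinear statement in $a,b$, so it suffices to prove the operator identity $(\Phi\Phi^\top + \gamma I)^{-1} = \tfrac{1}{\gamma}I - \tfrac{1}{\gamma}\Phi(\Phi^\top\Phi + \gamma I_{|\mc{X}|})^{-1}\Phi^\top$.

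The central step is the push-through (Woodbury) identity $\Phi(\Phi^\top\Phi + \gamma I_{|\mc{X}|})^{-1} = (\Phi\Phi^\top + \gamma I)^{-1}\Phi$, which I would establish directly by the elementary manipulation $(\Phi\Phi^\top + \gamma I)\Phi = \Phi(\Phi^\top\Phi + \gamma I_{|\mc{X}|})$ and left/right multiplication by the respective inverses; both inverses exist because $\gamma>0$ makes each operator strictly positive. Substituting this into the right-hand side of the proposed operator identity gives $\tfrac{1}{\gamma}I - \tfrac{1}{\gamma}(\Phi\Phi^\top + \gamma I)^{-1}\Phi\Phi^\top$, and writing $\Phi\Phi^\top = (\Phi\Phi^\top + \gamma I) - \gamma I$ collapses this telescopically to $(\Phi\Phi^\top + \gamma I)^{-1}$, as desired. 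Pairing the resulting operator identity against $a$ on the left and $b$ on the right recovers exactly the claimed scalar formula.

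The main obstacle is not the algebra, which is routine, but justifying that $(\Phi\Phi^\top + \gamma I)^{-1}$ is a well-defined bounded operator on the possibly infinite-dimensional $\mc{H}$. I would address this by noting that $\Phi$ has finite rank at most $|\mc{X}|$, so $\Phi\Phi^\top$ is a bounded, self-adjoint, positive semidefinite finite-rank operator; adding $\gamma I$ with $\gamma>0$ yields an operator bounded below by $\gamma$, hence boundedly invertible with $\|(\Phi\Phi^\top+\gamma I)^{-1}\| \leq 1/\gamma$. The push-through identity itself only involves compositions of bounded operators and the two inverses, so every expression is well-defined; crucially, the matrix $K_\lambda + \gamma I_{|\mc{X}|}$ is a genuine finite $|\mc{X}|\times|\mc{X}|$ matrix, which is what makes the right-hand side computable from kernel evaluations alone. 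Finally I would remark that the entries of $K_\lambda$ and of $k_\lambda(x)$ depend on $a,b$ (or $x$) only through inner products $\phi(x_i)^\top c$, confirming that the entire formula is expressible via the kernel $k(x,x')=\langle\phi(x),\phi(x')\rangle_{\mc{H}}$.
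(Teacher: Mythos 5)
Your proof is correct and follows essentially the same route as the paper's: both hinge on the push-through identity $(\Phi\Phi^\top + \gamma I)\Phi = \Phi(\Phi^\top\Phi + \gamma I_{|\mc{X}|})$ followed by the algebraic rearrangement writing $\Phi\Phi^\top = (\Phi\Phi^\top + \gamma I) - \gamma I$, differing only in the (transposed) convention for $\Phi$ and in that you state the result as an operator identity while the paper expands the vector $a$ directly. Your added justification that $\gamma I$ plus a finite-rank positive operator is boundedly invertible in infinite dimensions is a welcome detail the paper leaves implicit.
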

If we call $f(\lambda)$ the argument of Equation~\ref{eqn:RKHS_exp_design_in_RIPS_alg} in Figure~\ref{alg:RIPS_exp_design_for_RKHS}, and 
$\bar{v} \!\in\! \arg\max_{v\in \mc{V}} v^\top \left(\sum_{x\in\mc{X}}\lambda_x \phi(x)\phi(x)^\top + \gamma I\right)^{-1} v$ then the computation of the gradient of $\lambda \mapsto f(\lambda)$ equals
\begin{align*}
    &[\nabla_\lambda f(\lambda)]_i = - \big(\bar{v}^\top (\sum_{x\in\mc{X}}\lambda_x \phi(x)\phi(x)^\top + \gamma I)^{-1} \phi(x_i)\big)^2.
\end{align*}
Importantly, in this work $\mc{V}$ will always be a linear combination of $\{\phi(x)\}_{x \in \mc{X}}$ (e.g. $\mc{V} = \mc{X} - \mc{X}$), thus the last quantity can be computed only using kernel evaluations thanks to Lemma~\ref{lmm:compute_kernel_with_alloc}. We use first order optimization methods to minimize $\lambda \mapsto f(\lambda)$ since it is convex.

\subsection{Project-Then-Round (PTR) for RKHS designs}\label{sec:RKHS_rounding}

\begin{figure*}[t]
     \centering

    \hfill
     \begin{minipage}[b]{0.32\textwidth}
         \centering
          \includegraphics[width=\linewidth]{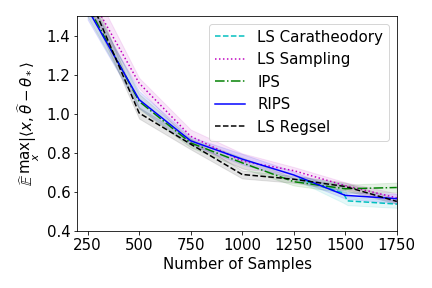}
          \vspace*{-10mm}
              \caption{{
              G-Optimal Experiment}}\label{fig:g_optimal}
     \end{minipage}
    \hfill
               \begin{minipage}[b]{0.32\textwidth}
         \centering
          \includegraphics[width=\linewidth]{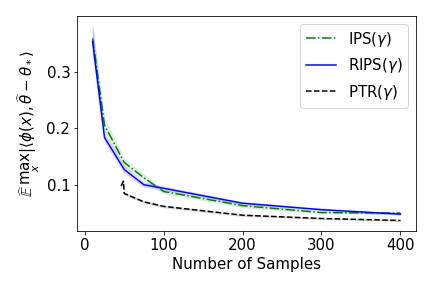}
          \vspace*{-10mm}
               \caption{Kernel Experiment}\label{fig:kernel}
     \end{minipage}
     \hfill
     \begin{minipage}[b]{0.32\textwidth}
         \centering
           \includegraphics[width=\linewidth]{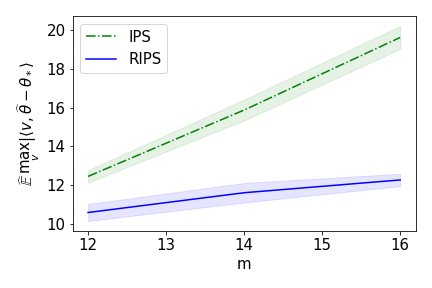}
           \vspace*{-10mm}
           \caption{RIPS vs IPS Experiment 
           }\label{fig:ips}
     \end{minipage}
\end{figure*}

To the best of our knowledge, the {\ours} procedure of  Figure~\ref{alg:RIPS_exp_design_for_RKHS} is novel and should be benchmarked. To design a baseline, we take inspiration from previous works on experimental design in an RKHS.
For instance, \cite{alaoui2015fast} employ a sampling distribution related to statistical leverage scores to construct a sketch of the kernel matrix using a Nystrom approximation. 
The objective in that problem is closest to $V$-optimal design which aims to minimize the sum-squared error $\sum_{x \in \mc{X}} \E[ \langle x, \widehat{\theta} - \theta_* \rangle^2 ]$ (note, our work is concerned with $G$-optimal-like objectives, or worst-case error over $\mc{X}$).
The Nystrom approximation to the kernel matrix effectively projects the problem to a low dimensional sub-space where finite-dimensional rounding techniques like those reviewed in Section~\ref{sct:rounding} can be applied. 
\cite{bach2015equivalence} also relies on a sampling distribution to approximate integrals using kernels with an objective similar to $V$ optimal.

We describe in Algorithm~\ref{alg:PTR} the baseline procedure we call Project-Then-Round (PTR), that employs the finite rounding technique of \cite{allen2017near} described in Section~\ref{sct:rounding}. 

\begin{algorithm}[H] 
  \caption{PTR for Experimental Designs in an RKHS}
  \label{alg:PTR}
\begin{algorithmic}
\State {\bfseries Input:} Finite sets $\mc{X} \subset \R^d$ and $\mc{V} \subset \mc{H}$, feature map $\phi: \R^d \rightarrow \mc{H}$, regularization $\gamma>0$
\State Fix any $\lambda \in \triangle_{\mc{X}}$. 
\State Compute $[K]_{i,j} = k(x_i,x_j) = \langle \phi(x_i), \phi(x_j) \rangle_{\mc{H}}$ the kernel matrix of the set of points $\mc{X} = \{x_1, \ldots, x_n\}$.
\State Consider a decomposition $K = \widehat{\Phi}\widehat{\Phi}^\top$ with $\widehat{\Phi} \in \R^{n\times n}$ such that the rows of $\widehat{\Phi}$ called  $\widehat{\phi}(x_i)\in \R^n$ are used to compute $\widehat{A}(\lambda) = \sum_{i=1}^{n}\lambda_i \widehat{\phi}(x_i) \widehat{\phi}(x_i)^\top$.
\State Diagonalize $\widehat{A}(\lambda)$ as $\widehat{A}(\lambda) = VDV^\top$ with $D$ diagonal matrix with coefficients $(d_1 \geq d_2 \geq \ldots \geq d_n)$.
\State Define the \emph{effective dimension} as
\begin{align*}
    \widetilde{d}(\lambda, \gamma) = \max\{i \in [n]: d_i \geq \gamma \}.
\end{align*}
\State Choose $k = \widetilde{d}(\gamma, \lambda)\in[n]$ and denote $V_k$ as the top $k$ eigenvectors of $\widehat{A}(\lambda)$.
\State Compute the projections $V_k^\top \widehat{\phi}(x_1), \ldots, V_k^\top \widehat{\phi}(x_n) \in \R^k$ \State Use the rounding procedure of \cite{allen2017near} to obtain the desired sparse allocation $\{\widetilde{x}_i\}_{i=1}^\tau$.
\State {\bfseries Return:} $\{\widetilde{x}_i\}_{i=1}^\tau$
\end{algorithmic}
\end{algorithm}

This procedure enjoys the following guarantees.

\begin{theorem}\label{thm:rounding}
Consider the procedure of Algorithm~\ref{alg:PTR}.
If the number of measurements $\tau$ satisfies $\tau = \Omega(\widetilde{d}(\gamma, \lambda))$, then 
\begin{align*}
    &\max_{v \in \mathcal{V}} \|v\|_{\left(\sum_{i=1}^{\tau}\phi(\widetilde{x}_i)\phi(\widetilde{x}_i)^\top + \tau\gamma I\right)^{-1}}^2\\ 
    &\leq \max(2, 1+\epsilon) \max_{v \in \mathcal{V}} \|v\|_{\left(\sum_{x\in \mathcal{X}}\tau\lambda_x \phi(x)\phi(x)^\top + \tau\gamma I\right)^{-1}}^2.
\end{align*}
where $\widetilde{d}(\gamma, \lambda)$ is defined in the algorithm.
\end{theorem}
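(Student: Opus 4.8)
The plan is to reduce the claim to a single finite–dimensional operator inequality, then split the spectrum of the continuous design $\widehat A(\lambda)$ into a ``head'' where the rounding of \cite{allen2017near} is applied and a ``tail'' where the ridge term $\gamma I$ takes over. First I would pass from $\mc{H}$ to $\R^n$. Every $v\in\mc{V}$ and every measurement vector used lies in $\mathrm{span}\{\phi(x)\}_{x\in\mc{X}}$, and the rows $\widehat\phi(x_i)$ of $\widehat\Phi$ reproduce the Gram matrix $K$, so $\phi(x_i)\mapsto\widehat\phi(x_i)$ is an isometry on that span. The computation behind Lemma~\ref{lmm:compute_kernel_with_alloc} then gives, for any design $M=\sum_j w_j\phi(z_j)\phi(z_j)^\top$ with $z_j\in\mc{X}$ and any $v$ in the span, the identity $v^\top(M+\gamma I)^{-1}v=\widehat v^\top(\widehat M+\gamma I_n)^{-1}\widehat v$, where $\widehat M=\sum_j w_j\widehat\phi(z_j)\widehat\phi(z_j)^\top$ and $\widehat v$ is the image of $v$. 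Applying this to $M=\sum_x\tau\lambda_x\phi(x)\phi(x)^\top$ and to the rounded design lets me replace both sides of the theorem by their $\R^n$ analogues, written with $\widehat A(\lambda)$ and $\widehat A_\tau:=\sum_{i=1}^\tau\widehat\phi(\widetilde x_i)\widehat\phi(\widetilde x_i)^\top$.

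It then suffices to prove the operator inequality
\[
\widehat A_\tau+\tau\gamma I\;\succeq\;\tfrac{1}{\max(2,1+\epsilon)}\big(\tau\,\widehat A(\lambda)+\tau\gamma I\big),
\]
since inverting reverses the order and sandwiching by $\widehat v$ yields the bound uniformly over $v\in\mc{V}$. I would work in the eigenbasis $V$ of $\widehat A(\lambda)=VDV^\top$, splitting $\R^n=S_k\oplus S_k^\perp$ into the top-$k$ eigenspace $S_k=\mathrm{range}(V_k)$, where $k=\widetilde d(\lambda,\gamma)$ and the eigenvalues satisfy $d_j\geq\gamma$, and its complement $S_k^\perp$, where $d_j<\gamma$.

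The two subspaces are then controlled by different mechanisms. On $S_k^\perp$ every eigenvalue of $\widehat A(\lambda)+\gamma I$ is strictly below $2\gamma$, so $\tfrac{1}{\max(2,1+\epsilon)}(\tau\widehat A(\lambda)+\tau\gamma I)\preceq\tau\gamma I$ there, and this is dominated by $\widehat A_\tau+\tau\gamma I\succeq\tau\gamma I$ using only $\widehat A_\tau\succeq 0$; this is the origin of the factor $2$. On $S_k$ I would invoke \cite{allen2017near}: applied to the projected points $V_k^\top\widehat\phi(x_i)\in\R^k$ with $\tau=\Omega(k)=\Omega(\widetilde d(\lambda,\gamma))$ samples, the rounding returns $\{\widetilde x_i\}$ with $V_k^\top\widehat A_\tau V_k\succeq(1-\epsilon)\tau D_k$, a relative spectral approximation of the head block $V_k^\top\widehat A(\lambda)V_k=D_k$; combined with $d_j/(d_j+\gamma)\geq\tfrac12$ for $j\leq k$ this supplies the $(1+\epsilon)$ factor on $S_k$.

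The \emph{main obstacle} is that $\widehat A_\tau$ is not block-diagonal in the $V$-basis, so the head and tail bounds cannot simply be concatenated: the cross block $V_k^\top\widehat A_\tau V_k^\perp$ and the tail block $V_k^{\perp\top}\widehat A_\tau V_k^\perp$ are not seen by the projected rounding, and through the block-inverse (Schur-complement) formula an uncontrolled cross block can inflate the head-restricted inverse and degrade the constant. I expect to resolve this by using the strictly positive slack $\tau\gamma I-\tfrac{1}{\max(2,1+\epsilon)}(\tau D_{\mathrm{tail}}+\tau\gamma I)\succ 0$ that the regularization leaves on $S_k^\perp$, together with the fact that the rounding does not overweight any point, so that $\widehat A_\tau\preceq O(1)\,\tau\widehat A(\lambda)$ globally and hence the tail and cross blocks of $\widehat A_\tau$ stay within the $\gamma$-budget. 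Carrying out this per-direction accounting—verifying that for $\tau=\Omega(\widetilde d(\lambda,\gamma))$ the regularization slack always absorbs the coupling so the combined inequality holds with constant exactly $\max(2,1+\epsilon)$—is the technical heart of the argument.
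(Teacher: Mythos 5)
Your route is the paper's route: factor the Gram matrix to pass to the $n$-dimensional representation $\widehat\phi$, eigendecompose the continuous design $\widehat A(\lambda)=VDV^\top$, split at $k=\widetilde d(\lambda,\gamma)$, apply the rounding of \cite{allen2017near} to the projections $V_k^\top\widehat\phi(x)$ to get the $(1+\epsilon)$ factor on the head, and let the ridge $\tau\gamma I$ absorb the tail at the cost of the factor $2$. Up to your preference for phrasing the conclusion as a Loewner inequality between the regularized designs (equivalent, by antitonicity of the matrix inverse, to the per-direction bound the paper proves for every $z$), the skeleton is identical to the appendix proof.

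The gap is exactly where you place it, and your proposal does not close it. The rounding of \cite{allen2017near} certifies only the projected head block $V_k^\top\widehat A_\tau V_k$; it says nothing about the cross block $V_k^\top\widehat A_\tau V_{-k}$ or the tail block, and the global domination $\widehat A_\tau\preceq O(1)\,\tau\widehat A(\lambda)$ that your sketch leans on can fail by an arbitrarily large factor --- for instance when the rounded allocation puts a sample on a point $x$ with tiny $\lambda_x$ whose component in $\mathrm{range}(V_{-k})$ exceeds $\tau\gamma$ while $\widehat A(\lambda)$ is nearly degenerate there. Since, through the Schur complement, a large cross block inflates the head-restricted quadratic form of the inverse, the ``regularization slack'' accounting you defer is not a routine verification, and as written the proposal does not deliver the constant $\max(2,1+\epsilon)$. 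You should also be aware that the paper's own proof does not confront this point: its first display for the rounded design asserts that $z^\top\bigl(\sum_i\widetilde x_i\widetilde x_i^\top+\tau\gamma I\bigr)^{-1}z$ \emph{equals} the sum of the two block-restricted quadratic forms, an identity that presumes $\sum_i\widetilde x_i\widetilde x_i^\top$ is block diagonal in the $V$-basis. So you have correctly isolated the one delicate step of this argument; to make either write-up airtight one must bound the cross block explicitly, or modify the procedure (e.g., project the rounded measurements onto $\mathrm{range}(V_k)$ before forming the design, which makes the block decomposition exact and only increases the inverse on the tail, where the factor $2$ already absorbs it).
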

We refer the reader to the appendix for the proof of Theorem~\ref{thm:rounding}. This procedure performs rounding in a finite dimensional subspace which is a projection of the initial feature space of potentially infinite dimension.
With Theorem~\ref{thm:rounding} one can obtain a guarantee similar to that of Theorem~\ref{thm:robust_estimator_lemma} up to a constant whenever $\tau=\Omega(\widetilde{d}(\lambda, \gamma))$.
Though this effective dimension is rarely the dominating factor in analyses, it is cumbersome to keep around and bound.

\subsection{Empirical evaluation of allocation methods}

We briefly describe illustrative experiments (see the supplementary material for more details).

\textbf{G-optimal design experiment:} We generate $x_1,\ldots, x_n $ by sampling $\tilde{x}_i \sim N(0,\Sigma)$ with $\Sigma_{i,i} = 1 $ if $i \leq d-10$, $\Sigma_{i,i} = .1 $ if $i > d-10$ and all other entries of $\Sigma$ set to $0$. Then, we set $x_i = \tfrac{\tilde{x}_i}{\norm{\tilde{x}_i}}$. We use $\theta_* = \frac{1}{\sqrt{d}}\mathbf{1}$. We set $d = 50$ and $n = \tfrac{d(d+1)}{2}$. We use mirror descent to solve the G-optimal design problem. We compare {\ours} with IPS, Caratheodory's algorithm with the ceiling rounding technique (LS Caratheodory), the rounding technique in \cite{allen2017near} (LS Regsel), and the random sampling approach taken in \cite{rizk2020refined} (LS Sampling). Figure \ref{fig:g_optimal} depicts the results, and shows that {\ours} performs comparably to these other approaches. It also illustrates the shortcomings of the Caratheodory rounding algorithm, which does not return an estimate for $T \leq 1275$, while the other algorithms have already learned nontrivial estimates of $\theta_*$ for much smaller values of $T$.

\textbf{G-optimal design in an RKHS:} We let $\X = \{0,(\tfrac{1}{m})^2, \ldots, (\tfrac{m-1}{m})^2, 1\}$ with $m=500$ and use the RBF kernel $K(x,x^\prime) = \exp(-\frac{\norm{x-x^\prime}^2}{2 \varphi^2})$ with bandwidth parameter $\varphi = 0.025$. Due to this being an infinite dimensional kernel, the ambient dimension for $m$ points is equal to $m$. 
We focus on the regime $T < m$ where standard rounding schemes do not apply and compare PTR with regularization $\gamma$, $\widehat{\theta}_{\ours}(\gamma)$, and $\widehat{\theta}_{IPS(\gamma)} := A^{(\gamma)}(\lambda)^{-1} ( \frac{1}{T} \sum_{t=1}^T x_t y_t )$ where we set $\gamma = 0.005$. Figure \ref{fig:kernel} depicts the results, showing that PTR($\gamma$) does slightly better than IPS($\gamma$) and RIPS($\gamma$), and that all three algorithms have learned non-trivial estimates of $\theta_*$ using hundreds of samples fewer than standard rounding algorithms require to even output an estimate.

\textbf{RIPS vs. IPS:} While IPS has similar performance to RIPS in the two previous experiments, RIPS performs dramatically better in some settings. Let $m \in \mathbb{N}$ and $d = m^2 +m$. Inspired by combinatorial bandits, we consider a setting where the measurement vectors $\X = \{e_1,\ldots, e_d\}$ consist of the standard basis vectors, $\theta_*= -\mathbf{1}$, and the performance metric for an estimator $\widehat{\theta}$ is $\E\sup_{i \in [m^2]}| v_i^\top (\widehat{\theta}-\theta_*)|$ where $v_i = \sum_{j=1}^m e_j + e_{i+m}$. We compare the performance of IPS against {\ours} for $m \in \{12,14,16\}$ and estimate the expected maximum deviation at $T=4m$. Figure \ref{fig:ips} shows that as $m$ grows, the performance of IPS degrades relative to RIPS, reflecting that IPS has large deviations in comparison to our proposed estimator RIPS.

\section{Algorithms for Kernelized Bandits}\label{section:bandits_algo}
We now leverage our proposed {\ours} estimator of Algorithm~\ref{alg:RIPS_exp_design_for_RKHS} for the kernel bandits problem in an elimination style algorithm as introduced in Section~\ref{sec:elim_algorithm_motivation}. In this section we provide different algorithms to solve the regret minimization and pure exploration problems. This section illustrates the benefits of using our {\ours} estimator. In particular, the estimator enables us to design a regret minimization algorithm that trivially supports batching while enjoying state of the art performance in the well-specified setting. In addition, this same algorithm is robust to model misspecification, suffering only linear regret with respect to that approximation error without any prior knowledge on this error (guarantees that, to the best of our knowledge, our novel). Last but not least, applying our {\ours} estimator to pure exploration tasks leads to the first best arm identification provably robust to misspecification.

\subsection{RIPS for Regret minimization}
As introduced in Section~\ref{sct:setting}, our objective is to develop an algorithm that minimizes regret under the general stochastic and misspecified setting (Assumptions 1-3). 
Specifically, when pulling arm $x \in \mc{X}$ at time $t$ we observe a random variable $\mu_x + \xi_t$ where $\xi_t$ is independent, mean-zero noise with variance $\sigma^2$.
We assume there exists a $\theta_* \in \mc{H}$ and known feature map $\phi: \mc{X} \rightarrow \mc{H}$ such that $\max_{x \in \mc{X}} |\mu_x- \langle \theta_*, \phi(x) \rangle| \leq h$ where $h \geq 0$ is unknown to the learner. 
That is, $\mu_x$ is well-approximated by the linear function $\langle \theta_*, \phi(x) \rangle$ but may deviate from it by an amount $h \geq 0$. 
Because of model misspecification in the case when $h >0$, we should not hope to obtain sub-linear regret if we seek a regret bound that grows only logarithmically in $|\mc{X}|$ and polynomial in $d$. 

Algorithm~\ref{alg:regret_robust} is a phased elimination strategy where at each round a (regularized) G-optimal design is performed to minimize the variances of the estimates of all the arms and then arms are discarded if their sub-optimality gap is deemed too large  (under the assumed linear model). 
Due to model misspecification, we should only expect this approach to work until hitting a kind of noise floor defined by the level of misspecification $h$, as suggested from the guarantee from Theorem~\ref{thm:robust_estimator_lemma}. 
The algorithm is a combination of our {\ours} estimator for the RKHS setting and the robust algorithm of \cite{lattimore2020learning}.

\begin{algorithm}[tb]
  \caption{RIPS for Regret Minimization}
  \label{alg:regret_robust}
\begin{algorithmic}
\State {\bfseries Input:} Finite sets $\mc{X} \subset \R^d$ ($|\mc{X}| = n$), feature map $\phi$, confidence level $\delta \in (0, 1)$, regularization $\gamma$, sub-Gaussian parameter $\sigma$, bound on maximum reward $B$. \\
\State Set $\mc{X}_1 \gets \mc{X}, \ell \gets 1$\\
\While{$|\mc{X}_\ell|>1$}
    \State Let $\displaystyle{\lambda}_\ell \in \triangle_{\mc{X}}$ be a minimizer of $f(\lambda; \mc{X}_\ell, \gamma)$ where
        \begin{align*}
            &\displaystyle f(\mc{V}, \gamma) = \inf_{\lambda \in \triangle_{\mc{V}}} f(\lambda; \mc{V}, \gamma) \\
            &= \inf_{\lambda \in \triangle_{\mc{V}}} \max_{y \in \mc{V}}\|\phi(y)\|_{(\sum_{y\in \mc{V}} \lambda_y \phi(y)\phi(y)^{\top} + \gamma I)^{-1}}^2
        \end{align*}
    \State Set $\epsilon_\ell \gets 2^{-\ell}$, $q^{(1)}_\ell\gets c_1\log(|\mc{X}|/\delta)$\\
    \State Set $q^{(2)}_\ell\gets 
    \HighProbConst^2 (B^2 + \sigma^2)\epsilon_\ell^{-2}f(\mc{X}_\ell, \gamma)\log(4\ell^2|\mc{X}|/\delta)$\\
    \State Set $\tau_\ell \gets \left\lceil\max\left\{q^{(1)}_\ell , q^{(2)}_\ell\right\}\right\rceil$\\ 
    \State Use Algorithm~\ref{alg:RIPS_exp_design_for_RKHS} with sets $\mc{X}_\ell$, $\mc{V}_\ell = \phi \circ \mc{X}_\ell$, sampling $\tau_\ell$ measurements $x_1, \ldots, x_{\tau_\ell}$ to get $\{W^{(v)}\}_{v\in\mc{V}_\ell}$.
    \State Set $\displaystyle\widehat{\theta}_\ell := \arg\min_{\theta} \max_{v \in \mc{V}_\ell} \frac{ | \langle \theta, v \rangle - W^{(v)}| }{\|v\|_{(\sum_{x \in \mc{X}} \lambda_{\ell,x} \phi(x) \phi(x)^\top + \gamma I)^{-1}}}$
    \State Update active set:
        \begin{align*}
        \displaystyle 
        \mc{X}_{\ell+1} = \Big\{x \in \mc{X}_\ell, &\max_{x' \in \mc{X}_\ell} \langle \phi(x') - \phi(x), \widehat{\theta}_{\ell} \rangle < 4\epsilon_\ell \Big\}
        \end{align*}
    \State $\ell \gets \ell + 1$ \\
\EndWhile
\State Play unique element of $\mc{X}_\ell$ indefinitely. 
\end{algorithmic}
\end{algorithm}

\begin{theorem}\label{thm:regret_medofmean}
With probability at least $1-\delta$, the regret of Algorithm~\ref{alg:regret_robust} satisfies
\begin{align}\label{regret_bound_robust}
    &\sum_{t=1}^T \mu_x - \mu_{x_t} \lesssim \ c_1 \log(|\mc{X}|/\delta)+ \sqrt{\max_{\mc{V}\subset \mc{X}}f(\mc{V},\gamma)}\times\\
    & \! \times \!\Big(T(h\!+\!\sqrt{\gamma} \|\theta_*\|)\! + \!\sqrt{c_0^2(\sigma^2\!+\!B^2)T\log(|\mc{X}|\log(T)/\delta)} \Big) \nonumber
\end{align}
where $\displaystyle f(\mc{V},\gamma) = \inf_{\lambda \in \triangle_\mc{V}} \sup_{y \in \mc{V}} \|\phi(y)\|^2_{( \sum_{x\in\mc{X}} \lambda_x x x^\top + \gamma I )^{-1}}$.
\end{theorem}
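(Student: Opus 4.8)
The plan is to feed the per-round estimation guarantee of Theorem~\ref{thm:robust_estimator_lemma} into a phased-elimination bookkeeping that has been modified to absorb an \emph{unknown} misspecification floor. Write $\bar f := \max_{\mc V \subseteq \mc X} f(\mc V,\gamma)$ and let $b := \sqrt{\bar f}\,(\sqrt\gamma\|\theta_*\| + h) + h$ be the effective bias floor. First I would set up the good event. Applying Theorem~\ref{thm:robust_estimator_lemma} in each round $\ell$ with $\mc V_\ell = \phi\circ\mc X_\ell$ and confidence parameter $\delta/(2\ell^2)$ (so that the sum over $\ell$ stays below $\delta$, which is exactly what produces the $4\ell^2$ inside the logarithm of $q^{(2)}_\ell$), and using that $\tau_\ell \geq q^{(1)}_\ell = c_1\log(|\mc X|/\delta)$ validates the hypothesis of the robust estimator, I obtain on the intersection of these events (probability $\geq 1-\delta$) that for every $\ell$ and every $x\in\mc X_\ell$, $|\langle\phi(x),\widehat\theta_\ell\rangle - \mu_x| \leq r_\ell$ with $r_\ell := \epsilon_\ell + \sqrt{f(\mc X_\ell,\gamma)}(\sqrt\gamma\|\theta_*\|+h)+h \leq \epsilon_\ell + b$. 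This uses three inputs: the design bound $\|\phi(x)\|^2_{A^{(\gamma)}(\lambda_\ell)^{-1}}\leq f(\mc X_\ell,\gamma)$, the calibration of $q^{(2)}_\ell$ which drives the stochastic term of Theorem~\ref{thm:robust_estimator_lemma} down to $\epsilon_\ell$, and the misspecification bound $|\langle\phi(x),\theta_*\rangle - \mu_x|\leq h$.

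Second is the crux: a robust survival/elimination argument. I would show that \emph{as long as} $\epsilon_\ell \geq b$, the optimal arm $x_*$ is retained. Indeed, since $\max_{x'\in\mc X_\ell}\mu_{x'}-\mu_{x_*}\leq 0$, the elimination statistic for $x_*$ obeys $\max_{x'\in\mc X_\ell}\langle\phi(x')-\phi(x_*),\widehat\theta_\ell\rangle \leq 2r_\ell \leq 2\epsilon_\ell + 2b \leq 4\epsilon_\ell$, so $x_*$ meets the retention threshold $4\epsilon_\ell$; by induction $x_* \in \mc X_\ell$ for every $\ell$ up to the largest index $\ell^*$ with $\epsilon_{\ell^*}\geq b$. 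For those same rounds any survivor $x\in\mc X_{\ell+1}$ satisfies $\langle\phi(x_*)-\phi(x),\widehat\theta_\ell\rangle < 4\epsilon_\ell$, hence $\Delta_x = \mu_{x_*}-\mu_x \leq 4\epsilon_\ell + 2r_\ell = O(\epsilon_\ell)$, giving $\max_{x\in\mc X_{\ell+1}}\Delta_x \lesssim \epsilon_\ell$. Monotonicity $\mc X_{\ell+1}\subseteq\mc X_\ell$ then \emph{freezes} this guarantee: for every later round $\ell > \ell^*$ the active set is contained in $\mc X_{\ell^*+1}$, so each of its arms has gap $\lesssim \epsilon_{\ell^*}\lesssim b$. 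This is precisely the mechanism that makes the algorithm robust to an unknown $h$: it never uses $h$ in its threshold, yet $x_*$ is preserved until the stochastic resolution $\epsilon_\ell$ reaches the bias floor, after which every surviving arm is already $O(b)$-optimal.

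Third, I would sum the regret $R_T \leq \sum_\ell \tau_\ell\max_{x\in\mc X_\ell}\Delta_x$, splitting at $\ell^*$. For $\ell > \ell^*$ the per-pull regret is $\lesssim b$ and the pulls total at most $T$, contributing $\lesssim b\,T = \sqrt{\bar f}(h+\sqrt\gamma\|\theta_*\|)T + hT$, the linear term. For $\ell \leq \ell^*$ I use $\max_x\Delta_x\lesssim\epsilon_{\ell-1}$ and $\tau_\ell \leq q^{(1)}_\ell + q^{(2)}_\ell$: the $q^{(2)}$ contribution telescopes as $\sum_\ell \epsilon_\ell^{-2}\bar f\log\cdot\epsilon_\ell \lesssim \bar f\log\cdot\epsilon_L^{-1}$, dominated by the last such phase $L$, and converting through $T \gtrsim \tau_L \gtrsim \epsilon_L^{-2}\bar f\log$ gives $\epsilon_L^{-1}\lesssim\sqrt{T/(\bar f\log)}$ and hence the term $\sqrt{\bar f(B^2+\sigma^2)T\log(|\mc X|\log T/\delta)}$; the $q^{(1)}$ contribution is $c_1\log(|\mc X|/\delta)\sum_\ell\max_x\Delta_x$, which, since the gaps decay geometrically, is $O(c_1\log(|\mc X|/\delta))$. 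Adding the three pieces yields the stated bound.

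I expect the main obstacle to be the second step. The structure (survival of $x_*$ while $\epsilon_\ell\gtrsim b$, then a frozen floor) is robust, but pinning the constants so that $r_\ell$ sits below the fixed threshold $4\epsilon_\ell$ exactly in the regime $\epsilon_\ell\gtrsim b$ is delicate: the factor-$2$ loss incurred when passing from $W^{(v)}$ to $\langle\widehat\theta_\ell,v\rangle$ in Theorem~\ref{thm:robust_estimator_lemma}, the doubling from bounding a difference $\phi(x')-\phi(x)$ by two single-arm bounds, and the constant in $q^{(2)}_\ell$ must all be tracked so that the single-arm resolution collapses to $\epsilon_\ell$ and the surplus factor in the $4\epsilon_\ell$ threshold (versus $2\epsilon_\ell$) absorbs the difference doubling. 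The remaining bookkeeping — the union bound, the geometric sums, and the $\epsilon_L^{-1}\to\sqrt T$ conversion — is routine.
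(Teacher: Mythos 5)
Your proposal is correct and follows essentially the same route as the paper's proof: a per-round application of Theorem~\ref{thm:robust_estimator_lemma} union-bounded over rounds to build the good event, a survival argument showing $x_*$ is retained while $\epsilon_\ell$ exceeds the bias floor $(\sqrt{\gamma}\|\theta_*\|+h)\sqrt{\bar f}$ after which the gap guarantee freezes by monotonicity of the active sets, and a regret summation split at that floor. The only cosmetic differences are that you convert $\epsilon_L^{-1}$ to $\sqrt{T}$ directly via $\tau_L \leq T$ where the paper introduces a free threshold $\nu$ and optimizes it, and that the paper's union bound handles the randomness of $\mc{X}_\ell$ by explicitly conditioning on $\{\mc{X}_\ell = \mc{V}\}$ and summing over subsets $\mc{V} \subseteq \mc{X}$ — a technicality your sketch glosses over but would accommodate without change.
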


Choosing $\gamma = 1/T$, $\delta=1/T$ yields an expected regret of
\begin{align*}
\E\!\big[\!\sum_{t=1}^T \mu_{x_*} \!-\! \mu_{x_t}\!\big] \!&\leq \!c'\! \sqrt{\!\max_{\mc{V}\subset \mc{X}}f(\mc{V},\!\tfrac{1}{T})} \big( h T\! \!+\! \!\sqrt{\log(  |\mc{X}|T) T} \big)
\end{align*}
where $c' =O( \sqrt{\|\theta_*\|^2 +\sigma^2+B^2})$. Note that the $hT$ term due to model misspecification is comparable to the one in \cite{lattimore2020learning}.
Prior works such as \cite{srinivas2009gaussian,valko2013finitetime} have demonstrated expected regret bounds in the well-specified ($h=0$) setting that scale like $\sqrt{ \gamma_T T \log(|\mc{X}|)}$ where 
\begin{align}\label{eqn:gamma_T}
    \gamma_T := \max_{\lambda \in \triangle_{\mc{X}}} \log\det( T A^{(0)}(\lambda) + \gamma).
\end{align}
where $A^{(0)}(\lambda)$ is defined as in \eqref{eq:A_lambda}. The following lemma shows that our own regret bound is never worse than these results.
\begin{lemma}\label{lmm:variance_vs_information_gain}
Let $\gamma_T$ be defined as in \eqref{eqn:gamma_T}. Then
\begin{align*}
    \max_{\mc{V}\subset \mc{X}}\!f(\mc{V},\!\tfrac{1}{T}) \! &= \!\max_{\mc{V}\subset \mc{X}}\!\inf_{\lambda\in\triangle_\mc{V}}\!\sup_{y\in\mc{V}}\!\|\phi(y)\|^2_{A^{(1/T)}(\lambda)^{-1}} \!\leq\! \frac{3}{2}\gamma_T\;.
\end{align*}
\end{lemma}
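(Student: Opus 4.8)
The plan is to prove the bound for each fixed subset $\mc{V}\subseteq\mc{X}$ separately and then take the maximum over $\mc{V}$. Since every quantity only involves $\phi$ evaluated on $\mc{X}$, I would first reduce to finite dimensions: let $H = \mathrm{span}\{\phi(x): x\in\mc{X}\}$, which has dimension $r\le|\mc{X}|$, and note that $A^{(0)}(\lambda)$ vanishes on $H^\perp$ while $\tfrac{1}{T}I$ acts as a scalar there. Hence both $\|\phi(y)\|^2_{A^{(1/T)}(\lambda)^{-1}}$ (for $\phi(y)\in H$) and $\log\det(I + T A^{(0)}(\lambda)) = \log\det(I_H + T A^{(0)}(\lambda)|_H)$ reduce to computations inside $H$, so the infinite dimensionality of $\mc{H}$ is not a genuine obstacle.

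The crux is a Kiefer--Wolfowitz-style equivalence. Fix $\mc{V}$ and set $g(\lambda) := \log\det(I + T A^{(0)}(\lambda))$, so that $\gamma_T = \max_{\lambda\in\triangle_\mc{X}} g(\lambda)$. The key identity I would establish is that the gradient of the information-gain objective is exactly the variance: using $I + T A^{(0)}(\lambda) = T A^{(1/T)}(\lambda)$ together with $\partial_{\lambda_y}\log\det(M(\lambda)) = \tr(M^{-1}\partial_{\lambda_y}M)$, one gets $\partial_{\lambda_y} g(\lambda) = \|\phi(y)\|^2_{A^{(1/T)}(\lambda)^{-1}}$. Since $g$ is concave and smooth (its argument is always $\succeq I$) on the compact simplex $\triangle_\mc{V}$, a maximizer $\lambda^*$ exists, and the first-order optimality (KKT) conditions over the simplex give a constant $c$ with $\partial_{\lambda_y} g(\lambda^*) = c$ for every $y$ in the support of $\lambda^*$ and $\partial_{\lambda_y} g(\lambda^*)\le c$ otherwise. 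Consequently $\sup_{y\in\mc{V}}\|\phi(y)\|^2_{A^{(1/T)}(\lambda^*)^{-1}} = c$, and averaging the gradient against $\lambda^*$ gives $c = \sum_y \lambda^*_y\,\partial_{\lambda_y}g(\lambda^*) = \tr\big(A^{(1/T)}(\lambda^*)^{-1} A^{(0)}(\lambda^*)\big)$, i.e. the effective dimension.

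To finish, I would use $\lambda^*$ as a feasible point in the inner infimum, so that $\inf_{\lambda\in\triangle_\mc{V}}\sup_{y\in\mc{V}}\|\phi(y)\|^2_{A^{(1/T)}(\lambda)^{-1}} \le c$, and then bound the effective dimension by the log-determinant eigenvalue-by-eigenvalue. Writing $a_1,\dots,a_r\ge0$ for the eigenvalues of $A^{(0)}(\lambda^*)$, we have $c = \sum_i \tfrac{T a_i}{1 + T a_i}$ and $g(\lambda^*) = \sum_i \log(1 + T a_i)$, so the elementary inequality $\tfrac{s}{1+s}\le\tfrac32\log(1+s)$ for $s\ge0$ yields $c\le\tfrac32 g(\lambda^*)$. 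Finally $\triangle_\mc{V}\subseteq\triangle_\mc{X}$ gives $g(\lambda^*)\le\gamma_T$, and taking the maximum over $\mc{V}$ completes the proof. The step I expect to require the most care is the equivalence/KKT argument identifying the worst-case variance with the trace $\tr(A^{(1/T)}(\lambda^*)^{-1}A^{(0)}(\lambda^*))$; one must check that it is legitimate to use the information-gain maximizer (rather than the $G$-optimal design itself) merely as a feasible point for the infimum, and that no interior-point or non-degeneracy assumption is needed, which holds here because $I + T A^{(0)}(\lambda)\succeq I$ makes $g$ smooth throughout the simplex.
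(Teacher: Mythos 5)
Your proof is correct, and its skeleton coincides with the paper's: both arguments hinge on the first-order optimality conditions for the $\log\det$ (D-optimal) maximizer $\lambda^*$ over $\triangle_{\mc{V}}$, which identify $\sup_{y\in\mc{V}}\|\phi(y)\|^2_{A^{(1/T)}(\lambda^*)^{-1}}$ with its $\lambda^*$-average $\tr\big(A^{(1/T)}(\lambda^*)^{-1}A^{(0)}(\lambda^*)\big)$ (this is exactly the paper's Lemma~\ref{lem:effective_dim_trace}-style argument), and both then use $\lambda^*$ merely as a feasible point for the inner infimum. Where you diverge is the final comparison of this trace to the log-determinant. The paper peels off arms one at a time via the rank-one determinant identity, obtaining $\sum_{x}\log\big(1+T\lambda^*_x\|\phi(x)\|^2_{(TA^{(0)}(\lambda^*)+I)^{-1}}\big)$, and then needs both a Sherman--Morrison argument to certify that each summand's argument lies in $[0,1]$ and the inequality $\log(1+u)\geq \tfrac{2}{3}u$ on that interval --- which is where the $\tfrac{3}{2}$ comes from. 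You instead diagonalize $A^{(0)}(\lambda^*)$ and compare eigenvalue-by-eigenvalue, $\sum_i \tfrac{Ta_i}{1+Ta_i}$ versus $\sum_i\log(1+Ta_i)$, using only the unrestricted scalar inequality $\tfrac{s}{1+s}\leq\log(1+s)$ for $s\geq 0$. This is cleaner (no boundedness check is needed) and in fact proves the stronger bound $\max_{\mc{V}\subset\mc{X}}f(\mc{V},\tfrac{1}{T})\leq\gamma_T$ with constant $1$; your stated $\tfrac{s}{1+s}\leq\tfrac{3}{2}\log(1+s)$ is simply a weakening to match the lemma as written. One cosmetic remark: like the paper's main-text statement, your argument implicitly normalizes the regularizer in \eqref{eqn:gamma_T} to $\gamma=1$ so that the $\log\det(\gamma I)$ term vanishes; the paper's appendix carries this term around explicitly, but this does not affect the substance of either proof.
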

The quantity $f(\mc{X},\gamma)$ can also be bounded by a more interpretable form:
\begin{lemma}\label{lem:effective_dim_trace}
If $\displaystyle \!f(\mc{X},\gamma)\!=\!\inf_{\lambda \in \triangle_{\mc{X}}}\! \max_{y \in \mc{X}}\!\|\phi(y)\|_{(A(\lambda) + \gamma I)^{-1}}^2$ then
\begin{align*}
    f(\mc{X},\gamma)&\leq\tr\left(A(\lambda_D^*)(A(\lambda_D^*) + \gamma I)^{-1}\right)\\
    &= \tr\left(K_{\lambda_D^*} (K_{\lambda_D^*} + \gamma I)^{-1}\right)
\end{align*}
where $\lambda_D^* \in \arg\max_{\lambda \in \triangle_{\mc{X}}}\log\det\left(A^{(\gamma)}(\lambda)\right)$.
\end{lemma}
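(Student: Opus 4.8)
The plan is to prove this as an instance of the classical Kiefer--Wolfowitz equivalence between $G$- and $D$-optimal design, adapted to the regularized RKHS setting. Since $f(\mc{X},\gamma)$ is an \emph{infimum} over $\lambda \in \triangle_{\mc{X}}$, to obtain the stated upper bound it suffices to plug in a single feasible design, and the natural candidate is exactly the regularized $D$-optimal design $\lambda_D^*$. This immediately gives
\begin{align*}
f(\mc{X},\gamma) \leq \max_{y \in \mc{X}} \phi(y)^\top (A(\lambda_D^*) + \gamma I)^{-1} \phi(y),
\end{align*}
so the entire task reduces to bounding this maximum by $\tr\left(A(\lambda_D^*)(A(\lambda_D^*)+\gamma I)^{-1}\right)$.

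First I would set $g(\lambda) := \log\det\left(A^{(\gamma)}(\lambda)\right)$ and compute its gradient. Because every $\phi(x)$ lies in the finite-dimensional subspace $S = \mathrm{span}\{\phi(x) : x \in \mc{X}\}$, I can restrict $A(\lambda) + \gamma I$ to $S$; on $S^\perp$ it equals $\gamma I$ and contributes only a $\lambda$-independent constant to $\log\det$, so $g$ is a genuine finite-dimensional function of $\lambda$, and it is concave since $\log\det$ is concave and $\lambda \mapsto A(\lambda) + \gamma I$ is affine. Jacobi's formula then yields $\tfrac{\partial g}{\partial \lambda_x} = \phi(x)^\top (A(\lambda) + \gamma I)^{-1} \phi(x)$.

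Next I would invoke the first-order (KKT) conditions for maximizing the concave $g$ over the simplex: there is a multiplier $c$ such that $\tfrac{\partial g}{\partial \lambda_x} = c$ for every $x$ in the support of $\lambda_D^*$ and $\tfrac{\partial g}{\partial \lambda_x} \leq c$ otherwise, whence $\max_{x \in \mc{X}} \phi(x)^\top (A(\lambda_D^*) + \gamma I)^{-1} \phi(x) = c$. To identify $c$ with the trace, I would take the $\lambda_D^*$-weighted average of the gradient: using $\sum_x \lambda_{D,x}^* = 1$ and equality of the gradient on the support gives $\sum_x \lambda_{D,x}^* \tfrac{\partial g}{\partial \lambda_x} = c$, while the same sum equals $\tr\left((A(\lambda_D^*)+\gamma I)^{-1} \sum_x \lambda_{D,x}^* \phi(x)\phi(x)^\top\right) = \tr\left(A(\lambda_D^*)(A(\lambda_D^*)+\gamma I)^{-1}\right)$. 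Chaining these bounds closes the inequality.

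Finally, the equality with $\tr\left(K_{\lambda_D^*}(K_{\lambda_D^*}+\gamma I)^{-1}\right)$ follows because $A(\lambda)$ and the Gram matrix $K_\lambda$ share the same nonzero eigenvalues: writing the trace as $\sum_i \tfrac{\sigma_i}{\sigma_i + \gamma}$ over the eigenvalues $\sigma_i$, the zero eigenvalues contribute nothing, so only the common nonzero spectrum survives. The main obstacle is not the algebra but the careful justification of the gradient and of the optimality conditions in the possibly infinite-dimensional RKHS; this is exactly what the restriction-to-$S$ argument handles, after which everything is the standard equivalence-theorem bookkeeping.
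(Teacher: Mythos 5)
Your proof is correct and takes essentially the same route as the paper's: both compute the gradient of $\log\det\left(A^{(\gamma)}(\lambda)\right)$, invoke the first-order optimality conditions of $\lambda_D^*$ over the simplex (the paper via the variational inequality $\langle \nabla g(\lambda_D^*), \lambda - \lambda_D^* \rangle \leq 0$ tested against a Dirac, you via the equivalent KKT-multiplier form), and identify the resulting constant with the trace through the $\lambda_D^*$-weighted average of the gradient. Your extra care in restricting to the span of $\{\phi(x)\}_{x\in\mc{X}}$ and in justifying $\tr\left(A(\lambda_D^*)(A(\lambda_D^*)+\gamma I)^{-1}\right) = \tr\left(K_{\lambda_D^*}(K_{\lambda_D^*}+\gamma I)^{-1}\right)$ via the shared nonzero spectrum supplies details the paper leaves implicit.
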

Notably, the RHS of Lemma~\ref{lem:effective_dim_trace} is the notion of effective dimension that appears in  \cite{alaoui2015fast,derezinski2020bayesian}. 

\subsection{RIPS for Pure Exploration}
We consider a slight generalization of the pure exploration setting introduced in Section~\ref{sct:setting}.
Fix finite sets $\mc{X} \subset \R^d$ and $\mc{Z} \subset \R^d$. 
We may have $\mc{X}=\mc{Z}$ but there are interesting cases in which $\mc{X} \neq \mc{Z}$ including combinatorial bandits and recommendation tasks \cite{fiez2019}.
We say a $z \in \mc{Z}$ is $\epsilon$-good if $\mu_z \geq \max_{z' \in \mc{Z}} \mu_{z'} - \epsilon$.
In the pure exploration game, for $\epsilon >0$ and $\delta \in (0,1)$ the player seeks to identify an $\epsilon$-good arm by taking as few measurements in $\mc{X}$ as possible. 
Just as in regret minimization games, we assume that when the player at time $t$ plays $x_t \in \mc{X}$ she observes $y_t = \mu_{x_t} + \xi_t$ where $\xi_t$ is independent mean-zero noise with variance $\sigma^2$.
Finally, we assume the existence of a $\theta_* \in \mc{H}$ such that 
\begin{align*}
    \max\!\left\{ \max_{z \in \mc{Z}}|\mu_z -  \langle \theta_*, \phi(z) \rangle| , \max_{x \in \mc{X}}|\mu_x -  \langle \theta_*, \phi(x) \rangle| \!\right\} \!\leq h 
\end{align*}
for some $h \geq 0$ that is \emph{unknown} to the player. 

Consider the elimination style algorithm of Algorithm~\ref{alg:bai_med}.
The algorithm is a combination of our {\ours} procedure and the algorithm of \cite{fiez2019}. 
While the algorithm is inspired by \cite{fiez2019}, their analysis only holds in the well-specified setting ($h=0$), hence a new proof technique was necessary to achieve the following result for general $h \geq0$. 

\begin{algorithm}[tb]
  \caption{RIPS for Pure Exploration}
  \label{alg:bai_med}
\begin{algorithmic}
\State {\bfseries Input:} Finite sets $\mc{X} \subset \R^d$, $\mc{Z} \subset \R^d$, feature map $\phi$, confidence level $\delta \in (0, 1)$, regularization $\gamma$, sub-Gaussian parameter $\sigma$, bound on maximum reward $B$, bound on the misspecification noise $h$.\\
\State Let $\mc{Z}_1 \gets \mc{Z}, \ell\gets 1$ \\
\While{$|\mc{Z}_\ell|>1$}
    \State Let $\displaystyle{\lambda}_\ell \in \triangle_{\mc{X}}$ be a minimizer of $f(\lambda; \mc{Z}_\ell; \gamma)$ where
        \begin{align*}
            &f(\mc{V}; \gamma) = \inf_{\lambda \in \triangle_{\mc{X}}} f(\lambda; \mc{V}; \gamma) \\
            &\!=\! \!\inf_{\lambda \in \triangle_{\mc{X}}} \!\max_{v,v' \in \mc{V}}\!\|\phi(v)\!-\!\phi(v')\|_{(\sum_{x\in \mc{X}}\! \lambda_x \phi(x)\phi(x)^{\top}\! + \gamma I)^{-1}}^2
        \end{align*}
    \State Set $\epsilon_\ell \gets 2^{-\ell}$, $q^{(1)}_\ell\gets c_1\log(|\mc{Z}|/\delta)$\\
    \State Set $q^{(2)}_\ell\gets 
    \HighProbConst^2 \epsilon_\ell^{-2} f(\mc{Z}_\ell;\gamma) (B^2 + \sigma^2) \log(2\ell^2|\mc{Z}|^2/\delta)$\\
    \State Set $\tau_\ell \gets \left\lceil\max\left\{q^{(1)}_\ell , q^{(2)}_\ell\right\}\right\rceil$\\ 
    \State Use Algorithm~\ref{alg:RIPS_exp_design_for_RKHS} with sets $\mc{X}$, $\mc{V}_\ell = \phi \circ \mc{Z}_\ell - \phi \circ \mc{Z}_\ell$, sampling $\tau_\ell$ measurements $x_1, \ldots, x_{\tau_\ell}$ to get $\{W^{(v)}\}_{v\in\mc{V}_\ell}$.
    \State Set $\displaystyle\widehat{\theta}_\ell\! := \!\arg\min_{\theta} \max_{v \in \mc{V}_\ell} \frac{ | \langle \theta, v \rangle - W^{(v)}| }{\|v\|_{(\sum_{x \in \mc{X}}\! \lambda_{\ell, x} \phi(x) \phi(x)^\top\! + \gamma I)^{-1}}}$
    \State $\displaystyle \mc{Z}_{\ell+1} \!= \!\big\{z \in \mc{Z}_\ell  : \max_{z' \in \mc{Z}_\ell} \langle \phi(z') \!- \!\phi(z), \widehat{\theta}_{\ell} \rangle \!\leq \!2 \epsilon_\ell  \big\}$\\
    \State $\ell \gets \ell + 1$ \\
\EndWhile
\State {\bfseries Output:} $\mc{Z}_{\ell}$
\end{algorithmic}
\end{algorithm}
\begin{theorem}\label{thm:samplecomplexity_medofmean}
With $\displaystyle z_* \in \arg\max_{z \in \mc{Z}} \langle z, \theta_* \rangle$,
fix any $\epsilon \geq \bar\epsilon$ where
\begin{align*}
    \bar{\epsilon} &=  8 \min\{ \epsilon \geq 0 : 4(\sqrt{\gamma} \|\theta_* \|_2 + h)(2+ \sqrt{g(\epsilon)}) \leq \epsilon \}, \\
    g(\epsilon) \!&=\! \inf_{\lambda \in \triangle_{\mathcal{X}}} \sup_{z \in \mc{Z} : \langle \theta_*, \phi(z_*)\!-\!\phi(z) \rangle \leq \epsilon} \|\phi(z_*) - \phi(z)\|^2_{A^{(\gamma)}(\lambda)^{-1}}
\end{align*}
Then with probability at least $1-\delta$, once the algorithm has taken at least $\tau$ samples where $\tau = \widetilde{O}( c_1 \log(|\mc{Z}|/\delta)+ \log(\epsilon^{-1} ) \HighProbConst^2 (B^2 + \sigma^2) \log(|\mc{Z}|/\delta) \rho^*(\gamma,\epsilon) )$ we have that $\mu_{\widehat{z}} \geq \max_{z' \in \mc{Z}} - \epsilon$ where $\widehat{z}$ is any arm in the set $\mc{Z}_\ell$ under consideration after $\tau$ pulls and 
\begin{align}\label{rho^*}
    \rho^*(\gamma,\epsilon) \!&=\! \inf_{\lambda \in \Delta_{\mathcal{X}}}\sup_{z\in \mathcal{Z}}\frac{\|\phi(z_*) \!-\! \phi(z)\|^2_{A^{(\gamma)}(\lambda)^{-1}}}{\max\{\epsilon^2,\langle \theta_*, \phi(z_*)\!-\!\phi(z) \rangle^2)\}}.
\end{align}
\end{theorem}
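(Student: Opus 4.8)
The plan is to follow the phased-elimination template of \cite{fiez2019} but to carefully propagate the bias $b := \sqrt{\gamma}\|\theta_*\|_2 + h$ introduced by regularization and misspecification, which is the feature absent from the well-specified analysis. I would first establish a high-probability \emph{good event}, then argue correctness (every surviving arm is $\epsilon$-good) and the sample-complexity bound separately.

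For the good event, I would apply Theorem~\ref{thm:robust_estimator_lemma} at each round $\ell$ with the sets $\mc{X}$ and $\mc{V}_\ell = \phi\circ\mc{Z}_\ell - \phi\circ\mc{Z}_\ell$, confidence $\delta/(2\ell^2)$, and the estimator replaced by $\langle\widehat\theta_\ell,v\rangle$ (costing a factor $2$), so that every $v\in\mc{V}_\ell$ obeys $|\langle\widehat\theta_\ell-\theta_*,v\rangle| \leq 2\|v\|_{A^{(\gamma)}(\lambda_\ell)^{-1}}\,b + 2\|v\|_{A^{(\gamma)}(\lambda_\ell)^{-1}}\HighProbConst\sqrt{(B^2+\sigma^2)\tau_\ell^{-1}\log(2\ell^2|\mc{Z}|^2/\delta)}$. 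Because $\lambda_\ell$ minimizes $f(\cdot\,;\mc{Z}_\ell;\gamma)$ we have $\|v\|^2_{A^{(\gamma)}(\lambda_\ell)^{-1}}\leq f(\mc{Z}_\ell;\gamma)$, and the choice $\tau_\ell\geq q^{(2)}_\ell$ forces the stochastic term down to $2\epsilon_\ell$; hence on the good event $|\langle\widehat\theta_\ell-\theta_*,v\rangle|\leq 2\|v\|_{A^{(\gamma)}(\lambda_\ell)^{-1}}b + 2\epsilon_\ell$ for all $v\in\mc{V}_\ell$. The $\ell^2$ inside the logarithm makes the union bound over rounds summable, giving total failure probability at most $\delta$.

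For correctness I would induct on $\ell$, maintaining that $z_*\in\mc{Z}_\ell$ and that every surviving arm has a controlled gap. Since $\langle\phi(z')-\phi(z_*),\theta_*\rangle\leq 0$, the good event gives $\langle\phi(z')-\phi(z_*),\widehat\theta_\ell\rangle\leq 2\|\phi(z')-\phi(z_*)\|_{A^{(\gamma)}(\lambda_\ell)^{-1}}b + 2\epsilon_\ell$, and bounding the norm by $\sqrt{g(\cdot)}$ on surviving arms shows $z_*$ is never discarded as long as $\epsilon_\ell\gtrsim\bar\epsilon/8$, which is exactly the regime guaranteed by the condition $4b(2+\sqrt{g(\epsilon)})\leq\epsilon$ defining $\bar\epsilon$. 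Conversely, if $z\in\mc{Z}_{\ell+1}$ then setting $z'=z_*$ in the elimination rule and applying the good event yields $\Delta_z:=\langle\theta_*,\phi(z_*)-\phi(z)\rangle\leq 4\epsilon_\ell + 2\|\phi(z_*)-\phi(z)\|_{A^{(\gamma)}(\lambda_\ell)^{-1}}b$, so once $\epsilon_\ell$ has shrunk to $\approx\epsilon/8$ the definition of $\bar\epsilon$ forces $\Delta_{\widehat z}\leq\epsilon$ for every remaining arm; converting to rewards via $\mu_{\widehat z}\geq\langle\theta_*,\phi(\widehat z)\rangle - h\geq\mu_{z_*}-\Delta_{\widehat z}-2h$, the factor $8$ in $\bar\epsilon$ absorbs the $O(h)$ slacks to deliver $\mu_{\widehat z}\geq\max_{z'}\mu_{z'}-\epsilon$. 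For the sample complexity I would bound $f(\mc{Z}_\ell;\gamma)$ round-by-round: the gap bound gives $\mc{Z}_\ell\subseteq\{z:\Delta_z\lesssim\epsilon_\ell\}$, the triangle inequality $\|\phi(z)-\phi(z')\|_{A^{(\gamma)}(\lambda)^{-1}}\leq\|\phi(z)-\phi(z_*)\|_{A^{(\gamma)}(\lambda)^{-1}}+\|\phi(z_*)-\phi(z')\|_{A^{(\gamma)}(\lambda)^{-1}}$ gives $f(\mc{Z}_\ell;\gamma)\leq 4\inf_\lambda\sup_{z\in\mc{Z}_\ell}\|\phi(z_*)-\phi(z)\|^2_{A^{(\gamma)}(\lambda)^{-1}}$, and multiplying-and-dividing by $\max\{\epsilon^2,\Delta_z^2\}$ while extending the supremum to all of $\mc{Z}$ yields $f(\mc{Z}_\ell;\gamma)\lesssim\rho^*(\gamma,\epsilon)\max\{\epsilon^2,\epsilon_\ell^2\}$; thus $\tau_\ell\lesssim \HighProbConst^2(B^2+\sigma^2)\rho^*(\gamma,\epsilon)\log(|\mc{Z}|/\delta)$ plus the $c_1\log(|\mc{Z}|/\delta)$ floor, and summing over the $O(\log(1/\epsilon))$ rounds needed to reach $\epsilon_\ell\approx\epsilon$ gives the stated $\widetilde{O}$ bound.

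The hardest part will be the correctness analysis in the misspecified regime: because $h$ is unknown, the algorithm cannot adapt its stopping to the noise floor, so I must show that the fixed schedule $\epsilon_\ell=2^{-\ell}$ together with the self-referential definition of $\bar\epsilon$ (in particular $g(\epsilon)$, which quantifies how well arms within gap $\epsilon$ of $z_*$ can be separated) correctly certifies $\epsilon$-goodness without prematurely eliminating $z_*$. Lining up the constants in $\bar\epsilon$ (the factor $8$ and the condition $4b(2+\sqrt{g})\leq\epsilon$) against the elimination threshold $2\epsilon_\ell$ and the triangle-inequality losses is the delicate bookkeeping; by contrast, the min-max manipulation relating $f(\mc{Z}_\ell;\gamma)$, $g$, and $\rho^*(\gamma,\epsilon)$ is comparatively routine.
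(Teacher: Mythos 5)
Your proposal is correct and follows essentially the same route as the paper's proof: a high-probability good event from Theorem~\ref{thm:robust_estimator_lemma} with a union bound over rounds and pairs, an induction showing $z_*$ survives while $\mc{Z}_\ell$ stays inside a deterministic low-gap set (the paper's auxiliary sets $S_\ell$ play the role of your $\{z:\Delta_z\lesssim\epsilon_\ell\}$), and a sum of the $\tau_\ell$ related to $\rho^*(\gamma,\epsilon)$ via the gap containment. The only cosmetic difference is in the last step, where you bound $f(\mc{X},\mc{Z}_\ell;\gamma)\lesssim\rho^*(\gamma,\epsilon)\max\{\epsilon^2,\epsilon_\ell^2\}$ round by round, whereas the paper lower-bounds $\rho^*$ by the average of $2^{2\ell}f(\mc{X},S_\ell;\gamma)$ over rounds; both yield the same $\widetilde{O}(\log(\epsilon^{-1})\rho^*)$ accounting.
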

Note that if $\mc{X}=\mc{Z}$ we have
\begin{align*}
    g(\epsilon) \!&=\! \inf_{\lambda \in \triangle_{\mathcal{X}}} \sup_{z \in \mc{X} : \langle \theta_*, \phi(z_*)\!-\!\phi(z) \rangle \leq \epsilon}\! \|\phi(z_*) - \phi(z)\|^2_{A^{(\gamma)}(\lambda)^{-1}} \\
    &\leq 4 \inf_{\lambda \in \triangle_{\mathcal{X}}} \sup_{x \in \mc{X}} \|\phi(x) \|^2_{A^{(\gamma)}(\lambda)^{-1}} \\
    &\leq 4 \tr( (A(\lambda_D^*) + \gamma I)^{-1} A(\lambda_D^*) )
\end{align*}
where the last line follows from Lemma~\ref{lem:effective_dim_trace}.
This means $\bar\epsilon$, the limit on how well one can estimate the maximizing arm, satisfies $\bar\epsilon \lesssim  (\gamma \|\theta_*\| + h) \tr( (A(\lambda_D^*) + \gamma I)^{-1} A(\lambda_D^*) )$. 
Thus, if we seek an $\epsilon$-good arm, we should choose $\gamma$ to make this right hand side less than $\epsilon$. 
Note that $\gamma=0$ and $h=0$ implies $\bar\epsilon = 0$.
If $\phi\equiv \text{identity}$ so that $\mc{H}=\R^d$, $h=0$, and $\gamma=0$ then  the sample complexity of Theorem~\ref{thm:samplecomplexity_medofmean} is known to be optimal up to log factors to identify the very best arm (assuming it is unique) relative to any $\delta$-correct algorithm over $\theta_* \in \R^d$ \cite{soare2014best,fiez2019}.

\subsection{Comparing to the alternative baseline procedure}
In Section~\ref{sec:RKHS_rounding} we proposed a natural alternative to our {\ours} procedure for experimental design in an RKHS.
This PTR baseline leveraged the fact that the added regularization $\gamma >0$ effectively made many directions irrelevant.
Thus, it projected the problem to a low dimensional subspace where it could apply any of the standard rounding techniques for finite dimensions described in Section~\ref{sct:rounding}.
The dimension of this subspace, denoted $\widetilde{d}$, scales like the number of eigenvalues of $\sum_{x\in \mathcal{X}}\lambda^*_x \phi(x)\phi(x)^\top$ that are greater than $\gamma$ where $\lambda^* \in \arg\min_{\lambda \in\triangle_{\mc{X}}}\max_{v \in \mathcal{V}} \|v\|_{\left(\sum_{x\in \mathcal{X}}\lambda_x \phi(x)\phi(x)^\top + \gamma  I\right)^{-1}}^2$.
Any standard rounding algorithm would then require the number of samples taken from the design to be at least $\widetilde{d}$. 
Relative to our results, this inflates our regret bound and sample complexity by an additive factor of $\widetilde{d}$ scaled by some problem-dependent $\log$ factors. 
Algebra shows that $\widetilde{d} \leq  2 \tr( (A(\lambda_D^*) + \gamma I)^{-1} A(\lambda_D^*) )$.
Though for regret this is a lower order term, for pure-exploration with $\mc{X} \neq \mc{Z}$, this term may potentially dominate the sample complexity because it does not capture the interplay between the geometry of $\mc{X}$ and $\mc{Z}$. 
Fortunately, our {\ours} procedure demonstrates it is unnecessary and avoids it.

\section{Related work}
There exist excellent surveys of experimental design from both a statistical and computational perspective \cite{pukelsheim2006optimal,atkinson2007optimum,todd2016minimum}.
Our work is particularly interested in the task of converting a continuous design into a discrete allocation of $T$ measurements.
We reviewed a number of works in Section~\ref{sct:rounding} for completing this task in finite dimensions. 
To move to an RKHS setting we considered a regularized design objective which is also known as Bayesian experimental design \cite{chaloner1995bayesian,allen2017near,derezinski2020bayesian}. 
While most Bayesian experimental design works assume a low-dimensional ambient space and use simple rounding, one exception is the work of \cite{alaoui2015fast} that performs experimental design in an RKHS for a different design objective, which inspired our project-then-round procedure described of Section~\ref{sec:RKHS_rounding}.
And very recently, \cite{derezinski2020bayesian} proposed a method of sampling from a determinantal point process (DPP) and showed that they can approximate many continuous experimental design objectives up to a constant factor if $T \gtrsim d_{eff} := \tr( (A(\lambda_D^*)+\gamma I)^{-1} A(\lambda_D^*) )$ with $\lambda_D^*$ defined in Lemma~\ref{lem:effective_dim_trace}.
However, according to Table~1 of \cite{derezinski2020bayesian} the method may not apply to $G$-optimal-like objectives\footnote{Our Theorem~\ref{thm:rounding} with the fact  $\widetilde{d} \leq 2 d_{eff}$ suggests $k$ only needs to be at least $\widetilde{d}$ for $G$-like objectives, which adds to their table.}, which is the primary  objective of our work. 
To our knowledge, our proposed {\ours} method is novel in that its performance is directly comparable to the continuous design without requiring a minimum number of measurements with some dependence on the (effective) dimension.  
However, our method does require the number of measurements to exceed $\log(|\mc{V}|)$.
While we leveraged experimental design techniques for kernel bandits, many prior works were able to obtain regret bounds and pure-exploration results using other methods.

\textbf{Kernel bandits}
In the well-specified setting ($h=0$) \cite{srinivas2009gaussian} propose a UCB style algorithm \cite{auer2002finite} for the RKHS setting. Independently, \cite{grunewalder2010regret} developed similar methods for minimizing simple regret. 
\cite{srinivas2009gaussian} established a regret bound of $\sqrt{T}( \|\theta_*\| \sqrt{\gamma_T} + \gamma_T)$ where $\gamma_T$ is defined in \eqref{eqn:gamma_T}. 
\cite{valko2013finitetime} proposed another UCB variant to obtain a regret bound that scales just as $\|\theta_*\| \sqrt{p_T T}$ where $p_T$ is an algorithm-dependent constant that can be upper bounded by $\gamma_T$, thus improving \cite{srinivas2009gaussian}.
We recall that our own regret bound of Theorem~\ref{thm:regret_medofmean} scales no worse than $\|\theta_*\| \sqrt{\gamma_T T}$ using Lemma~\ref{lmm:variance_vs_information_gain}, thus matching state of the art.
\cite{chowdhury2017kernelized} offer improvements in regret over GP-UCB  when the action space is infinite. 
We also note that our algorithm naturally allows batch querying, a property that UCB-like algorithms achieve only through inelegant means  \cite{desautels2012parallelizing,wu2018parallel}.

\textbf{Misspecified models}
Our approach to misspecified models draws inspiration from  \cite{lattimore2020learning} which addresses linear bandits in finite dimensions. 
Their regret bound scales quadratically in the ambient dimension due to rounding effects.
Our {\ours} procedure extends this work to an RKHS. 
The misspecified model setting is related to the corrupted setting where an adversary can choose to corrupt the observed reward by $c_t$ in each round $t$. Any algorithms for this adversarial setting can also be used to solve kernelized multi-armed bandit in the misspecified setting with total amount of corruption equal to at most $C_T=\sum_{t=1}^T c_t =hT$. 
Using this reduction, the regret bound for the corrupted setting  of \cite{bogunovic2020corruptiontolerant} scales like $C_T \sqrt{\gamma_T T}$. 
Unfortunately, if we take $C_T = hT$ this bound is vacuous. 
Whether robust algorithms like \cite{gupta2019better} can be extended to our kernel bandit setting is an open question. 
Concurrently, \cite{lee2021achieving} independently proposed a very similar estimator and algorithm for the related task of solving adversarial bandits.

\textbf{Constrained linear bandits}
If we assumed that $\| \theta_* \|_2 \leq R$ for some explicit, known $R > 0$ then this setting is known as constrained linear bandits, tackled in \cite{degenne2020gamification} for the pure-exploration and \cite{tirinzoni2020asymptotically} for the regret setting, respectively. There, a lower bound on the sample complexity of identifying the best arm can be computed.
The lower bound is  $\inf_{\lambda\in \Delta_{\mathcal{X}}}\!\sup_{x' \neq x_*}\!\!\inf_{\gamma \geq 0}\! \!\frac{2\|x'-x_*\|^2_{(A(\lambda) + \gamma\! I\!)^{-1}}}{\max\{(x'\!-x_*)\!^\top \!(\!A(\lambda) + \gamma I)^{-1}\!A(\lambda) \theta_*\!, 0\}^2}$, which is close to our upper bound $\rho^*$ from equation \ref{rho^*}. \\
\cite{degenne2020gamification} propose an algorithm with an asymptotic upper bound in the sense that as $\delta \rightarrow 0$, the dominant term matches the lower bound.
However, 
while \cite{degenne2020gamification} and \cite{tirinzoni2020asymptotically} are tight asymptotically, they suffer from large sub-optimal dependencies on problem-specific parameters.

\section{Conclusion}
In this paper, we have brought to the non-parametric learning setting an estimator that relies on continuous designs while enjoying state of the art - theoretical and experimental - guarantees for both the well-specified and the misspecified settings. We leveraged this estimator in a novel elimination style algorithm for kernel bandits. For the most part we have ignored computation. However, the computational cost of the {\ours} estimator scales \emph{linearly} in $|\mc{V}|$. An interesting avenue of research is designing an estimator that leverages multi-dimensional robust mean estimation that has the same properties as {\ours} but has \emph{no} dependence on $|\mc{V}|$. Such an estimator would be of considerable interest in problems such as combinatorial bandits where $|\mc{V}|$ is potentially exponential in the dimension (e.g., see \cite{katz2020empirical,wagenmaker2021experimental}).

\newpage 
\bibliographystyle{plainnat}
\bibliography{bibliography.bib}

\newpage
\clearpage
\newpage
\onecolumn
\appendix
\section*{Outline} 
The appendix is organized as follows. We first provide the proofs for the concentration bound of {\ours} (Theorem~\ref{thm:robust_estimator_lemma}), the computation of the inverse of the bilinear form (Lemma~\ref{lmm:compute_kernel_with_alloc}), the guarantees of the PTR procedure (Theorem~\ref{thm:rounding}), the regret bound of the {\ours} regret minimization algorithm (Theorem~\ref{thm:regret_medofmean}), the sample complexity of the {\ours} pure exploration algorithm (Theorem~\ref{thm:samplecomplexity_medofmean}). We also establish the regret bound and the sample complexity guarantees of the PTR procedure. Then, we provide the proofs of the comparison of our variance term $f(\mc{X}, 1/T)$ with the information gain of \cite{srinivas2009gaussian} (Lemma~\ref{lmm:variance_vs_information_gain}) and with the effective dimension of \cite{alaoui2015fast} (Lemma~\ref{lem:effective_dim_trace}) and prove a corollary of Theorem 1 of \cite{degenne2020gamification}. Last, we complete the details of the experiments.

\section{Concentration of {\ours}, Proof of Theorem~\ref{thm:robust_estimator_lemma}}\label{section:proof_concentration_robust_est}

\begin{proof}
First note that 
\begin{align*}
     \max_{v \in \mc{V}} \frac{| \langle \widehat{\theta}, v \rangle - \langle \theta_*, v \rangle|}{\|v\|_{A^{(\gamma)}(\lambda)^{-1}}} &= \max_{v \in \mc{V}} \frac{| \langle \widehat{\theta}, v \rangle - W^{(v)} + W^{(v)} - \langle \theta_*, v \rangle|}{\|v\|_{A^{(\gamma)}(\lambda)^{-1}}} \\
     &\leq \max_{v \in \mc{V}} \frac{| \langle \widehat{\theta}, v \rangle - W^{(v)}|}{\|v\|_{A^{(\gamma)}(\lambda)^{-1}}} + \max_{v \in \mc{V}} \frac{| W^{(v)} - \langle \theta_*, v \rangle|}{\|v\|_{A^{(\gamma)}(\lambda)^{-1}}} \\
     &= \min_{\theta} \max_{v \in \mc{V}} \frac{ | \langle \theta, \phi(v )\rangle - W^{(v)}| }{\|v\|_{A^{(\gamma)}(\lambda)^{-1}}} + \max_{v \in \mc{V}} \frac{| W^{(v)} - \langle \theta_*, v \rangle|}{\|v\|_{A^{(\gamma)}(\lambda)^{-1}}} \\
     &\leq 2 \max_{v \in \mc{V}} \frac{ | \langle \theta_*, v \rangle - W^{(v)}| }{\|v\|_{A^{(\gamma)}(\lambda)^{-1}}}
\end{align*}
which completes the second part of the lemma, so it suffices to show that each $| \langle \theta_*, v \rangle - W^{(v)}|$ is small.

We begin by bounding the variance of $v^\t A^{(\gamma)}(\lambda)^{-1} \phi(x_t) y_t$ for any $t \in \mathbb{N}$ which is necessary to use the robust estimator. Note that
\begin{align*}
\mathbb{V}\text{ar}( v^\t A^{(\gamma)}(\lambda)^{-1} \phi(x_t) y_t ) &= \E[ ( v^\t A^{(\gamma)}(\lambda)^{-1} \phi(x_t) y_t )^2 ] - \E[ v^\t A^{(\gamma)}(\lambda)^{-1} \phi(x_t) y_t ]^2 
\end{align*}
which means we can drop the second term to bound the variance by
\begin{align*}
    \E[ \left( v^\t A^{(\gamma)}(\lambda)^{-1} \phi(x_t) y_t \right)^2 ] &= \E[ \left( v^\t A^{(\gamma)}(\lambda)^{-1} \phi(x_t) (  \phi(x_t)^\top \theta_* + \eta_t + \xi_t ) \right)^2 ] \\
    &= \E[ \left( v^\t A^{(\gamma)}(\lambda)^{-1} \phi(x_t) (  \phi(x_t)^\top \theta_* + \eta_t) \right)^2 ] + \E[ \left( v^\t A^{(\gamma)}(\lambda)^{-1} \phi(x_t)  \right)^2 \xi_t^2 ] \\
    &\leq B^2 \E[ \left( v^\t A^{(\gamma)}(\lambda)^{-1} \phi(x_t)  \right)^2 ] + \sigma^2 \E[ \left( v^\t A^{(\gamma)}(\lambda)^{-1} \phi(x_t)  \right)^2 ] \\
    &= (B^2 + \sigma^2) \E[ v^\t A^{(\gamma)}(\lambda)^{-1} \phi(x_t) \phi(x_t)^\top  A^{(\gamma)}(\lambda)^{-1} v ] \\
    &\leq (B^2 + \sigma^2) \|v\|_{A^{(\gamma)}(\lambda)^{-1}}^2.
\end{align*}

Recalling that
\begin{align*}
&|\widehat{\mu}( \{ v^\t A^{(\gamma)}(\lambda)^{-1} \phi(x_t) y_t \}_{t=1}^T ) - \E[ v^\t A^{(\gamma)}(\lambda)^{-1} \phi(x_1) y_1 ]| \leq \HighProbConst \sqrt{\mathbb{V}\text{ar}(v^\t A^{(\gamma)}(\lambda)^{-1} \phi(x_1) y_1 ) \frac{\log(\tfrac{2}{\delta})}{T}} 
\end{align*}
we have 
\begin{align*}
    | \langle \theta_*, v \rangle - W^{(v)}| &= | \langle \theta_*, v \rangle - \E[ v^\t A^{(\gamma)}(\lambda)^{-1} \phi(x_1) y_1 ] + \E[ v^\t A^{(\gamma)}(\lambda)^{-1} \phi(x_1) y_1 ] - W^{(v)}| \\
    &\leq | \langle \theta_*, v \rangle - \E[ v^\t A^{(\gamma)}(\lambda)^{-1} \phi(x_1) y_1 ] | + |\widehat{\mu}( \{ v^\t A^{(\gamma)}(\lambda)^{-1} \phi(x_t) y_t \}_{t=1}^T ) - \E[ v^\t A^{(\gamma)}(\lambda)^{-1} \phi(x_1) y_1 ]|.
\end{align*}
We now recall that $y_t = \langle \phi(x_t), \theta_* \rangle + \xi_t + \eta_{x_t}$ where $\xi_t$ is a mean-zero, independent random variable with variance $\sigma^2$, and $|\eta_{x_t}| \leq h$.
Thus, 
\begin{align*}
    | \langle \theta_*, v \rangle - \E[ v^\t A^{(\gamma)}(\lambda)^{-1} \phi(x_1) y_1 ] | &= | \langle \theta_*, v \rangle - \E[ v^\t A^{(\gamma)}(\lambda)^{-1} \phi(x_1)\phi(x_1)^\top \theta_* ] - \E[ v^\t A^{(\gamma)}(\lambda)^{-1} \phi(x_1) \eta_{x_1} ] | \\
    &\leq | \langle \theta_*, v \rangle - \E[ v^\t A^{(\gamma)}(\lambda)^{-1} \phi(x_1)\phi(x_1)^\top \theta_* ]| + |\E[ v^\t A^{(\gamma)}(\lambda)^{-1} \phi(x_1) \eta_{x_1} ] |
\end{align*}
which we bound separately. 
Firstly,
\begin{align*}
     | \langle \theta_*, v \rangle - \E[ v^\t A^{(\gamma)}(\lambda)^{-1} \phi(x_1) \phi(x_1)^\top \theta_* ]| &=  | \langle \theta_*, v \rangle -  v^\t A^{(\gamma)}(\lambda)^{-1} A(\lambda) \theta_* |  \\
     &=  \gamma |  v^\t A^{(\gamma)}(\lambda)^{-1} \theta_* | \\
     &=  \gamma^{1/2} |  v^\t  (A(\lambda) + \gamma I)^{-1/2} (A(\lambda)/\gamma +  I)^{-1/2} \theta_* | \\
     &\leq  \gamma^{1/2} |  v^\t  (A(\lambda) + \gamma I)^{-1/2} \theta_* | \\
     &\leq  \gamma^{1/2} \| v \|_{A^{(\gamma)}(\lambda)^{-1}} \|\theta_*\|
\end{align*}
and secondly,
\begin{align*}
    |\E[ v^\t A^{(\gamma)}(\lambda)^{-1} \phi(x_1) \eta_{x_1} ] | &\leq \E[| v^\t A^{(\gamma)}(\lambda)^{-1} \phi(x_1) \eta_{x_1}| ] \\
    &\leq h \sqrt{ \E[| v^\t A^{(\gamma)}(\lambda)^{-1} \phi(x_1) |^2 ]} \\
    &= h \sqrt{ v^\t A^{(\gamma)}(\lambda)^{-1} A(\lambda) A^{(\gamma)}(\lambda)^{-1} v } \\
    &\leq h \| v \|_{A^{(\gamma)}(\lambda)^{-1}}.
\end{align*}
Thus, putting it all together we have
\begin{align*}
    | \langle \theta_*, v \rangle - W^{(v)}| &\leq ( \sqrt{\gamma} \|\theta_* \|_2 + h + \HighProbConst\sqrt{(B^2 + \sigma^2) \frac{\log(2/\delta) }{T}} ) \| v \|_{A^{(\gamma)}(\lambda)^{-1}}.
\end{align*}
Union bounding over all $v \in \mc{V}$ completes the proof.
\end{proof}

\section{Inverses and bilinear forms, Proof of Lemma~\ref{lmm:compute_kernel_with_alloc}}
\begin{proof}[Proof of Proposition~\ref{lmm:compute_kernel_with_alloc}]
The following manipulations are well-known, but we include them from completeness.
Define
\begin{align*}
    \Phi := [\phi(x_1)^\top, \ldots, \phi(x_\tau)^\top]^\top
\end{align*}
Holds 
\begin{align*}
    \Phi^\top (\Phi\Phi^\top + \gamma I) = (\Phi^\top \Phi + \gamma I) \Phi^\top
    \;.
\end{align*}
And thus
\begin{align*}
    (\Phi^\top \Phi + \gamma I)^{-1} \Phi^\top = \Phi^\top (\Phi\Phi^\top + \gamma I)^{-1}\;.
\end{align*}
Now we use the expansion
\begin{align*}
    \left(\Phi^\top \Phi + \gamma I\right)a = \Phi^\top \Phi a + \gamma a
\end{align*}
to write
\begin{align*}
    a &= \left(\Phi^\top \Phi + \gamma I\right)^{-1}\left(\Phi^\top  \Phi a + \gamma a\right)\\
    &= \left(\Phi^\top \Phi + \gamma I\right)^{-1}\Phi^\top  \Phi a + \left(\Phi^\top \Phi + \gamma I\right)^{-1}\gamma a\\
    &= \Phi^\top \left(\Phi \Phi^\top + \gamma I\right)^{-1} \Phi a + \gamma\left(\Phi^\top \Phi + \gamma I\right)^{-1} a\;.
\end{align*}
Then multiplying on the left side by $b^\top$ leads to 
\begin{align*}
    b^\top a = b^\top\Phi^\top \left(\Phi \Phi^\top + \gamma I\right)^{-1} \Phi a + \gamma b^\top\left(\Phi^\top \Phi + \gamma I\right)^{-1} a\;.
\end{align*}
So
\begin{align*}
    b^\top\left(\sum_{x'\in \mathcal{X}} \phi(x')\phi(x')^\top + \gamma I\right)^{-1} a
    &= \frac{1}{\gamma} b^\top a - \frac{1}{\gamma}b^\top \Phi^\top \left(\Phi \Phi^\top + \gamma I\right)^{-1} \Phi a\\ 
    &= \frac{1}{\gamma} a^\top b - \frac{1}{\gamma}k(a)^\top \left(K + \gamma I\right)^{-1} k(b)\;.
\end{align*}
We now simply repeat with the same calculations with
\begin{align*}
    \Phi_\lambda := [\sqrt{\lambda_1}\phi(x_1)^\top, \ldots, \sqrt{\lambda_\tau}\phi(x_\tau)^\top]^\top\;,
\end{align*}
\begin{align*}
    K_\lambda = \Phi_\lambda \Phi_\lambda^\top = \left[\sqrt{\lambda_i}\sqrt{\lambda_j}\phi(x_i)^\top\phi(x_j)\right]_{1\leq i,j\leq \tau}\;,
\end{align*}
and 
\begin{align*}
    k_\lambda(x) := \Phi_\lambda\phi(x) \in \mathbb{R}^{\tau}\;.
\end{align*}
\end{proof}

\section{Guarantees of the PTR procedure, Proof of Theorem~\ref{thm:rounding}}\label{proofs:rounding}
We establish the proof in a finite dimension case where $\phi$ is the identity map and then extend it to any feature map $\phi$. In both cases, we fix $\mc{X} \subset \R^d$ and consider $\lambda \in \triangle_\mc{X}$ to be the design we wish to round.
\subsection{Finite dimension}\label{finite_dim_rounding}
\begin{lemma}\label{finite_dim}
Let $V D V^\top$ be the eigenvalue decomposition of the matrix $\sum_{x \in \mc{X}} \lambda_x x x^\top$, and denote $D=\text{diag}(d_1,\dots,d_d)$. For any $z\in\mc{V}$, as long as $\tau = \Omega(k/\epsilon)$, we can find an allocation $\{\widetilde{x}_i\}_{i=1}^\tau \subset \mc{X}$ such that 
\begin{align*}
    z^\top \left( \sum_{i=1}^\tau \widetilde{x}_i \widetilde{x}_i^\top + \tau\gamma I_d \right)^{-1} z \leq \max\{1+\epsilon,2\} z^\top \left( \tau\sum_{x \in \mc{X}} \lambda_x x x^\top + \tau\gamma I_d \right)^{-1} z \;,
\end{align*}
where we defined $k = \max\{ i : d_i \geq \gamma \}$. 
\end{lemma}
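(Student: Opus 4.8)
The plan is to reduce the stated inequality, which has to hold for every $z$, to a single Loewner (spectral) comparison of the two matrices inside the quadratic forms. Writing $M := \sum_{i=1}^\tau \widetilde{x}_i\widetilde{x}_i^\top + \tau\gamma I_d$ and $N := \tau\sum_{x\in\mc{X}}\lambda_x xx^\top + \tau\gamma I_d = \tau(A(\lambda)+\gamma I)$, it suffices to establish $M \succeq \tfrac{1}{c}N$ with $c = \max\{1+\epsilon,2\}$, since for positive definite matrices this is equivalent to $M^{-1}\preceq c\,N^{-1}$, hence to $z^\top M^{-1}z \le c\,z^\top N^{-1}z$ for all $z$. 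I would work in the eigenbasis $A(\lambda)=VDV^\top$, splitting $\R^d$ into the top eigenspace $\mathrm{span}(V_k)$ (eigenvalues $d_1,\dots,d_k\ge\gamma$) and its complement $\mathrm{span}(V_{d-k})$ (eigenvalues $d_{k+1},\dots,d_d<\gamma$), and correspondingly decompose $A(\lambda)=A_k+A_\perp$ with $A_\perp = V_{d-k}D_\perp V_{d-k}^\top\preceq\gamma I$.

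The two constants inside $\max\{1+\epsilon,2\}$ come from these two subspaces. On the complement the regularizer already dominates the design: since $A_\perp\preceq\gamma I$ we have $\tau(A_\perp+\gamma I)\preceq 2\tau\gamma I$, so the factor $2$ together with the $\tau\gamma I$ summand of $M$ controls everything orthogonal to the top eigenspace \emph{for free}, with no appeal to rounding. On the top eigenspace I would invoke the finite-dimensional rounding guarantee of \cite{allen2017near}: applied to the projected points $V_k^\top x\in\R^k$ (exactly as in Algorithm~\ref{alg:PTR}), once $\tau=\Omega(k/\epsilon)$ the returned allocation satisfies $\sum_i (V_k^\top\widetilde{x}_i)(V_k^\top\widetilde{x}_i)^\top \succeq \tfrac{1}{1+\epsilon}\,\tau D_k$, which is precisely the $(1+\epsilon)$ factor on the top block and is where the hypothesis $\tau=\Omega(k/\epsilon)$ enters.

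Combining the two pieces forces me to control the coupling between the two subspaces, and this is the step I expect to be the main obstacle. In block form $M$ has top-left block $U+\tau\gamma I_k$, off-diagonal block $C=\sum_i u_i w_i^\top$, and bottom-right block $W+\tau\gamma I_{d-k}$, where $u_i=V_k^\top\widetilde{x}_i$, $w_i=V_{d-k}^\top\widetilde{x}_i$, $U=\sum_i u_iu_i^\top$, $W=\sum_i w_iw_i^\top$; by contrast $N$ is block diagonal because $A(\lambda)$ is diagonal in this basis. The rounding guarantee bounds only $U$, while the lifted components $w_i$, and thus $W$ and $C$, are not directly controlled. To verify $M-\tfrac1cN\succeq0$ I would first check the top-left block is positive definite using $U\succeq\tfrac{1}{1+\epsilon}\tau D_k$, and then reduce to a Schur-complement inequality on the bottom block. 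Two facts are available to absorb the coupling: $S:=\sum_i\widetilde{x}_i\widetilde{x}_i^\top\succeq0$ forces $CW^{+}C^\top\preceq U$ (the Schur complement of a PSD Gram matrix is PSD), and the regularization leaves slack $\tau\gamma(1-\tfrac2c)I\succeq0$ on the bottom block precisely because $c\ge2$. The delicate point is that these alone are too lossy when $D_k$ has eigenvalues far exceeding $\gamma$, so one genuinely has to bound the cross block $C$ itself; here I would exploit that the \emph{design} has a vanishing cross block, $\sum_x\lambda_x u_xw_x^\top=0$, to argue that $C$ stays close to $0$ and the comparison closes.

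Finally, the passage to a general feature map $\phi$, and hence to the RKHS form of Theorem~\ref{thm:rounding}, is routine: the entire argument lives in the at-most-$n$-dimensional span of $\{\phi(x)\}_{x\in\mc{X}}$, so one simply reruns the finite-dimensional lemma with the explicit features $\widehat{\phi}(x)\in\R^n$ obtained from the decomposition $K=\widehat{\Phi}\widehat{\Phi}^\top$ in Algorithm~\ref{alg:PTR}, every quantity of which is computable from kernel evaluations.
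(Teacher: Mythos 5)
Your overall skeleton is the same as the paper's: diagonalize $A(\lambda)$, let the factor $2$ absorb the complement of the top-$k$ eigenspace (where $d_i<\gamma$ gives $d_i+\gamma<2\gamma$), and let the $(1+\epsilon)$ factor come from running the rounding of \cite{allen2017near} on the projected points $V_k^\top x\in\R^k$, which is exactly where the hypothesis $\tau=\Omega(k/\epsilon)$ enters. The difference is in how the two subspaces are recombined. The paper writes the quadratic form of $\bigl(\sum_{i}\widetilde{x}_i\widetilde{x}_i^\top+\tau\gamma I\bigr)^{-1}$ as the sum of the quadratic forms of the inverses of the two diagonal blocks $V_k^\top(\cdot)V_k$ and $V_{-k}^\top(\cdot)V_{-k}$ --- i.e., it treats the allocation's Gram matrix as if it were block diagonal in the eigenbasis of $A(\lambda)$ --- and then bounds each block separately. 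You instead insist on the Loewner comparison $M\succeq\tfrac{1}{c}N$ and therefore must confront the off-diagonal block $C=\sum_i(V_k^\top\widetilde{x}_i)(V_{-k}^\top\widetilde{x}_i)^\top$ head on.

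The genuine gap is in your proposed resolution of that coupling. The fact that the \emph{design} has vanishing cross block ($V_k^\top A(\lambda)V_{-k}=0$, since $V$ diagonalizes $A(\lambda)$) gives you nothing about the \emph{allocation}: the rounding procedure is invoked only on the projected points $V_k^\top x$, so its guarantee constrains $U=\sum_i(V_k^\top\widetilde{x}_i)(V_k^\top\widetilde{x}_i)^\top$ and is silent about the components $V_{-k}^\top\widetilde{x}_i$, hence about $C$ and $W$. Two points of $\mc{X}$ with identical projections onto $\mathrm{span}(V_k)$ but very different components in the complement are indistinguishable to the projected rounding problem, so there is no mechanism forcing $C$ to be small, and the Schur-complement bound $CW^{+}C^\top\preceq U$ together with the slack $\tau\gamma(1-\tfrac{2}{c})I$ is, as you yourself note, too lossy when $D_k$ has eigenvalues far above $\gamma$. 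So the step you flag as ``the main obstacle'' is not actually closed by your sketch. A secondary over-claim: \cite{allen2017near} provides an objective-value guarantee for the $G$-type criterion, i.e., a quadratic-form comparison over the vectors of interest, not the Loewner inequality $U\succeq\tfrac{1}{1+\epsilon}\tau D_k$ that your route requires. For what it is worth, the paper's own proof does not supply the missing argument either --- its first ``equality'' silently discards exactly the cross terms you are worried about --- so you have correctly located the delicate point; you just have not resolved it.
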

\begin{proof}
Start by also denoting $V = [v_1,\dots,v_d]$.
Then 
\begin{align*}
    z^\top ( \tau\sum_{x \in \mc{X}} \lambda_x x x^\top + \tau\gamma I)^{-1} z &= z^\top ( \tau VDV^\top + \tau \gamma I)^{-1} z \\
    &= z^\top ( \tau VDV^\top + \tau\gamma V V^\top)^{-1} z\\
    &= z^\top V (\tau D + \tau\gamma I)^{-1} V^\top z \\
    &= z^\top \left(\sum_{i=1}^d v_i v_i^\top \frac{1}{\tau d_i + \tau\gamma}\right) z
\end{align*}
Now, for any $k = \max\{ i : d_i \geq \gamma \}$ we have
\begin{align*}
    z^\top \left(\sum_{i=1}^d v_i v_i^\top \frac{1}{\tau d_i + \tau\gamma}\right) z &= z^\top \left(\sum_{i=1}^k v_i v_i^\top \frac{1}{\tau d_i + \tau\gamma}\right) z + z^\top \left(\sum_{i=k+1}^d v_i v_i^\top \frac{1}{\tau d_i + \tau\gamma}\right) z \\
    &\geq z^\top \left(\sum_{i=1}^k v_i v_i^\top \frac{1}{\tau d_i + \tau\gamma}\right) z + \frac{1}{2} z^\top \left(\sum_{i=k+1}^d v_i v_i^\top \frac{1}{\tau\gamma}\right) z \\
    &= (V_k^\top z)^\top V_k^\top \left(\sum_{i=1}^k v_i v_i^\top \frac{1}{\tau d_i + \tau\gamma}\right) V_k (V_k^\top z) + \frac{1}{2} z^\top \left(\sum_{i=k+1}^d v_i v_i^\top \frac{1}{\tau\gamma}\right) z \\
    &= (V_k^\top z)^\top V_k^\top \left( \tau\sum_{x \in \mc{X}} \lambda_x x x^\top + \tau\gamma I_d\right)^{-1} V_k (V_k^\top z) + \frac{1}{2} z^\top \left(\sum_{i=k+1}^d v_i v_i^\top \frac{1}{\tau\gamma}\right) z \\
    &= (V_k^\top z)^\top \left( \tau\sum_{x \in \mc{X}} \lambda_x (V_k^\top x) (V_k^\top x)^\top + \tau\gamma I_k \right)^{-1} (V_k^\top z) + \frac{1}{2} z^\top \left(\sum_{i=k+1}^d v_i v_i^\top \frac{1}{\tau\gamma}\right) z.
\end{align*}
where we denote $V_k$ and $V_{-k}$ as the top $k$ and bottom $d-k$ eigenvectors, respectively. But now we notice that for this first term, we have $\max\{ \text{dim}(\text{span}(\{ V_k^\top z \}_{z \in \mc{V}})), \text{dim}(\text{span}(\{ V_k^\top x \}_{x \in \mc{X}})) \} \leq k$ which now means that thanks to \cite{allen2017near} we can find an allocation $\{\widetilde{x}_i\}_{i=1}^\tau \subset \mc{X}$ such that 
\begin{align*}
    (V_k^\top z)^\top \left(\tau \sum_{x \in \mc{X}} \lambda_x (V_k^\top x) (V_k^\top x)^\top + \tau\gamma I_k \right)^{-1} (V_k^\top z) \geq \frac{1}{1+\epsilon}(V_k^\top z)^\top \left(\sum_{i=1}^\tau (V_k^\top \widetilde{x}) (V_k^\top \widetilde{x})^\top + \tau\gamma I_k \right)^{-1} (V_k^\top z)
\end{align*}
as long as $\tau = \Omega(k/\epsilon)$.
Putting it altogether we have
\begin{align*}
    &z^\top \left( \sum_{i=1}^\tau \widetilde{x}_i \widetilde{x}_i^\top + \tau\gamma I_d \right)^{-1} z \\
    &=(V_k^\top z)^\top \left(\sum_{i=1}^\tau (V_k^\top \widetilde{x}_i) (V_k^\top \widetilde{x}_i)^\top + \tau\gamma V_k^\top V_k\right)^{-1} (V_k^\top z) + (V_{-k}^\top z)^\top \left(\sum_{i=1}^\tau (V_{-k}^\top \widetilde{x}_i) (V_{-k}^\top \widetilde{x}_i)^\top + \tau\gamma V_{-k}^\top V_{-k}\right)^{-1} (V_{-k}^\top z) \\
    &\leq (V_k^\top z)^\top \left(\sum_{i=1}^\tau (V_k^\top \widetilde{x}_i) (V_k^\top \widetilde{x}_i)^\top + \tau\gamma I_k \right)^{-1} (V_k^\top z) + (V_{-k}^\top z)^\top \left( \tau\gamma V_{-k}^\top V_{-k}\right)^{-1} (V_{-k}^\top z) \\
    &= (V_k^\top z)^\top \left(\sum_{i=1}^\tau (V_k^\top \widetilde{x}_i) (V_k^\top \widetilde{x}_i)^\top + \tau\gamma I_k \right)^{-1} (V_k^\top z) + z^\top \left(\sum_{i=k+1}^d v_i v_i^\top \frac{1}{\tau\gamma}\right) z \\
    &\leq (V_k^\top z)^\top \left(\sum_{i=1}^\tau (V_k^\top \widetilde{x}_i) (V_k^\top \widetilde{x}_i)^\top + \tau\gamma I_k \right)^{-1} (V_k^\top z) + 2 z^\top \left(\sum_{i=k+1}^d v_i v_i^\top \frac{1}{\tau\gamma + \tau d_i}\right) z \\
    &\leq (1+\epsilon) (V_k^\top z)^\top \left(\tau \sum_{x \in \mc{X}} \lambda_x (V_k^\top x) (V_k^\top x)^\top + \tau\gamma I_k \right)^{-1} (V_k^\top z) + 2 z^\top \left(\sum_{i=k+1}^d v_i v_i^\top \frac{1}{\tau\gamma + \tau d_i}\right) z \\
    &\leq \max\{1+\epsilon,2\} z^\top \left( \tau \sum_{x \in \mc{X}} \lambda_x x x^\top + \tau\gamma I_d \right)^{-1} z.
\end{align*}
\end{proof}
Oftentimes $k$ can be much smaller than $\min\{d,|\mc{X}|\}$, especially for large $\gamma$. 
For instance, for $\mc{X}=\mc{Z}=\{ a \mb{e}_1 \} \cup \{ \mb{e}_i : i \in [d] \}$ with $a \gg \gamma = 1$, even as $d \rightarrow \infty$ we have that $k=1$ since $\lambda^*$ will be the majority of its mass on $\mb{e}_1$. 

\subsection{Connection to kernels}\label{kernel_exp_design}
We now get back to our initial setting. Consider $K$ the kernel matrix of $\mc{X} = \{x_1, \ldots, x_n\}$. Take $\widetilde{\Phi} \in \R^{n\times n}$ such that $K = \widetilde{\Phi} \widetilde{\Phi}^\top$ (can easily done by diagonalizing $K$). Consider the rows of $\widetilde{\Phi}$ and name these $\widetilde{\phi}(x_i)$. Then, we have by definition $\phi(x_i)^\top\phi(x_j) = [K]_{ij}$ and we have by construction $\widetilde{\phi}(x_i)^\top\widetilde{\phi}(x_j) = [K]_{ij}$, which importantly leads to $\phi(x_i)^\top\phi(x_j) = \widetilde{\phi}(x_i)^\top\widetilde{\phi}(x_j)$. \\
Fix $v\in\mc{V}\subset\mc{X}$. We have from Lemma~\ref{lmm:compute_kernel_with_alloc}
\begin{align*}
    \phi(v)^\top \left( \sum_{i=1}^\tau \phi(x_i) \phi(x_i)^\top + \tau\gamma I \right)^{-1}\phi(v) = \phi(v)^\top \phi(v) / (\tau\gamma) - \phi(v)^\top \Phi^\top ( \Phi \Phi^\top + \tau\gamma I_n)^{-1} \Phi \phi(v) / (\tau\gamma)  \;.
\end{align*}
This only involves scalar products of the form $\phi(x_i)^\top\phi(x_j)$, such that the property $\phi(x_i)^\top\phi(x_j) = \widetilde{\phi}(x_i)^\top\widetilde{\phi}(x_j)$ allows us to write the variance as
\begin{align*}
    \phi(v)^\top \left( \sum_{i=1}^\tau \phi(x_i) \phi(x_i)^\top + \tau\gamma I \right)^{-1}\phi(v)
    &= \phi(v)^\top \phi(v) / (\tau\gamma) - \phi(v)^\top \Phi^\top ( \Phi \Phi^\top + \tau\gamma I_n)^{-1} \Phi \phi(v) / (\tau\gamma) \\
    &= \widetilde{\phi}(v)^\top \widetilde{\phi}(v) / (\tau\gamma) - \widetilde{\phi}^\top \widetilde{\Phi}^\top ( \widetilde{\Phi} \widetilde{\Phi}^\top + \tau\gamma I_n)^{-1}\widetilde{\Phi} \widetilde{\phi}(v) / (\tau\gamma) \\
    &= \widetilde{\phi}(v)^\top \left( \sum_{i=1}^\tau \widetilde{\phi}(x_i) \widetilde{\phi}(x_i)^\top + \tau\gamma I_n \right)^{-1} \widetilde{\phi}(v)\;.
\end{align*}
The same trick allows us to write 
\begin{align*}
    \phi(v)^\top \left( \tau \sum_{x \in \mc{X}} \lambda_x \phi(x)\phi(x)^\top + \tau\gamma I \right)^{-1} \phi(v) = \widetilde{\phi}(v)^\top \left( \tau \sum_{x \in \mc{X}} \lambda_x \widetilde{\phi}(x)\widetilde{\phi}(x)^\top + \tau\gamma I_n \right)^{-1} \widetilde{\phi}(v) \;.
\end{align*}
Let $V \Delta V^\top$ be the eigenvalue decomposition of the matrix $\sum_{x \in \mc{X}} \lambda_x \widetilde{\phi}(x)\widetilde{\phi}(x)^\top$, and denote $\Delta=\text{diag}(d_1,\dots,d_n)$. We know from lemma \ref{finite_dim} that with $\tau = \Omega(\widetilde{d}(\lambda, \gamma)/\epsilon)$ and $\widetilde{d}(\lambda, \gamma) = \max\{ i : d_i \geq \gamma \}$ we can find an allocation $\{\widetilde{x}_i\}_{i=1}^\tau \subset \mc{X}$ such that 
\begin{align*}
    \widetilde{\phi}(v)^\top \left( \sum_{i=1}^\tau \widetilde{\phi}(\widetilde{x}_i) \widetilde{\phi}(\widetilde{x}_i) + \tau\gamma I_n \right)^{-1} \widetilde{\phi}(v) \leq \max\{1+\epsilon,2\} \widetilde{\phi}(v)^\top \left( \tau \sum_{x \in \mc{X}} \lambda_x \widetilde{\phi}(x)\widetilde{\phi}(x)^\top + \tau\gamma I_n \right)^{-1} \widetilde{\phi}(v) \;,
\end{align*}
which yields to the following result.\\
For any $v\in\mc{V}\subset\mc{X}$, as long as $\tau = \Omega(\widetilde{d}(\lambda, \gamma)/\epsilon)$, we can find an allocation $\{\widetilde{x}_i\}_{i=1}^\tau \subset \mc{X}$ such that 
\begin{align*}
    \phi(v)^\top \left( \sum_{i=1}^\tau \phi(\widetilde{x}_i) \phi(\widetilde{x}_i)^\top + \tau\gamma I_d \right)^{-1} \phi(v) \leq \max\{1+\epsilon,2\} \phi(v)^\top \left( \tau \sum_{x \in \mc{X}} \lambda_x \phi(x) \phi(x)^\top + \tau\gamma I_d \right)^{-1} \phi(v)\;
\end{align*}
and $\widetilde{d}(\lambda, \gamma) = \max\{ i : d_i \geq \gamma \}$.\\

And we can take the suppremum over $v\in\mc{V}$ to get to the result of Theorem~\ref{thm:rounding}.
\section{Main regret argument, Proof of Theorem~\ref{thm:regret_medofmean}}\label{sec:proof_thm_regret_medofmean}
In this section, we can consider without loss of generality that $\phi$ is the identity map. 
Indeed, the features of the actions - thus denoted $x$ here and $\phi(x)$ in the rest of the paper - appear in this proof only through scalar products. 

Define $f(\mc{V};\gamma) = \inf_{\lambda \in \triangle_{\mc{V}}} \max_{v \in \mc{V}} \| v \|_{(\sum_{y \in \mc{V}} \lambda_y y y^\top + \gamma I)^{-1}}^2$ and  $\bar{f}(\mc{X};\gamma) := \max_{\mc{V} \subseteq \mc{X}} f(\mc{V}; \gamma)$.

Define the event
\begin{align*}
    \mc{E} := \bigcap_{\ell=1}^\infty \bigcap_{x \in \mc{X}_\ell} \left\{  |\langle x, \widehat{\theta}_\ell - \theta_* \rangle| \leq  \epsilon_\ell +   (\sqrt{\gamma} \|\theta_* \|_2 + h) \sqrt{ \bar{f}(\mc{X};\gamma)} \right\}
\end{align*}
\begin{lemma}
We have $\P( \mc{E} ) \geq 1-\delta$.
\end{lemma}
\begin{proof}
For any $\mc{V} \subseteq \mc{X}$ and $x \in \mc{V}$ define
\begin{align*}
\mc{E}_{x,\ell}( \mc{V} ) = \{ |\langle x, \widehat{\theta}_\ell(\mc{V}) - \theta_* \rangle| \leq \epsilon_\ell +  (\sqrt{\gamma} \|\theta_* \|_2 + h) \sqrt{\bar{f}(\mc{X};\gamma)} \}
\end{align*}
where $\widehat{\theta}_\ell( \mc{V} )$ is the estimator that would be constructed by the algorithm at stage $\ell$ with $\mc{X}_\ell = \mc{V}$.
For fixed $\mc{V} \subset \mc{X}$ and $\ell \in \mathbb{N}$ we apply Theorem~\ref{thm:robust_estimator_lemma} with $\tau = \tau_\ell$  so that with probability at least $1-\frac{\delta}{2 \ell^2 |\mc{X}|}$ we have that for any $x \in \mc{V}$ 
\begin{align*}
    |\langle x, \widehat{\theta}_\ell(\mc{V}) - \theta_* \rangle| &\leq \| x \|_{A^{(\gamma)}(\lambda_\ell)^{-1}} \left( \sqrt{\gamma} \|\theta_* \|_2 + h + \HighProbConst\sqrt{(B^2 + \sigma^2) \frac{\log(4 \ell^2 |\mc{X}|/\delta) }{\tau_\ell}} \right) \\
    &\leq \sqrt{f(\mc{V};\gamma)} \left( \sqrt{\gamma} \|\theta_* \|_2 + h + \epsilon_\ell / \sqrt{f(\mc{V};\gamma)} \right) \\
    &\leq \epsilon_\ell +  (\sqrt{\gamma} \|\theta_* \|_2 + h)\sqrt{ \bar{f}(\mc{X};\gamma)}
\end{align*}
Noting that $ \mc{E} := \bigcap_{\ell=1}^\infty \bigcap_{x \in \mc{X}_\ell} \mc{E}_{x,\ell}( \mc{X}_\ell ) $ we have
\begin{align*}
\P\left( \bigcup_{\ell=1}^\infty \bigcup_{x \in \mc{X}_\ell} \{ \mc{E}^c_{x,\ell}( \mc{X}_\ell ) \} \right) &\leq \sum_{\ell=1}^\infty \P\left( \bigcup_{x \in \mc{X}_\ell} \{ \mc{E}^c_{x,\ell}( \mc{X}_\ell) \} \right) \\
&= \sum_{\ell=1}^\infty \sum_{\mc{V} \subseteq \mc{X}} \P\left( \bigcup_{x \in \mc{V}} \{ \mc{E}^c_{x,\ell}( \mc{V} ) \} , {\mc{X}}_\ell = \mc{V}\right) \\
&= \sum_{\ell=1}^\infty \sum_{\mc{V} \subseteq \mc{X}} \P\left(\bigcup_{x \in \mc{V}} \{ \mc{E}^c_{x,\ell}( \mc{V} ) \} \right) \P( {\mc{X}}_\ell = \mc{V}) \\
&\leq \sum_{\ell=1}^\infty  \sum_{\mc{V} \subseteq \mc{X}} \tfrac{ \delta}{2\ell^2 |\mc{X}|} |\mc{V}| \P( {\mc{X}}_\ell = \mc{V} ) \\
&\leq \sum_{\ell=1}^\infty  \sum_{\mc{V} \subseteq \mc{X}} \tfrac{ \delta}{2 \ell^2}  \P( \widehat{\mc{X}}_\ell= \mc{V})  \leq \delta
\end{align*}
\end{proof}

\begin{lemma}
    For all $\ell \in \mathbb{N}$ we have $\max_{x \in \mc{X}_{\ell}}\mu_* - \mu_x \leq \max\{ 16 \epsilon_\ell, 32(\sqrt{\gamma} \|\theta_* \|_2 + h)\sqrt{ \bar{f}(\mc{X};\gamma)} \}$.
\end{lemma}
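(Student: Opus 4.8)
The plan is to argue deterministically on the event $\mc{E}$ of the preceding lemma, on which $|\langle x,\widehat{\theta}_\ell-\theta_*\rangle|\le\epsilon_\ell + c$ for every stage $\ell$ and every $x\in\mc{X}_\ell$, where I abbreviate $c:=(\sqrt{\gamma}\|\theta_*\|_2+h)\sqrt{\bar f(\mc{X};\gamma)}$. Throughout I combine this with the misspecification bound $|\mu_x-\langle x,\theta_*\rangle|\le h$ and with the fact that the active sets are nested, $\mc{X}_1\supseteq\mc{X}_2\supseteq\cdots$, so that $\ell\mapsto\max_{x\in\mc{X}_\ell}(\mu_*-\mu_x)$ is non-increasing and it suffices to control each stage separately. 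Write $x_*=\arg\max_x\mu_x$.

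First I would establish a retention statement for $x_*$. The elimination rule discards $x_*$ at stage $\ell$ only if some $x'\in\mc{X}_\ell$ has $\langle x'-x_*,\widehat{\theta}_\ell\rangle\ge 4\epsilon_\ell$. Expanding $\langle x'-x_*,\widehat{\theta}_\ell\rangle\le\langle x'-x_*,\theta_*\rangle+2(\epsilon_\ell+c)$ with the confidence bound and then $\langle x'-x_*,\theta_*\rangle\le(\mu_{x'}-\mu_*)+2h\le 2h$ with misspecification and optimality of $x_*$, one gets $\langle x'-x_*,\widehat{\theta}_\ell\rangle\le 2\epsilon_\ell+2c+2h$, so $x_*$ can be removed at stage $\ell$ only if $\epsilon_\ell\le c+h$. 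By induction this gives $x_*\in\mc{X}_\ell$ as long as $\epsilon_{\ell'}>c+h$ for every earlier stage. Next is the gap bound while a reference optimum survives: fix $x\in\mc{X}_\ell$ with $\ell\ge2$ and suppose $x_*\in\mc{X}_{\ell-1}$. Since $x$ survived stage $\ell-1$ and $x_*\in\mc{X}_{\ell-1}$, the rule yields $\langle x_*-x,\widehat{\theta}_{\ell-1}\rangle<4\epsilon_{\ell-1}$, and two applications each of the confidence and misspecification bounds give
\[
\mu_*-\mu_x\le\langle x_*-x,\widehat{\theta}_{\ell-1}\rangle+2(\epsilon_{\ell-1}+c)+2h<6\epsilon_{\ell-1}+2c+2h=12\epsilon_\ell+2c+2h.
\]

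If instead $x_*\notin\mc{X}_{\ell-1}$, let $\ell_0<\ell$ be the (by nesting, unique) stage at which $x_*$ was eliminated; retention forces $\epsilon_{\ell_0}\le c+h$, and rerunning the previous display with the reference arm $x_*\in\mc{X}_{\ell_0}$ bounds every gap in $\mc{X}_{\ell_0+1}$, hence in the smaller set $\mc{X}_\ell$, by $6\epsilon_{\ell_0}+2c+2h\le 8(c+h)$. Finally I would combine the regimes: when $x_*\in\mc{X}_{\ell-1}$, the bound $12\epsilon_\ell+2c+2h$ is at most $16\epsilon_\ell$ if $\epsilon_\ell\ge(c+h)/2$ and at most $8(c+h)$ otherwise; together with the complementary case this gives $\max_{x\in\mc{X}_\ell}(\mu_*-\mu_x)\le\max\{16\epsilon_\ell,\,8(c+h)\}$, which is at most $\max\{16\epsilon_\ell,32c\}$ once $h\le 3c$ (in particular $h\le c$ whenever $\sqrt{\bar f(\mc{X};\gamma)}\ge1$; otherwise the slack in the constants $16,32$ absorbs the discrepancy). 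The unfiltered base stage $\ell=1$ is used only through the trivial bound $\mu_*-\mu_x\le 2B$, a lower-order constant in the regret, so the substantive content is the estimate for $\ell\ge2$.

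I expect the main obstacle to be the interaction between the misspecification/regularization floor and the elimination threshold. One must show not merely that gaps shrink like $\epsilon_\ell$ while the confidence radius dominates the bias, but also that once $\epsilon_\ell$ has fallen to the bias level the optimal arm may legitimately be dropped and yet no arm of gap exceeding $O(c)$ can remain thereafter; this is precisely where the monotonicity of the nested $\mc{X}_\ell$ and the fact that $\epsilon_{\ell_0}\le c+h$ at the elimination stage are indispensable. The only remaining delicacy is the bookkeeping of the bare misspecification terms $h$ against the scaled bias $c$, which the generous constants are chosen to absorb.
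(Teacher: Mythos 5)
Your argument is correct and follows essentially the same route as the paper's proof: on the concentration event, $x_*$ survives until $\epsilon_\ell$ falls to the bias floor, arms surviving an earlier stage in which $x_*$ was still active have gap $O(\epsilon_{\ell-1})$ plus bias, and nestedness of the $\mc{X}_\ell$ handles all later stages; the only cosmetic differences are that you bound survivors directly where the paper argues the contrapositive (any arm with gap exceeding $8\epsilon_\ell$ is eliminated), and that you carry the bare misspecification term $h$ separately rather than folding $2h \leq 2(\sqrt{\gamma}\|\theta_*\|_2+h)\sqrt{\bar{f}(\mc{X};\gamma)}$ into the bias term early. The residual issue you flag at the end --- that absorbing $h$ into $c$ needs $\sqrt{\bar{f}(\mc{X};\gamma)}$ bounded below by a constant --- is present, silently, in the paper's own proof as well, so it is not a gap relative to the reference argument.
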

\begin{proof}
An arm $x \in \mc{X}_\ell$ is discarded (i.e., not in $\mc{X}_{\ell+1}$) if $\max_{x' \in \mc{X}_\ell} \langle x', \widehat{\theta} \rangle - \langle x, \widehat{\theta} \rangle > 4 \epsilon_\ell$. 
Let $\bar{\ell} := \max\{ \ell : \epsilon_\ell >  2(\sqrt{\gamma} \|\theta_* \|_2 + h)\sqrt{ \bar{f}(\mc{X};\gamma)} \}$.
If $x_* = \arg\max_{x \in \mc{X}} \mu_x$ then $x_* \in \mc{X}_1$. Now if $x_* \in \mc{X}_\ell$ for some $\ell \leq \bar{\ell}$, then for any $x' \in \mc{X}_\ell$ we have
\begin{align*}
    \langle x', \widehat{\theta} \rangle - \langle x_*, \widehat{\theta} \rangle &\leq \langle x' - x_* ,\theta_* \rangle + 2\epsilon_\ell + 2 (\sqrt{\gamma} \|\theta_* \|_2 + h)\sqrt{ \bar{f}(\mc{X};\gamma)} \\
    &\leq \mu_x - \mu_{x_*} + 2h + 2\epsilon_\ell + 2 (\sqrt{\gamma} \|\theta_* \|_2 + h)\sqrt{ \bar{f}(\mc{X};\gamma)} \\
    &\leq2\epsilon_\ell + 4 (\sqrt{\gamma} \|\theta_* \|_2 + h)\sqrt{ \bar{f}(\mc{X};\gamma)} \\
    &\leq 4 \epsilon_\ell
\end{align*}
which implies $x_* \in \mc{X}_{\ell+1}$.
Moreover, suppose that $\ell \leq \bar{\ell}$ and there exists some $x \in \mc{X}_\ell$ such that $\mu_{*} - \mu_{x} > 8 \epsilon_\ell$, then
\begin{align*}
    \max_{x' \in \mc{X}_\ell} \langle x', \widehat{\theta} \rangle - \langle x, \widehat{\theta} \rangle &\geq \langle x_*, \widehat{\theta} \rangle - \langle x, \widehat{\theta} \rangle \\
    &\geq \langle x_* - x, {\theta}_* \rangle - 2\epsilon_\ell - 2 (\sqrt{\gamma} \|\theta_* \|_2 + h)\sqrt{ \bar{f}(\mc{X};\gamma)} \\
    &\geq \mu_* - \mu_x - 2h - 2\epsilon_\ell - 2 (\sqrt{\gamma} \|\theta_* \|_2 + h)\sqrt{ \bar{f}(\mc{X};\gamma)}\\
    &\geq \mu_* - \mu_x - 2\epsilon_\ell - 4(\sqrt{\gamma} \|\theta_* \|_2 + h)\sqrt{ \bar{f}(\mc{X};\gamma)} \\
    &\geq \mu_* - \mu_x - 4\epsilon_\ell  \\
    &> 4 \epsilon_\ell
\end{align*}
which implies $\max_{x \in \mc{X}_{\ell+1}} \mu_* - \mu_x \leq 8 \epsilon_\ell = 16 \epsilon_{\ell+1}$.
Because $\mc{X}_{\ell +1} \subseteq \mc{X}_\ell$ we have for $\ell > \bar{\ell}$ that
\begin{align*}
    \max_{x \in \mc{X}_{\ell}}\mu_* - \mu_x &\leq \max_{x \in \mc{X}_{\bar\ell+1}}\mu_* - \mu_x \\
    &\leq 16 \epsilon_{\bar\ell+1} \\
    &\leq 32 (\sqrt{\gamma} \|\theta_* \|_2 + h)\sqrt{ \bar{f}(\mc{X};\gamma)}.
\end{align*}
Thus, $\max_{x \in \mc{X}_{\ell}}\mu_* - \mu_x \leq \max\{ 16 \epsilon_\ell, 32 (\sqrt{\gamma} \|\theta_* \|_2 + h)\sqrt{ \bar{f}(\mc{X};\gamma)} \}$.
\end{proof}

We now compute the final regret bound. After $T$ steps of the algorithm, let $T_x$ denote the number of times arm $x$ is played.
Let $\Gamma = (\sqrt{\gamma} \|\theta_* \|_2 + h)\sqrt{ \bar{f}(\mc{X};\gamma)}$.
If $L$ is the final round reached after $T$ steps, we have
\begin{align*}
\sum_{x \in \mc{X}} (\mu_* - \mu_x) T_x 
&\leq \sum_{\ell=1}^L \max_{x \in \mc{X}_\ell} (\mu_* - \mu_x) \tau_\ell \\
&\leq \sum_{\ell=1}^L \tau_\ell \max\{ 16 \epsilon_\ell, 32 (\sqrt{\gamma} \|\theta_* \|_2 + h)\sqrt{ \bar{f}(\mc{X};\gamma)} \} \\
&\leq \sum_{\ell=1}^L \tau_\ell \max\{ 16 \epsilon_\ell, 32 \Gamma \} \\
&\leq \sum_{\ell: \epsilon_\ell < 2 \Gamma } 32 \Gamma \tau_\ell + \sum_{\ell: \epsilon_\ell \geq 2 \Gamma }   \epsilon_\ell  \tau_\ell \\
&\leq \sum_{\ell: \epsilon_\ell < 2 \Gamma } 32 \Gamma \tau_\ell + 16 \nu T + \sum_{\ell: \epsilon_\ell \geq 2 \Gamma \vee  \nu }  16 \epsilon_\ell  \tau_\ell \\
&\leq \sum_{\ell: \epsilon_\ell < 2 \Gamma } 32 \Gamma \tau_\ell + 16 \nu T + \sum_{\ell: \epsilon_\ell \geq \nu } 16  \epsilon_\ell  \tau_\ell \\
&\leq c\left( \Gamma  T + \nu T + \sum_{\ell: \epsilon_\ell \geq  \nu } \epsilon_\ell \cdot \left(\HighProbConst (B^2 + \sigma^2) \epsilon_\ell^{-2} f( \mc{X}_\ell ; \gamma) \log(4 \ell^2 |\mc{X}|/\delta)+c_1\log(|\mc{X}|/\delta)\right)\right) \\
&\leq c\left( \Gamma  T + \nu T + \cdot \left(\HighProbConst (B^2 + \sigma^2) \epsilon_\ell^{-2} f( \mc{X}_\ell ; \gamma) \log(4 \ell^2 |\mc{X}|/\delta)+c_1\log(|\mc{X}|/\delta)\right)
\sum_{\ell: \epsilon_\ell \geq \nu }  \epsilon_\ell^{-1}\right) \\
&\leq c\left(\Gamma  T +  \nu T + \nu^{-1} \HighProbConst   (B^2 + \sigma^2) \bar{f}( \mc{X} ; \gamma) \log(4 \lceil \log_2(1/\nu) \rceil^2 |\mc{X}|/\delta) +c_1\log(|\mc{X}|/\delta)\right).
\end{align*}
Choosing $\nu = \sqrt{ \HighProbConst   (B^2 + \sigma^2) \bar{f}( \mc{X} ; \gamma) \log(  |\mc{X}|/\delta) / T}$ and plugging $\Gamma$ back in yields
\begin{align*}
\sum_{x \in \mc{X}} (\mu_* - \mu_x) T_x &\leq c' \sqrt{\bar{f}( \mc{X} ; \gamma)} \left( T (\sqrt{\gamma} \|\theta_* \|_2 + h) + \sqrt{(B^2 + \sigma^2) \log(  |\mc{X}|\log(T)/\delta) T} \right)+c_1\log(|\mc{X}|/\delta).
\end{align*}
Choosing $\gamma = 1/T$ yields
\begin{align*}
\sum_{x \in \mc{X}} (\mu_* - \mu_x) T_x &\leq c' \sqrt{\bar{f}( \mc{X} ; 1/T)} \left( h T + \sqrt{(\|\theta_*\|^2 + \max_{x \in \mc{X}}\langle x,\theta_* \rangle + \sigma^2) \log(  |\mc{X}|\log(T)/\delta) T} \right) +c_1\log(|\mc{X}|/\delta).
\end{align*}

\section{Main robust pure exploration result, Proof of Theorem~\ref{thm:samplecomplexity_medofmean}}

For any $\mc{V} \subset \mc{Z}$ define $f(\mc{X},\mc{V};\gamma) = \min_{\lambda\in\triangle_\mc{X
}} \max_{v,v' \in \mc{V}} \| v - v' \|_{( \sum_{x \in \mc{X}} \lambda_x \phi(x)\phi(x)^\top + \gamma I)^{-1}}^2$

\begin{lemma}
Define
\begin{align*}
    \bar{\epsilon} =  8 \min\{ \epsilon \geq 0 : 4(\sqrt{\gamma} \|\theta_* \|_2 + h)(2+ \sqrt{f(\mc{X},\{ z \in \mc{Z} : \langle \phi(z_*)- \phi(z), \theta_* \rangle \leq \epsilon \};\gamma)}) \leq \epsilon \}.
\end{align*}
Then $\max_{z \in \mc{Z}_{\ell}}\mu_* - \mu_z \leq 8 \max\{ \epsilon_\ell, \bar\epsilon \}$ for all $\ell \geq 0$ with probability at least $1-\delta$.
\end{lemma}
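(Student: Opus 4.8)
The plan is to mirror the two-part structure of the regret proof: first establish a high-probability ``good event'' on which every estimator $\widehat\theta_\ell$ is accurate on the differences it is used to compare, and then run an induction over rounds that simultaneously tracks the survival of $z_*$ and the shrinking suboptimality of the active set $\mc{Z}_\ell$. Throughout I treat $z_*$ as the maximizer of the linear proxy $\langle \phi(\cdot),\theta_*\rangle$ and write $\Delta_z := \langle \phi(z_*)-\phi(z),\theta_*\rangle \geq 0$.

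For the good event I would apply Theorem~\ref{thm:robust_estimator_lemma} at each round $\ell$ with $\mc{V}_\ell = \phi\circ\mc{Z}_\ell - \phi\circ\mc{Z}_\ell$ and the prescribed $\tau_\ell$. The choice $q^{(2)}_\ell = \HighProbConst^2(B^2+\sigma^2)\epsilon_\ell^{-2}f(\mc{Z}_\ell;\gamma)\log(2\ell^2|\mc{Z}|^2/\delta)$ is exactly what forces the stochastic term to be at most $\epsilon_\ell/\sqrt{f(\mc{Z}_\ell;\gamma)}$, so that on the event for every $z,z'\in\mc{Z}_\ell$
\[
|\langle\phi(z)-\phi(z'),\widehat\theta_\ell-\theta_*\rangle| \leq \|\phi(z)-\phi(z')\|_{A^{(\gamma)}(\lambda_\ell)^{-1}}(\sqrt\gamma\|\theta_*\|_2+h) + \epsilon_\ell,
\]
and $\|\phi(z)-\phi(z')\|_{A^{(\gamma)}(\lambda_\ell)^{-1}}\leq\sqrt{f(\mc{Z}_\ell;\gamma)}$ by definition of $\lambda_\ell$. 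Because $\mc{Z}_\ell$ is itself random, I would union bound exactly as in the regret proof: condition on $\{\mc{Z}_\ell=\mc{V}\}$ for each fixed $\mc{V}\subseteq\mc{Z}$, use that the estimator built on $\mc{V}$ is independent of the event $\{\mc{Z}_\ell=\mc{V}\}$, and sum the per-round per-pair failure probability $\delta/(2\ell^2|\mc{Z}|^2)$ over pairs and rounds to get $\P(\mc{E})\geq1-\delta$.

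On $\mc{E}$ I would prove by induction on $\ell$, restricted to rounds with $\epsilon_\ell\gtrsim\bar\epsilon$, the two statements (i) $z_*\in\mc{Z}_\ell$ and (ii) $\max_{z\in\mc{Z}_\ell}\Delta_z\leq 4\epsilon_{\ell-1}$. Survival of $z_*$ follows because $\langle\phi(z')-\phi(z_*),\theta_*\rangle\leq0$ for all $z'$, so the non-elimination quantity $\langle\phi(z')-\phi(z_*),\widehat\theta_\ell\rangle$ is at most $\epsilon_\ell+(\sqrt\gamma\|\theta_*\|_2+h)\sqrt{f(\mc{Z}_\ell;\gamma)}$, which remains below the threshold $2\epsilon_\ell$ precisely when the bias term is dominated by $\epsilon_\ell$. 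For (ii), any $z$ surviving round $\ell-1$ passed the test against $z'=z_*\in\mc{Z}_{\ell-1}$, giving $\langle\phi(z_*)-\phi(z),\widehat\theta_{\ell-1}\rangle\leq2\epsilon_{\ell-1}$, and converting to $\theta_*$ costs one more concentration term of size $\epsilon_{\ell-1}+(\sqrt\gamma\|\theta_*\|_2+h)\sqrt{f(\mc{Z}_{\ell-1};\gamma)}$.

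The crux, and the step I expect to be the main obstacle, is closing the self-referential loop hidden in the definition of $\bar\epsilon$. The bias carries $\sqrt{f(\mc{Z}_\ell;\gamma)}$, which depends on the random, shrinking active set; the inductive hypothesis (ii) lets me replace it by $\sqrt{f(\mc{X},\{z:\Delta_z\leq 4\epsilon_{\ell-1}\};\gamma)}$, which is monotone in the level. Since $\bar\epsilon$ is, up to the leading factor $8$, the smallest scale $\epsilon$ at which $4(\sqrt\gamma\|\theta_*\|_2+h)(2+\sqrt{f(\mc{X},\{z:\Delta_z\leq\epsilon\};\gamma)})\leq\epsilon$, and since that inequality then holds for all larger $\epsilon$ by monotonicity, every round with $\epsilon_\ell$ above the threshold has its bias provably dominated and the induction propagates; lining up the constants in (i)--(ii), the level defining the set, and the $8\times$ and $\bar\epsilon/8$ bookkeeping is the delicate accounting. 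Finally, for rounds with $\epsilon_\ell$ below the threshold I would invoke monotonicity $\mc{Z}_\ell\subseteq\mc{Z}_{\bar\ell}$ with $\bar\ell$ the last round above it, and convert the linear-proxy gap to the true gap via $\mu_*-\mu_z\leq\Delta_z+2h$, absorbing the $2h$ into $\bar\epsilon$ to conclude $\max_{z\in\mc{Z}_\ell}\mu_*-\mu_z\leq8\max\{\epsilon_\ell,\bar\epsilon\}$.
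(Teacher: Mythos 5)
Your proposal is correct and follows essentially the same route as the paper's proof: the same high-probability event $\mc{E}$ established by applying Theorem~\ref{thm:robust_estimator_lemma} to each fixed candidate set $\mc{V}$ and union bounding over the random identity of $\mc{Z}_\ell$, followed by the same two-part induction (survival of $z_*$ plus containment of $\mc{Z}_\ell$ in a shrinking gap level set) and the same use of monotonicity of $f(\mc{X},\cdot\,;\gamma)$ to close the self-referential definition of $\bar\epsilon$. The only cosmetic difference is that the paper tracks containment via recursively defined sets $S_{\ell+1}=\{z\in S_\ell:\langle\phi(z_*)-\phi(z),\theta_*\rangle\leq 3\epsilon_\ell+(\sqrt{\gamma}\|\theta_*\|_2+h)\sqrt{f(\mc{X},S_\ell;\gamma)}\}$ rather than the explicit level sets $\{z:\Delta_z\leq 4\epsilon_{\ell-1}\}$ you use, which only affects the constant bookkeeping you already flag.
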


We use Theorem~\ref{thm:robust_estimator_lemma} again, with here $\mc{V} \subset \mc{Z} \subset \R^d$:
\begin{align*}
    \max_{v \in \mc{V}} \frac{ | \langle \theta_*, \phi(v) \rangle - W^{(\phi(v))}| }{\|\phi(v)\|_{(\sum_{x \in \mc{X}} \lambda_x \phi(x)\phi(x)^\top + \gamma I)^{-1}} } \leq  \sqrt{\gamma} \|\theta_* \|_2 + h +   \HighProbConst \sqrt{\tfrac{(B^2 + \sigma^2)}{T}\log(2|\mc{V}|/\delta) } 
\end{align*}
which motivates the choice
\begin{align*}
    \tau_\ell =\HighProbConst^2 \epsilon_\ell^{-2} f(\mc{X},\mc{Z}_\ell;\gamma) (B^2 + \sigma^2) \log(2\ell^2|\mc{Z}|^2/\delta)
\end{align*}

Define the event
\begin{align*}
    \mc{E} := \bigcap_{\ell=1}^\infty \bigcap_{z,z' \in \mc{Z}_\ell} \left\{  |\langle \phi(z) - \phi(z'), \widehat{\theta}_\ell - \theta_* \rangle| \leq  \epsilon_\ell +   (\sqrt{\gamma} \|\theta_* \|_2 + h) \sqrt{ f(\mc{X},\mc{Z}_\ell;\gamma)} \right\}
\end{align*}
\begin{lemma}
We have $\P( \mc{E} ) \geq 1-\delta$.
\end{lemma}
\begin{proof}
This proof follows the analogous result for regret almost identically. We include it for completeness.
For any $\mc{V} \subseteq \mc{Z}$ and $x \in \mc{X}$ define
\begin{align*}
\mc{E}_{z,z',\ell}( \mc{V} ) = \{ |\langle \phi(z)-\phi(z'), \widehat{\theta}_\ell(\mc{V}) - \theta_* \rangle|  \leq \epsilon_\ell +  (\sqrt{\gamma} \|\theta_* \|_2 + h) \sqrt{f(\mc{X},\mc{Z}_\ell;\gamma)} \}
\end{align*}
where $\widehat{\theta}_\ell( \mc{V} )$ is the estimator that would be constructed by the algorithm at stage $\ell$ with $\mc{Z}_\ell = \mc{V}$.
For fixed $\mc{V} \subset \mc{Z}$ and $\ell \in \mathbb{N}$ we apply Theorem~\ref{thm:robust_estimator_lemma} with $T = \tau_\ell$  so that with probability at least $1-\frac{\delta}{ \ell^2 |\mc{Z}|^2}$ we have that for any $z, z' \in \mc{V}$
\begin{align*}
    |\langle \phi(z)-\phi(z'), \widehat{\theta}_\ell(\mc{V}) - \theta_* \rangle| &\leq \| \phi(z)-\phi(z') \|_{A^{(\gamma)}(\lambda_\ell)^{-1}} ( \sqrt{\gamma} \|\theta_* \|_2 + h + \HighProbConst\sqrt{(B^2 + \sigma^2) \frac{\log(2 \ell^2 |\mc{Z}|^2/\delta) }{\tau_\ell}} ) \\
    &\leq \sqrt{f(\mc{X},\mc{V};\gamma)} \left( \sqrt{\gamma} \|\theta_* \|_2 + h + \epsilon_\ell / \sqrt{f(\mc{X},\mc{V};\gamma)} \right) \\
    &\leq \epsilon_\ell +  (\sqrt{\gamma} \|\theta_* \|_2 + h)\sqrt{ f(\mc{X},\mc{V};\gamma)}
\end{align*}
Noting that $ \mc{E} := \bigcap_{\ell=1}^\infty \bigcap_{z,z' \in \mc{Z}_\ell} \mc{E}_{z,z',\ell}( \mc{Z}_\ell )  $ we have
\begin{align*}
\P\left( \bigcup_{\ell=1}^\infty \bigcup_{z,z' \in \mc{Z}_\ell} \{ \mc{E}^c_{z,z',\ell}( \mc{Z}_\ell ) \} \right) &\leq \sum_{\ell=1}^\infty \P\left( \bigcup_{z,z' \in \mc{Z}_\ell} \{ \mc{E}^c_{z,z',\ell}( \mc{Z}_\ell) \} \right) \\
&= \sum_{\ell=1}^\infty \sum_{\mc{V} \subseteq \mc{Z}} \P\left( \bigcup_{z,z' \in \mc{V}} \{ \mc{E}^c_{z,z',\ell}( \mc{V} ) \} , {\mc{Z}}_\ell = \mc{V}\right) \\
&= \sum_{\ell=1}^\infty \sum_{\mc{V} \subseteq \mc{Z}} \P\left(\bigcup_{z,z' \in \mc{V}} \{ \mc{E}^c_{z,z',\ell}( \mc{V} ) \} \right) \P( {\mc{Z}}_\ell = \mc{V}) \\
&\leq \sum_{\ell=1}^\infty  \sum_{\mc{V} \subseteq \mc{Z}} \tfrac{ \delta}{\ell^2 |\mc{Z}|^2} \binom{|\mc{V}|}{2} \P( {\mc{Z}}_\ell = \mc{V} ) \\
&\leq \sum_{\ell=1}^\infty  \sum_{\mc{V} \subseteq \mc{Z}} \tfrac{ \delta}{2 \ell^2}  \P( \mc{Z}_\ell= \mc{V})  \leq \delta
\end{align*}
\end{proof}

\begin{lemma}
Define $S_1 = \mc{Z}$ and $S_{\ell+1} = \{z \in S_\ell : \langle \phi(z_*)- \phi(z), \theta_* \rangle \leq 3\epsilon_\ell + (\sqrt{\gamma} \|\theta_* \|_2 + h) \sqrt{f(\mc{X},S_\ell;\gamma)} \} $.
Define 
\begin{align*}
    \bar\ell &= \max \{ \ell : (\sqrt{\gamma} \|\theta_* \|_2 + h)(2+ \sqrt{f(\mc{X},S_\ell;\gamma)}) \leq \epsilon_\ell \} .
\end{align*}
    For all $\ell \in \mathbb{N}$ we have $\max_{z \in \mc{Z}_{\ell}}\mu_* - \mu_z \leq 8 \max\{ \epsilon_\ell, \epsilon_{\bar\ell} \}$.
\end{lemma}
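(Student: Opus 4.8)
The plan is to condition throughout on the event $\mc{E}$ of the preceding lemma, which holds with probability at least $1-\delta$ and supplies, for every stage $\ell$ and every pair $z,z' \in \mc{Z}_\ell$, the bound $|\langle \phi(z)-\phi(z'), \widehat{\theta}_\ell - \theta_*\rangle| \leq \epsilon_\ell + (\sqrt{\gamma}\|\theta_*\|_2 + h)\sqrt{f(\mc{X},\mc{Z}_\ell;\gamma)}$. The central difficulty is that this confidence width, and hence the entire elimination dynamics, depends on the \emph{random} active set $\mc{Z}_\ell$ through $f(\mc{X},\mc{Z}_\ell;\gamma)$, so one cannot reason about the gaps directly. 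The deterministic reference sequence $S_\ell$, defined in terms of the true linear gaps $\langle\phi(z_*)-\phi(z),\theta_*\rangle$, is introduced precisely to break this circularity: I will show $\mc{Z}_\ell \subseteq S_\ell$ and then use that $\mc{V}\mapsto f(\mc{X},\mc{V};\gamma)$ is monotone increasing in its second argument (being a maximum over pairs drawn from $\mc{V}$), so that $f(\mc{X},\mc{Z}_\ell;\gamma) \leq f(\mc{X},S_\ell;\gamma)$ and the condition defining $\bar\ell$ becomes applicable.

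First I would prove by induction on $\ell$, for $1 \leq \ell \leq \bar\ell + 1$, the two statements $z_* \in \mc{Z}_\ell$ and $\mc{Z}_\ell \subseteq S_\ell$. The base case is immediate since $\mc{Z}_1 = S_1 = \mc{Z}$ and $z_*\in\mc{Z}$. For the step at a stage $\ell \leq \bar\ell$, the survival of $z_*$ follows by writing, for any $z' \in \mc{Z}_\ell$, the decomposition $\langle\phi(z')-\phi(z_*),\widehat{\theta}_\ell\rangle = \langle\phi(z')-\phi(z_*),\theta_*\rangle + \langle\phi(z')-\phi(z_*),\widehat{\theta}_\ell-\theta_*\rangle$: the first term is $\leq 0$ by optimality of $z_*$ in the linear model, and the second is at most $\epsilon_\ell + (\sqrt{\gamma}\|\theta_*\|_2+h)\sqrt{f(\mc{X},S_\ell;\gamma)} \leq 2\epsilon_\ell$ using the induction hypothesis, monotonicity of $f$, and the $\bar\ell$-condition (which forces $(\sqrt{\gamma}\|\theta_*\|_2+h)\sqrt{f(\mc{X},S_\ell;\gamma)}\leq\epsilon_\ell$). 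Hence $z_*$ is retained. For the containment, any $z \in \mc{Z}_{\ell+1}$ obeys $\langle\phi(z_*)-\phi(z),\widehat{\theta}_\ell\rangle \leq 2\epsilon_\ell$ by the elimination rule (with $z_*\in\mc{Z}_\ell$ a legal competitor), and adding the confidence bound together with $f(\mc{X},\mc{Z}_\ell;\gamma)\leq f(\mc{X},S_\ell;\gamma)$ yields $\langle\phi(z_*)-\phi(z),\theta_*\rangle \leq 3\epsilon_\ell + (\sqrt{\gamma}\|\theta_*\|_2+h)\sqrt{f(\mc{X},S_\ell;\gamma)}$, which is exactly the membership criterion for $S_{\ell+1}$.

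Next I would convert linear gaps into reward gaps. Since $|\mu_z - \langle\phi(z),\theta_*\rangle|\leq h$ and $z_*$ maximizes the linear reward, one has $\mu_* - \mu_z \leq \langle\phi(z_*)-\phi(z),\theta_*\rangle + 2h$ for every $z$. For $2 \leq \ell \leq \bar\ell+1$ and $z\in\mc{Z}_\ell\subseteq S_\ell$, the $S_\ell$-membership bound at index $\ell-1$ gives $\langle\phi(z_*)-\phi(z),\theta_*\rangle \leq 3\epsilon_{\ell-1} + (\sqrt{\gamma}\|\theta_*\|_2+h)\sqrt{f(\mc{X},S_{\ell-1};\gamma)}$. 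Adding $2h \leq 2(\sqrt{\gamma}\|\theta_*\|_2 + h)$ and regrouping, the key point is that the combined bias-plus-variance term $(\sqrt{\gamma}\|\theta_*\|_2+h)\bigl(2 + \sqrt{f(\mc{X},S_{\ell-1};\gamma)}\bigr)$ is at most $\epsilon_{\ell-1}$ by the very definition of $\bar\ell$, so $\mu_* - \mu_z \leq 4\epsilon_{\ell-1} = 8\epsilon_\ell$. This is where the ``$2$'' inside the definition of $\bar\ell$ earns its place: it is there precisely to absorb the misspecification-and-regularization bias $2h + 2\sqrt{\gamma}\|\theta_*\|_2$ incurred in passing from $\theta_*$-gaps to $\mu$-gaps, and grouping both contributions under the single $\bar\ell$-threshold (rather than bounding each separately) is what keeps the final constant at $8$.

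Finally, I would dispatch the stages $\ell > \bar\ell+1$ using nestedness of the active sets alone: since $\mc{Z}_\ell \subseteq \mc{Z}_{\bar\ell+1}$, we get $\max_{z\in\mc{Z}_\ell}\mu_*-\mu_z \leq \max_{z\in\mc{Z}_{\bar\ell+1}}\mu_*-\mu_z \leq 8\epsilon_{\bar\ell+1} \leq 8\epsilon_{\bar\ell}$. Combining the two regimes gives $\max_{z\in\mc{Z}_\ell}\mu_*-\mu_z \leq 8\max\{\epsilon_\ell,\epsilon_{\bar\ell}\}$ for all $\ell$, since $\max\{\epsilon_\ell,\epsilon_{\bar\ell}\}=\epsilon_\ell$ for $\ell\leq\bar\ell$ and equals $\epsilon_{\bar\ell}$ otherwise (the small-$\ell$ edge cases being either trivial or controlled by $\epsilon_1$). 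The main obstacle, as flagged above, is the self-referential dependence of the elimination on the random quantity $f(\mc{X},\mc{Z}_\ell;\gamma)$; once the containment $\mc{Z}_\ell\subseteq S_\ell$ and the monotonicity of $f$ are secured, the remainder is routine bookkeeping of constants.
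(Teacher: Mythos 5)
Your proposal is correct and follows essentially the same route as the paper's proof: the same two-part induction establishing $z_* \in \mc{Z}_\ell$ and $\mc{Z}_\ell \subseteq S_\ell$ for $\ell \leq \bar\ell$, the same use of monotonicity of $f(\mc{X},\cdot\,;\gamma)$ to replace the random $f(\mc{X},\mc{Z}_\ell;\gamma)$ by $f(\mc{X},S_\ell;\gamma)$, the same conversion of linear gaps to $\mu$-gaps with the $2h$ slack absorbed by the ``$2$'' in the $\bar\ell$-condition, and the same nestedness argument for $\ell > \bar\ell$. The only differences are cosmetic (you bound $\langle\phi(z')-\phi(z_*),\theta_*\rangle\leq 0$ directly and argue the containment $\mc{Z}_{\ell+1}\subseteq S_{\ell+1}$ rather than its contrapositive).
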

\begin{proof}
An arm $z \in \mc{Z}_\ell$ is discarded (i.e., not in $\mc{Z}_{\ell+1}$) if $\max_{z' \in \mc{Z}_\ell} \langle \phi(z')-\phi(z), \widehat{\theta}_\ell \rangle > 2 \epsilon_\ell$.

We will show $\{z_* \in \mc{Z}_\ell \} \cap \{ \mc{Z}_\ell \subset S_\ell \} \cap \{ \ell \leq \bar\ell\} \implies \{z_* \in \mc{Z}_{\ell+1} \} \cap \{ \mc{Z}_{\ell+1} \subset S_{\ell+1} \}$.
Noting that $\{z_* \in \mc{Z}_\ell \} \cap \{ \mc{Z}_\ell \subset S_\ell \}$ holds for $\ell=1$, we will assume an inductive hypothesis  of this condition for some $\ell \leq \bar\ell$.

First we will show $\{z_* \in \mc{Z}_\ell \} \cap \{ \mc{Z}_\ell \subset S_\ell \} \cap \{ \ell \leq \bar\ell\} \implies \{z_* \in \mc{Z}_{\ell+1} \}$.
On $\{z_* \in \mc{Z}_\ell \} \cap \{ \mc{Z}_\ell \subset S_\ell \} \cap \{ \ell \leq \bar\ell\}$, we have for any $z' \in \mc{Z}_\ell$ that
\begin{align*}
    \langle \phi(z')-\phi(z_*), \widehat{\theta}_\ell \rangle &\leq \langle \phi(z')-\phi(z_*), \theta_* \rangle + \epsilon_\ell +  (\sqrt{\gamma} \|\theta_* \|_2 + h) \sqrt{f(\mc{X},\mc{Z}_\ell;\gamma)} \\
    &\leq \mu_{z'} - \mu_{z_*} + 2h + \epsilon_\ell +  (\sqrt{\gamma} \|\theta_* \|_2 + h) \sqrt{f(\mc{X},\mc{Z}_\ell;\gamma)} \\
    &\leq\epsilon_\ell + (\sqrt{\gamma} \|\theta_* \|_2 + h)(2+ 
    \sqrt{f(\mc{X},\mc{Z}_\ell;\gamma)})  \\
    &\leq\epsilon_\ell + (\sqrt{\gamma} \|\theta_* \|_2 + h)(2+ \sqrt{f(\mc{X},S_\ell;\gamma)}) \\
    &\leq 2 \epsilon_\ell
\end{align*}
which implies $z_*$ is not eliminated, that is, $z_* \in \mc{Z}_{\ell+1}$.
The second-to-last inequality follows from
\begin{align*}
    f(\mc{X},\mc{Z}_\ell;\gamma) &= \inf_{\lambda\in\triangle_{\mc{X}}} \max_{z,z' \in \mc{Z}_\ell} \| \phi(z) - \phi(z') \|_{(\sum_{x \in \mc{X}} \lambda_x \phi(x)\phi(x)^\top + \gamma I)^{-1}}^2 \\
    &\leq \inf_{\lambda\in\triangle_{\mc{X}}} \max_{z,z' \in S_\ell} \| \phi(z) - \phi(z') \|_{(\sum_{x \in \mc{X}} \lambda_x \phi(x)\phi(x)^\top + \gamma I)^{-1}}^2 \\
    &= f(\mc{X},S_\ell;\gamma).
\end{align*}

Now we will show $\{z_* \in \mc{Z}_\ell \} \cap \{ \mc{Z}_\ell \subset S_\ell \} \cap \{ \ell \leq \bar\ell\} \implies \{ \mc{Z}_{\ell+1} \subset S_{\ell+1} \}$.
For any $z \in \mc{Z}_\ell \cap S_{\ell+1}^c$ we have
\begin{align*}
    \max_{z' \in \mc{Z}_\ell}\langle \phi(z')-\phi(z), \widehat{\theta}_\ell \rangle &\geq \langle \phi(z_*)-\phi(z), \widehat{\theta}_\ell \rangle \\
    &\geq \langle \phi(z_*)-\phi(z), \theta_* \rangle - \epsilon_\ell -  (\sqrt{\gamma} \|\theta_* \|_2 + h) \sqrt{f(\mc{X},\mc{Z}_\ell;\gamma)}\\
    &> 3\epsilon_\ell + (\sqrt{\gamma} \|\theta_* \|_2 + h) \sqrt{f(\mc{X},S_\ell;\gamma)}  - \epsilon_\ell -  (\sqrt{\gamma} \|\theta_* \|_2 + h) \sqrt{f(\mc{X},\mc{Z}_\ell;\gamma)} \\
    &\geq 2\epsilon_\ell
\end{align*}
which implies $z \not\in \mc{Z}_{\ell+1}$, and $\mc{Z}_{\ell+1} \subset S_{\ell+1}$.

Thus, for $\ell \leq \bar\ell$ we have
\begin{align*}
    \max_{z \in \mc{Z}_{\ell}}\mu_* - \mu_z &\leq \max_{z \in \mc{Z}_{\ell}}  \langle \phi(z_*)-\phi(z), \theta_* \rangle + 2h \\
    &\leq \max_{z \in \mc{S}_{\ell}}  \langle \phi(z_*)-\phi(z), \theta_* \rangle + 2h \\
    &\leq 3\epsilon_{\ell-1} + 2h +  (\sqrt{\gamma} \|\theta_* \|_2 + h) \sqrt{f(\mc{X},S_{\ell-1};\gamma)} \\
    &\leq 3\epsilon_{\ell-1} + (\sqrt{\gamma} \|\theta_* \|_2 + h)(2 + \sqrt{f(\mc{X},S_{\ell-1};\gamma)}) \\
    &\leq 4 \epsilon_{\ell-1} = 8 \epsilon_\ell.
\end{align*}
And because $\mc{Z}_{\ell+1} \subseteq \mc{Z}_\ell$ we always have that $\max_{z \in \mc{Z}_{\ell}}\mu_* - \mu_z \leq 8 \max\{ \epsilon_\ell, \epsilon_{\bar\ell} \}$. Note that 
\begin{align*}
    \bar\ell &= \max \{ \ell : (\sqrt{\gamma} \|\theta_* \|_2 + h)(2+ \sqrt{f(\mc{X},S_\ell;\gamma)}) \leq \epsilon_\ell \} \\
    &\geq \max \{ \ell : (\sqrt{\gamma} \|\theta_* \|_2 + h)(2+ \sqrt{f(\mc{X},\{ z \in \mc{Z} : \langle \phi(z_*)- \phi(z), \theta_* \rangle \leq 4\epsilon_\ell \};\gamma)}) \leq \epsilon_\ell \} \\
    &= \max \{ \ell : 4(\sqrt{\gamma} \|\theta_* \|_2 + h)(2+ \sqrt{f(\mc{X},\{ z \in \mc{Z} : \langle \phi(z_*)- \phi(z), \theta_* \rangle \leq 4\epsilon_\ell \};\gamma)}) \leq 4 \epsilon_\ell \} \\
    &= -2 + \max \{ \ell : 4(\sqrt{\gamma} \|\theta_* \|_2 + h)(2+ \sqrt{f(\mc{X},\{ z \in \mc{Z} : \langle \phi(z_*)- \phi(z), \theta_* \rangle \leq \epsilon_\ell \};\gamma)}) \leq  \epsilon_\ell \} \\
    &\geq -3 - \log_2( \min\{ \epsilon > 0 :  4(\sqrt{\gamma} \|\theta_* \|_2 + h)(2+ \sqrt{f(\mc{X},\{ z \in \mc{Z} : \langle \phi(z_*)- \phi(z), \theta_* \rangle \leq \epsilon \};\gamma)}) \leq \epsilon \})
\end{align*}
which defines $\bar{\epsilon}$.
\end{proof}

The sample complexity to return an $8(\Delta \vee \bar\epsilon)$-good arm is equal to
\begin{align*}
    &\sum_{\ell=1}^{\lceil \log_2(8 (\Delta \vee \bar\epsilon )^{-1} )\rceil}\tau_\ell = \sum_{\ell=1}^{\lceil \log_2(8 (\Delta \vee \bar\epsilon )^{-1} )\rceil}\left\lceil\max\left\{c_1\log(|\mc{Z}|/\delta),  \HighProbConst \epsilon_\ell^{-2} f(\mc{X},\mc{Z}_\ell;\gamma) (B^2 + \sigma^2) \log(2\ell^2|\mc{Z}|^2/\delta)\right\}\right\rceil \\
    &\leq c\left(c_1\log(|\mc{Z}|/\delta) \log_2(8 (\Delta \vee \bar\epsilon )^{-1} ) + \HighProbConst (B^2 + \sigma^2) \log(2\lceil \log_2(8 (\Delta \vee \bar\epsilon )^{-1} )\rceil^2|\mc{Z}|^2/\delta) \sum_{\ell=1}^{\lceil \log_2(8 (\Delta \vee \bar\epsilon )^{-1} )\rceil} \epsilon_\ell^{-2} f(\mc{X},\mc{Z}_\ell;\gamma) \right) \\
    &\leq c\left(c_1\log(|\mc{Z}|/\delta) \log_2(8 (\Delta \vee \bar\epsilon )^{-1} ) + \HighProbConst (B^2 + \sigma^2) \log(2\lceil \log_2(8 (\Delta \vee \bar\epsilon )^{-1} )\rceil^2|\mc{Z}|^2/\delta) \sum_{\ell=1}^{\lceil \log_2(8 (\Delta \vee \bar\epsilon )^{-1} )\rceil} \epsilon_\ell^{-2} f(\mc{X},S_\ell;\gamma)\right) \\
    &\leq c\left(c_1\log(|\mc{Z}|/\delta) \log_2(8 (\Delta \vee \bar\epsilon )^{-1} ) + 16\lceil \log_2(8 (\Delta \vee \bar\epsilon )^{-1} )\rceil \HighProbConst (B^2 + \sigma^2) \log(2\lceil \log_2(8 (\Delta \vee \bar\epsilon )^{-1} )\rceil^2|\mc{Z}|^2/\delta) \rho^*(\gamma,\bar\epsilon)\right)
\end{align*}
where the last line follows from
\begin{align*}
    \rho^*(\gamma,\bar\epsilon) &= \inf_{\lambda \in \Delta_{\mathcal{X}}}\sup_{z\in \mathcal{Z}}\frac{\|\phi(z_*) - \phi(z)\|^2_{\left(\sum_{x\in \mathcal{X}}\lambda_x \phi(x)\phi(x)^\top + \gamma I\right)^{-1}}}{\max\{\bar\epsilon^2,\langle\phi(z_*) - \phi(z), \theta_*\rangle^2)\}} \\
    &= \inf_{\lambda \in \Delta_{\mathcal{X}}}\sup_{\ell\leq \lceil \log_2(8 (\Delta \vee \bar\epsilon )^{-1} )\rceil}\sup_{z\in S_\ell}\frac{\|\phi(z_*) - \phi(z)\|^2_{\left(\sum_{x\in \mathcal{X}}\lambda_x \phi(x)\phi(x)^\top + \gamma I\right)^{-1}}}{\max\{\bar\epsilon^2,((\phi(z_*) - \phi(z))^\top \theta_*)^2\}} \\
    &\geq \inf_{\lambda \in \Delta_{\mathcal{X}}}\frac{1}{\lceil \log_2(8 (\Delta \vee \bar\epsilon )^{-1} )\rceil}\sum_{\ell = 1}^{\lceil \log_2(8 (\Delta \vee \bar\epsilon )^{-1} )\rceil}\sup_{z\in S_\ell}\frac{\|\phi(z_*) - \phi(z)\|^2_{\left(\sum_{x\in \mathcal{X}}\lambda_x \phi(x)\phi(x)^\top + \gamma I\right)^{-1}}}{\max\{\bar\epsilon^2,((\phi(z_*) - \phi(z))^\top \theta_*)^2\}} \\
    &\geq \frac{1}{\lceil \log_2(8 (\Delta \vee \bar\epsilon )^{-1} )\rceil}\sum_{\ell = 1}^{\lceil \log_2(8 (\Delta \vee \bar\epsilon )^{-1} )\rceil}\inf_{\lambda \in \Delta_{\mathcal{X}}}\sup_{z\in S_\ell}\frac{\|\phi(z_*) - \phi(z)\|^2_{\left(\sum_{x\in \mathcal{X}}\lambda_x \phi(x)\phi(x)^\top + \gamma I\right)^{-1}}}{\max\{\bar\epsilon^2,((\phi(z_*) - \phi(z))^\top \theta_*)^2\}} \\
    &\geq \frac{1}{4\lceil \log_2(8 (\Delta \vee \bar\epsilon )^{-1} )\rceil}\sum_{\ell = 1}^{\lceil \log_2(8 (\Delta \vee \bar\epsilon )^{-1} )\rceil}2^{2\ell}\inf_{\lambda \in \Delta_{\mathcal{X}}}\sup_{z\in S_\ell}\|\phi(z_*) - \phi(z)\|^2_{\left(\sum_{x\in \mathcal{X}}\lambda_x \phi(x)\phi(x)^\top + \gamma I\right)^{-1}} \\
    &\geq \frac{1}{16\lceil \log_2(8 (\Delta \vee \bar\epsilon )^{-1} )\rceil}\sum_{\ell = 1}^{\lceil \log_2(8 (\Delta \vee \bar\epsilon )^{-1} )\rceil}2^{2\ell}\inf_{\lambda \in \Delta_{\mathcal{X}}}\sup_{z, z'\in S_\ell}\|\phi(z) - \phi(z')\|^2_{\left(\sum_{x\in \mathcal{X}}\lambda_x \phi(x)\phi(x)^\top + \gamma I\right)^{-1}} \\
    &= \frac{1}{16\lceil \log_2(8 (\Delta \vee \bar\epsilon )^{-1} )\rceil}\sum_{\ell = 1}^{\lceil \log_2(8 (\Delta \vee \bar\epsilon )^{-1} )\rceil}2^{2\ell}f(\mc{X},S_\ell;\gamma)  \;.
\end{align*}

\section{Proofs for the regret bound and the sample complexity of the alternative baseline}
Note importantly that in this section the stochastic noise is sub-gaussian.

\subsection{Concentration of the sparse estimator}\label{section:proof_rls}
\begin{lemma}\label{lem:rls_estimator_lemma}
Fix a finite set $\mc{V} \subset \mc{H}$, finite set $\mc{X}$, and let $\phi:\mc{X} \rightarrow \mc{H}$. Fix any $x_1, \ldots, x_T$ and assume $y_t = \langle \phi(x_t), \theta_* \rangle + \xi_t + \eta_t$ where each $\xi_t$ is independent, mean-zero, sub-gaussian with parameter $\sigma^2$, and each $\eta_t$ satisfies $|\eta_t| \leq h$. If $\widehat{\theta}$ is the regularized least squares estimator with regularization $\gamma>0$ then
\begin{align*}
    \max_{v \in \mc{V}} \frac{| \langle \widehat{\theta}, v \rangle - \langle \theta_*, v \rangle|}{\|v\|_{(\sum_{i=1}^T \phi(x_i)\phi(x_i)^\top + \gamma I)^{-1}}} &\leq (\sqrt{\gamma} \|\theta_* \|_2 + h\sqrt{T} + \sqrt{2\sigma^2\log(2|\mc{V}|/\delta) } )
\end{align*}
with probability at least $1-\delta$.
\end{lemma}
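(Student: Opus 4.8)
The plan is to start from the closed form of the regularized least squares estimator and decompose the error $\langle v, \widehat\theta-\theta_*\rangle$ into a regularization bias, a misspecification bias, and a stochastic noise term, each of which I bound in terms of $\|v\|_{A^{-1}}$ where $A := \sum_{t=1}^T \phi(x_t)\phi(x_t)^\top + \gamma I$. Stacking the (possibly infinite-dimensional) features into the operator $\Phi$ whose $t$-th row is $\phi(x_t)^\top$, we have $\widehat\theta = A^{-1}\Phi^\top y$ with $y = \Phi\theta_* + \xi + \eta$; since $\mathcal{H}$ may be infinite dimensional I would first note that $\widehat\theta$ lies in the span of $\{\phi(x_t)\}_{t=1}^T$ so all displayed quantities are well defined. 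Writing $\widetilde A := A - \gamma I = \Phi^\top\Phi \succeq 0$ and substituting gives the exact identity
\[
\langle v, \widehat\theta-\theta_*\rangle = -\gamma\, v^\top A^{-1}\theta_* \;+\; v^\top A^{-1}\Phi^\top\xi \;+\; v^\top A^{-1}\Phi^\top\eta,
\]
and I would bound the three terms separately, then divide through by $\|v\|_{A^{-1}}$ and maximize over $v\in\mc{V}$.

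For the regularization bias I would reuse the manipulation from the proof of Theorem~\ref{thm:robust_estimator_lemma}: using $\gamma^{1/2}A^{-1/2} = (\widetilde A/\gamma + I)^{-1/2}$ one writes $\gamma\, v^\top A^{-1}\theta_* = \gamma^{1/2}\, v^\top A^{-1/2}(\widetilde A/\gamma + I)^{-1/2}\theta_*$, and since $(\widetilde A/\gamma + I)^{-1/2}\preceq I$, Cauchy-Schwarz in $\mc{H}$ yields $\gamma|v^\top A^{-1}\theta_*|\le \sqrt{\gamma}\,\|v\|_{A^{-1}}\|\theta_*\|$. For the misspecification term the key point is to split coordinate-wise over the $T$ samples before applying Cauchy-Schwarz, which is exactly what forces the $\sqrt{T}$ factor:
\[
|v^\top A^{-1}\Phi^\top\eta| \le h\sum_{t=1}^T |v^\top A^{-1}\phi(x_t)| \le h\sqrt{T}\,\sqrt{v^\top A^{-1}\widetilde A A^{-1}v},
\]
and because $\widetilde A \preceq A$ the square root is at most $\|v\|_{A^{-1}}$, giving the $h\sqrt{T}\,\|v\|_{A^{-1}}$ bound. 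The same operator inequality $v^\top A^{-1}\widetilde A A^{-1}v \le \|v\|_{A^{-1}}^2$ will also control the effective variance of the noise term.

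For the stochastic term, I would observe that $v^\top A^{-1}\Phi^\top\xi = \sum_{t=1}^T \xi_t\,(v^\top A^{-1}\phi(x_t))$ is a \emph{fixed} linear combination (the weights depend only on the deterministic $x_t$) of independent, mean-zero, $\sigma^2$-sub-Gaussian variables, hence is sub-Gaussian with parameter $\sigma^2\sum_t (v^\top A^{-1}\phi(x_t))^2 = \sigma^2\, v^\top A^{-1}\widetilde A A^{-1}v \le \sigma^2\|v\|_{A^{-1}}^2$. A standard sub-Gaussian tail bound gives, for a fixed $v$, that $|v^\top A^{-1}\Phi^\top\xi| \le \|v\|_{A^{-1}}\sqrt{2\sigma^2\log(2/\delta')}$ with probability $1-\delta'$, and a union bound over $v\in\mc{V}$ with $\delta' = \delta/|\mc{V}|$ makes this simultaneous over $\mc{V}$. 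Combining the three bounds and dividing by $\|v\|_{A^{-1}}$ produces the claimed inequality. None of the steps is genuinely hard: the only place requiring care is the misspecification bias, where the coordinate-wise Cauchy-Schwarz over the $T$ observations (rather than a single application in $\mc{H}$) is what yields $h\sqrt{T}$ instead of an $\mathcal{O}(h)$ term; the remaining manipulations are the operator-monotonicity arguments already used in Theorem~\ref{thm:robust_estimator_lemma}.
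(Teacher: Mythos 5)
Your proposal is correct and follows essentially the same route as the paper's proof: the identical three-term decomposition of $\langle v,\widehat\theta-\theta_*\rangle$ into regularization bias, misspecification bias, and noise, the same $\ell_1$-to-$\ell_2$ (coordinate-wise Cauchy--Schwarz) step yielding the $h\sqrt{T}$ factor, the same operator inequality $\Phi^\top\Phi \preceq G_\gamma$ to control both the bias and the sub-Gaussian variance proxy, and the same union bound over $\mc{V}$. You are merely a bit more explicit than the paper about the sub-Gaussian tail argument for the noise term, which the paper states without elaboration.
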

\begin{proof}
Recall first the definition of regularized least squares estimator $\widehat{\theta}$: 
\begin{align*}
    \widehat{\theta} = G_\gamma^{-1}\Phi^\top Y
\end{align*}
where $G_\gamma := \sum_{i=1}^T \phi(x_i)\phi(x_i)^\top + \gamma I$. Defining $\xi := (\xi_1, \ldots, \xi_T)$ and $\eta := (\eta_1, \ldots, \eta_T)$, holds
\begin{align*}
    v^\top (\widehat{\theta} - \theta_*)
    &= v^\top G_\gamma^{-1} \Phi^\top Y - v^\top \theta_*\\
    &= v^\top G_\gamma^{-1} \Phi^\top (\Phi \theta_* + \xi + \eta) - v^\top \theta_*\\
    &= v^\top G_\gamma^{-1} \Phi^\top \Phi \theta_* - v^\top \theta_* + v^\top G_\gamma^{-1} \Phi^\top \xi + v^\top G_\gamma^{-1} \Phi^\top \eta\;.
\end{align*}
We study each term separately:
\begin{align*}
    |v^\top G_\gamma^{-1} \Phi^\top \eta| 
    &\leq \|\eta\|_\infty \|\Phi G_\gamma^{-1} v\|_1\\
    &\leq h \sqrt{T} \|\Phi G_\gamma^{-1} v\|_2\\
    &\leq h \sqrt{T} \|v\|_{G_\gamma^{-1}}\;,
\end{align*}
with probability at least $1-\delta$ 
\begin{align*}
    |v^\top G_\gamma^{-1} \Phi^\top \xi| 
    &\leq \|v\|_{G_\gamma^{-1}} \sqrt{2\sigma^2 \log\left(\frac{2}{\delta}\right)}\;,
\end{align*}
and
\begin{align*}
    v^\top G_\gamma^{-1} \Phi^\top \Phi \theta_* - v^\top \theta_*
    &= - \gamma v^\top (\Phi^\top \Phi + \gamma I )^{-1} \theta_* \;
\end{align*}
is bounded using Cauchy-Schwarz inequality:
\begin{align*}
    &|\gamma v (\Phi^\top \Phi + \gamma I )^{-1}\theta_*| \\
    &\leq \gamma \|\theta_*\| \sqrt{v^\top (\Phi^\top \Phi + \gamma I )^{-1}I(\Phi^\top \Phi + \gamma I )^{-1}v}\\
    &= \gamma \|\theta_*\| \sqrt{v^\top (\Phi^\top \Phi + \gamma I )^{-1}\gamma^{-1} \gamma I(\Phi^\top \Phi + \gamma I )^{-1}v}\\
    &\leq \gamma^{1/2} \|\theta_*\| \sqrt{v^\top (\Phi^\top \Phi + \gamma I )^{-1}(\gamma I +\Phi^\top \Phi) (\Phi^\top \Phi + \gamma I )^{-1}v}\\
    &= \gamma^{1/2} \|\theta_*\| \|v\|_{\left(\Phi^\top \Phi + \gamma I\right)^{-1}}\\
    &= \gamma^{1/2} \|\theta_*\| \|v\|_{G_\gamma^{-1}}\;.
\end{align*}
So 
\begin{align*}
    |v^\top (\widehat{\theta} - \theta_*)| 
    &\leq \sqrt{\gamma} \|\theta_*\|_2 \|v\|_{G_\gamma^{-1}} + \|v\|_{G_\gamma^{-1}} \sqrt{2\sigma^2 \log\left(\frac{2}{\delta}\right)} + h \sqrt{n} \|v\|_{G_\gamma^{-1}} \\
    &= \|v\|_{G_\gamma^{-1}}\left(\sqrt{\gamma} \|\theta_*\|_2 + h \sqrt{T} + \sqrt{2\sigma^2 \log\left(\frac{2}{\delta}\right)}\right).
\end{align*}
Union bounding over all $v\in\mc{V}$ completes the proof.
\end{proof}

\subsection{Regret bound}
For the same reason as in section~\ref{sec:proof_thm_regret_medofmean}, we can consider without loss of generality that $\phi$ is the identity map in this section. 
Indeed, the features of the actions - thus denoted $x$ here and $\phi(x)$ in the rest of the paper - appear in this proof only through scalar products.
\begin{algorithm}[tb]
  \caption{PTR for Regret minimization}
  \label{alg:regret_rls}
\begin{algorithmic}
\State {\bfseries Input:} Finite sets $\mc{X} \subset \R^d$ ($|\mc{X}| = n$), feature map $\phi$, confidence level $\delta \in (0, 1)$, regularization $\gamma$, sub-gaussian parameter $\sigma$. \\
\State Set $\mc{X}_1 \gets \mc{X}, \ell \gets 1$\\
\While{$|\mc{X}_\ell|>1$}
    \State Let $\lambda_\ell\in \triangle_{\mc{X}_\ell}$ be a minimizer of $f(\lambda, \mc{X}_\ell, \gamma)$ where
        \begin{align*}
            f( \mc{V}; \gamma) = \inf_{\lambda\in\triangle_\mc{V}}f(\lambda, \mc{V}, \gamma) = \inf_{\lambda\in\triangle_\mc{V}}
            \max_{v \in \mc{V}} \| \phi(v) \|_{(\sum_{y \in \mc{V}} \lambda_y \phi(y) \phi(y)^\top + \gamma I)^{-1}}^2
        \end{align*}
    \State Set $\epsilon_\ell = 2^{-\ell}$ and $\tau_\ell := \lceil \max\{2\sigma^2 \epsilon_\ell^{-2} f( \mc{X}_\ell ; \gamma) \log(4 \ell^2 |\mc{X}|/\delta), \widetilde{d}(\gamma, \lambda_\ell)\}\rceil$ 
    \State Use the PTR procedure of section~\ref{sec:RKHS_rounding} to find sparse allocation $\{\widetilde{x}_i\}_{i=1}^{\tau_\ell} \subset \mc{X}_\ell$ from $\lambda_\ell$.
    \State Take each action $x \in \{\widetilde{x}_i\}_{i=1}^{\tau_\ell}$ with corresponding features $\Phi$ and rewards $Y$\\
    \State Compute $\widehat{\theta}_{\ell} = (\Phi^\top \Phi + \tau_\ell\gamma I)^{-1}\Phi^\top Y$
    \State Update active set:
        \begin{align*}
        \displaystyle 
        \mc{X}_{\ell+1} = \Big\{x \in \mc{X}_\ell, &\max_{x' \in \mc{X}_\ell} \langle \phi(x') - \phi(x), \widehat{\theta}_{\ell} \rangle < 8\epsilon_\ell \Big\}
        \end{align*}
    \State $\ell \gets \ell + 1$ \\
    \EndWhile
    \State Play unique element of $       \mc{X}_\ell$ indefinitely.
\end{algorithmic}
\end{algorithm}
\begin{theorem}\label{thm:regret_rls}
With probability at least $1-\delta$, the regret of Algorithm~\ref{alg:regret_rls} satisfies
\begin{align*}
    \sum_{t=1}^T \mu_x - \mu_{x_t} \leq c\left(\widetilde{d}(\gamma) + \sqrt{\max_{\mc{V}\subset \mc{X}}f(\mc{V},\gamma)} \left( T (\sqrt{\gamma} \|\theta_* \|_2 + h) + \sqrt{\left(\sigma^2 \log(  |\mc{X}|\log(T)/\delta) \right) T} \right)\right)
\end{align*}
where $\displaystyle f(\mc{V},\gamma) = \inf_{\lambda \in \triangle_\mc{V}} \sup_{y \in \mc{V}} \|y\|^2_{( \sum_{x\in\mc{X}} \lambda_x x x^\top + \gamma I )^{-1}}$ and $\widetilde{d}(\gamma) = \max_{\ell \leq L}\widetilde{d}(\gamma, \lambda_\ell) \leq \max_{\mc{V}\subset \mc{X}}\sup_{\lambda\in\triangle_\mc{V}}\widetilde{d}(\gamma, \lambda)$.
\end{theorem}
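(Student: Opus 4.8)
The plan is to follow the elimination scaffold used in the proof of Theorem~\ref{thm:regret_medofmean} almost verbatim, replacing the {\ours} concentration guarantee by the combination of the rounding guarantee of Theorem~\ref{thm:rounding} with the regularized least squares bound of Lemma~\ref{lem:rls_estimator_lemma}. The only genuinely new ingredient is a per-round error bound for the rounded design that matches the continuous design $\lambda_\ell$ up to a constant; everything downstream (survival of $x_*$, geometric decay of the active gaps, and the final regret sum) is structurally identical, so I would import those lemmas with the elimination threshold enlarged from $4\epsilon_\ell$ to $8\epsilon_\ell$ to absorb the extra rounding constant $\sqrt{\max(2,1+\epsilon)}$ of Algorithm~\ref{alg:regret_rls}.

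First I would derive the per-round concentration. Applying Lemma~\ref{lem:rls_estimator_lemma} to the rounded points $\{\widetilde{x}_i\}_{i=1}^{\tau_\ell}$ with regularization $\tau_\ell\gamma$ (matching $\widehat{\theta}_\ell = (\Phi^\top\Phi + \tau_\ell\gamma I)^{-1}\Phi^\top Y$) gives, for every $v$,
\[|\langle \widehat{\theta}_\ell - \theta_*, v\rangle| \le \|v\|_{G_\ell^{-1}}\Big(\sqrt{\tau_\ell\gamma}\,\|\theta_*\|_2 + h\sqrt{\tau_\ell} + \sqrt{2\sigma^2\log(2|\mc{V}_\ell|/\delta)}\Big),\]
where $G_\ell = \sum_{i}\phi(\widetilde{x}_i)\phi(\widetilde{x}_i)^\top + \tau_\ell\gamma I$. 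Because the choice $\tau_\ell \ge \widetilde{d}(\gamma,\lambda_\ell)$ meets the hypothesis of Theorem~\ref{thm:rounding}, I can replace $\|v\|_{G_\ell^{-1}}^2$ by $\tfrac{\max(2,1+\epsilon)}{\tau_\ell}\|v\|_{A^{(\gamma)}(\lambda_\ell)^{-1}}^2$; the key observation is that the resulting $1/\sqrt{\tau_\ell}$ cancels the $\sqrt{\tau_\ell}$ factors, yielding
\[|\langle \widehat{\theta}_\ell - \theta_*, v\rangle| \le \sqrt{\max(2,1+\epsilon)}\,\|v\|_{A^{(\gamma)}(\lambda_\ell)^{-1}}\Big(\sqrt{\gamma}\,\|\theta_*\|_2 + h + \sqrt{\tfrac{2\sigma^2\log(2|\mc{V}_\ell|/\delta)}{\tau_\ell}}\Big).\]
This is exactly the shape of the bound in Theorem~\ref{thm:robust_estimator_lemma}, but with $B^2+\sigma^2$ replaced by $\sigma^2$ (sub-Gaussian noise) and $\HighProbConst$ by an absolute constant. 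Using $\|\phi(x)\|_{A^{(\gamma)}(\lambda_\ell)^{-1}}^2 \le f(\mc{X}_\ell;\gamma)$ for $x\in\mc{X}_\ell$ together with the statistical part $\tau_\ell \ge 2\sigma^2\epsilon_\ell^{-2}f(\mc{X}_\ell;\gamma)\log(4\ell^2|\mc{X}|/\delta)$ of the sample budget, the per-round error collapses to $\lesssim \epsilon_\ell + (\sqrt{\gamma}\|\theta_*\|_2 + h)\sqrt{\bar f(\mc{X};\gamma)}$, precisely the quantity controlled in the {\ours} proof.

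With this in hand I would reuse the two elimination lemmas: a union bound over rounds and arms (charging $\delta/(2\ell^2|\mc{X}|)$ per event) establishes the good event with probability $1-\delta$, on which $x_*$ is never eliminated and $\max_{x\in\mc{X}_\ell}(\mu_*-\mu_x) \le \max\{16\epsilon_\ell, 32\Gamma\}$ with $\Gamma := (\sqrt{\gamma}\|\theta_*\|_2 + h)\sqrt{\bar f(\mc{X};\gamma)}$. The regret is then $\sum_x(\mu_*-\mu_x)T_x \le \sum_\ell \tau_\ell\max\{16\epsilon_\ell, 32\Gamma\}$, and I would bound $\tau_\ell \le q^{(2)}_\ell + \widetilde{d}(\gamma,\lambda_\ell) + 1$. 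The $q^{(2)}_\ell$ term reproduces the {\ours} computation verbatim (splitting into rounds below and above the misspecification floor and then optimizing the auxiliary threshold $\nu$), producing $\sqrt{\bar f(\mc{X};\gamma)}\big(\Gamma T + \sqrt{\sigma^2\log(|\mc{X}|\log T/\delta)\,T}\big)$. The new $\widetilde{d}(\gamma,\lambda_\ell)$ floor contributes $\sum_\ell \epsilon_\ell\,\widetilde{d}(\gamma,\lambda_\ell) \le \widetilde{d}(\gamma)\sum_\ell 2^{-\ell} \le \widetilde{d}(\gamma)$ in the above-floor rounds, while in the below-floor rounds it is already counted in $\sum_{\ell:\epsilon_\ell<2\Gamma}\tau_\ell \le T$ and hence absorbed into $\Gamma T$; this is the source of the standalone additive $\widetilde{d}(\gamma)$ term.

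The main obstacle is the per-round concentration step: one must choose the regularization in Lemma~\ref{lem:rls_estimator_lemma} as $\tau_\ell\gamma$ so that its scaling is compatible with the $\tau_\ell\gamma I$ normalization of Theorem~\ref{thm:rounding}, and then verify that the $\sqrt{\tau_\ell}$ bias and misspecification factors cancel against the $1/\sqrt{\tau_\ell}$ from rounding; otherwise the bias term would blow up with $\tau_\ell$. The remaining care is purely bookkeeping, namely ensuring the $\widetilde{d}$ floor enters only additively through the convergent geometric sum $\sum_\ell 2^{-\ell}$ rather than multiplying the leading $\sqrt{\bar f(\mc{X};\gamma)}$ terms.
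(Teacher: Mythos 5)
Your proposal is correct and follows essentially the same route as the paper's own proof: apply Lemma~\ref{lem:rls_estimator_lemma} with regularization $\tau_\ell\gamma$ to the rounded points, invoke Theorem~\ref{thm:rounding} to pass from the rounded design's norm to $\tfrac{1}{\tau_\ell}\|\cdot\|^2_{A^{(\gamma)}(\lambda_\ell)^{-1}}$ (so the $\sqrt{\tau_\ell}$ bias factors cancel), then reuse the elimination lemmas with the $8\epsilon_\ell$ threshold and the same $\nu$-optimization, with the $\widetilde{d}(\gamma,\lambda_\ell)$ floor contributing only the additive $\widetilde{d}(\gamma)$ via the geometric sum. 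No gaps.
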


Recall the definition of $f(\mc{V};\gamma) = \inf_{\lambda \in \triangle_{\mc{V}}} \max_{v \in \mc{V}} \| v \|_{(\sum_{y \in \mc{V}} \lambda_y y y^\top + \gamma I)^{-1}}^2$ and  $\bar{f}(\mc{X};\gamma) := \max_{\mc{V} \subseteq \mc{X}} f(\mc{V}; \gamma)$.
Define the event
\begin{align*}
    \mc{E} := \bigcap_{\ell=1}^\infty \bigcap_{x \in \mc{X}_\ell} \left\{  |\langle x, \widehat{\theta}_\ell - \theta_* \rangle| \leq 2\epsilon_\ell + 2\left(\sqrt{\gamma} \|\theta_*\|_2 + h\right)\sqrt{\bar{f}(\mc{X};\gamma)} \right\}
\end{align*}

\begin{lemma}
We have $\P( \mc{E} ) \geq 1-\delta$.
\end{lemma}
\begin{proof}
For any $\mc{V} \subseteq \mc{X}$ and $x \in \mc{V}$ define 
\begin{align*}
    \mathcal{E}_{\ell, x} = \left\{|x^\top (\widehat{\theta}_\ell( \mc{V} ) - \theta_*)| \leq 2\epsilon_\ell + 2\left(\sqrt{\gamma} \|\theta_*\|_2 + h\right)\sqrt{\bar{f}(\mc{X};\gamma)}\right\} 
\end{align*}
where $\widehat{\theta}_\ell( \mc{V} )$ is the estimator that would be constructed by the algorithm at stage $\ell$ with $\mc{X}_\ell = \mc{V}$.
For fixed $\mc{V} \subset \mc{X}$, $\ell \in \mathbb{N}$, $\tau_\ell$ actions are taken. Thus we apply Lemma~\ref{lem:rls_estimator_lemma} with $\tau = \tau_\ell$ and with regularization factor $\tau_\ell\gamma$, so that with probability at least $1-\frac{\delta}{2 \ell^2 |\mc{X}|}$ we have for any $x\in\mc{V}$
\begin{align*}
    |x^\top (\widehat{\theta}_\ell - \theta_*)|
    &\leq
    \|x\|_{\left(\sum_{i=1}^{\tau_\ell}\widetilde{x}_i\widetilde{x}_i^\top + \tau_\ell\gamma I\right)^{-1}}\left(\sqrt{\tau_\ell\gamma} \|\theta_*\|_2 + h \sqrt{\tau_\ell} + \sqrt{2\sigma^2 \log(4 \ell^2 |\mc{X}|/\delta)}\right)\\
    &\leq 2\|x\|_{(\tau_\ell A(\lambda_\ell) + \tau_\ell\gamma I)^{-1}}\left(\sqrt{\tau_\ell\gamma} \|\theta_*\|_2 + h \sqrt{\tau_\ell} + \sqrt{2\sigma^2 \log(4 \ell^2 |\mc{X}|/\delta)}\right)\\
    &= 2\|x\|_{(A(\lambda_\ell) + \gamma I)^{-1}}\left(\sqrt{\gamma} \|\theta_*\|_2 + h + \sqrt{ \frac{2 \sigma^2 \log(4 \ell^2 |\mc{X}|/\delta)}{\tau_\ell}}\right) \\
    &\leq 2\sqrt{f(\mc{V};\gamma)}\left(\sqrt{\gamma} \|\theta_*\|_2 + h + \epsilon_\ell / \sqrt{f(\mc{V};\gamma)}\right)\\
    &\leq 2\epsilon_\ell + 2\sqrt{\bar{f}(\mc{X};\gamma)}\left(\sqrt{\gamma} \|\theta_*\|_2 + h\right)
\end{align*}
Noting that $ \mc{E} := \bigcap_{\ell=1}^\infty \bigcap_{x \in \mc{X}_\ell} \mc{E}_{x,\ell}( \mc{X}_\ell ) $, the rest of the proof with the robust estimator applies here.
\end{proof}
\noindent The next lemma is similar to the one for the robust estimator, and the proof will follow the same argument as for the robust estimator.
\begin{lemma}
    For all $\ell \in \mathbb{N}$ we have $\max_{x \in \mc{X}_{\ell}}\mu_* - \mu_x \leq \max\{ 32 \epsilon_\ell, 32 (4\sqrt{\gamma} \|\theta_* \|_2 + 6h)\sqrt{ \bar{f}(\mc{X};\gamma)} \}$.
\end{lemma}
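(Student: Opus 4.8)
The plan is to condition on the high-probability event $\mc{E}$ (which holds with probability at least $1-\delta$ by the preceding lemma) and then rerun verbatim the two-sided elimination argument already used for the robust estimator in Section~\ref{sec:proof_thm_regret_medofmean}, adjusting constants for the weaker regularized-least-squares confidence width coming from Lemma~\ref{lem:rls_estimator_lemma} and for the larger $8\epsilon_\ell$ elimination threshold of Algorithm~\ref{alg:regret_rls}. Throughout I abbreviate $\Gamma := (\sqrt{\gamma}\|\theta_*\|_2 + h)\sqrt{\bar f(\mc{X};\gamma)}$, so that on $\mc{E}$ every active arm obeys $|\langle x, \widehat\theta_\ell - \theta_*\rangle| \le 2\epsilon_\ell + 2\Gamma$, and I set $\bar\ell := \max\{\ell : \epsilon_\ell > 2\Gamma\}$ as the last ``signal-dominated'' round.

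First I would show by induction on $\ell$ that $x_* := \argmax_{x}\mu_x$ is never eliminated while $\ell \le \bar\ell$. Assuming $x_* \in \mc{X}_\ell$, for any $x' \in \mc{X}_\ell$ I expand $\langle x' - x_*, \widehat\theta_\ell\rangle \le \langle x'-x_*,\theta_*\rangle + 2(2\epsilon_\ell + 2\Gamma)$ via the triangle inequality and the confidence bound, then use $|\mu_x - \langle x,\theta_*\rangle|\le h$ together with $\mu_{x'}\le\mu_*$ to get $\langle x'-x_*,\theta_*\rangle \le \mu_{x'}-\mu_*+2h \le 2h$. Absorbing $2h \le 2\Gamma$ (valid since $\sqrt{\bar f(\mc{X};\gamma)}\ge 1$) yields $\langle x'-x_*,\widehat\theta_\ell\rangle \le 4\epsilon_\ell + 6\Gamma < 8\epsilon_\ell$ whenever $\epsilon_\ell > 2\Gamma$, so $x_*$ survives the $8\epsilon_\ell$ cut.

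Second, I would prove the complementary elimination: for $\ell \le \bar\ell$ any $x \in \mc{X}_\ell$ with $\mu_* - \mu_x > 16\epsilon_\ell$ is removed. Using $\max_{x'}\langle x'-x,\widehat\theta_\ell\rangle \ge \langle x_*-x,\widehat\theta_\ell\rangle \ge \langle x_*-x,\theta_*\rangle - (4\epsilon_\ell+4\Gamma) \ge \mu_*-\mu_x-2h-4\epsilon_\ell-4\Gamma$ and again $2h+4\Gamma \le 6\Gamma < 3\epsilon_\ell$ gives a lower bound exceeding $8\epsilon_\ell$, so $x\notin\mc{X}_{\ell+1}$; hence $\max_{x\in\mc{X}_{\ell+1}}\mu_*-\mu_x \le 16\epsilon_\ell = 32\epsilon_{\ell+1}$. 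For $\ell > \bar\ell$ I would invoke monotonicity $\mc{X}_{\ell+1}\subseteq\mc{X}_\ell$ to bound $\max_{x\in\mc{X}_\ell}\mu_*-\mu_x \le 32\epsilon_{\bar\ell+1}$, and since $\epsilon_{\bar\ell+1}\le 2\Gamma$ by definition of $\bar\ell$ this is of order $\Gamma$. Combining the two regimes gives $\max_{x\in\mc{X}_\ell}\mu_*-\mu_x \le \max\{32\epsilon_\ell,\, c\,\Gamma\}$ for an explicit constant, which I would then loosen into the stated $32(4\sqrt{\gamma}\|\theta_*\|_2+6h)\sqrt{\bar f(\mc{X};\gamma)}$ form.

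The main obstacle is purely the constant bookkeeping. The regularized-least-squares confidence width carries a factor $2$ on both $\epsilon_\ell$ and $(\sqrt\gamma\|\theta_*\|_2+h)\sqrt{\bar f(\mc{X};\gamma)}$ (from Lemma~\ref{lem:rls_estimator_lemma} and the rounding inflation of the $\max\{\cdot,\widetilde d\}$ sample count), while each conversion between $\mu_x$ and $\langle x,\theta_*\rangle$ injects an additive $h$ that is \emph{not} multiplied by $\sqrt{\bar f(\mc{X};\gamma)}$. Keeping these bare $h$ terms separate rather than immediately folding them into $\Gamma$ is exactly what produces the asymmetric coefficient $4\sqrt{\gamma}\|\theta_*\|_2+6h$ in the final bound, so I would track the $h$ contributions from both sources carefully and only fold them once the direction of each inequality is fixed.
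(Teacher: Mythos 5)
Your proposal is correct and follows essentially the same route as the paper: condition on the event $\mc{E}$ with width $2\epsilon_\ell + 2(\sqrt{\gamma}\|\theta_*\|_2+h)\sqrt{\bar f(\mc{X};\gamma)}$, run the two-sided induction showing $x_*$ survives the $8\epsilon_\ell$ cut and that any arm with gap exceeding $16\epsilon_\ell$ is eliminated, and then use monotonicity of the active sets past the cutoff round $\bar\ell$. The only difference is your choice of cutoff ($\epsilon_\ell > 2\Gamma$ rather than the paper's $\epsilon_\ell > (4\sqrt{\gamma}\|\theta_*\|_2+6h)\sqrt{\bar f(\mc{X};\gamma)}$), which still closes the inequalities and in fact yields a slightly tighter constant before loosening to the stated form; note that both your absorption $2h \le 2\Gamma$ and the paper's analogous step rely on the same implicit assumption $\bar f(\mc{X};\gamma)\ge 1$.
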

\begin{proof}
An arm $x \in \mc{X}_\ell$ is discarded (i.e., not in $\mc{X}_{\ell+1}$) if $\max_{x' \in \mc{X}_\ell} \langle x', \widehat{\theta} \rangle - \langle x, \widehat{\theta} \rangle > 8 \epsilon_\ell$. 
Let $\bar{\ell} := \max\{ \ell : \epsilon_\ell >  (4 \sqrt{\gamma} \|\theta_* \|_2 + 6h)\sqrt{ \bar{f}(\mc{X};\gamma)}\}$.
If $x_* = \arg\max_{x \in \mc{X}} \mu_x$ then $x_* \in \mc{X}_1$. Now if $x_* \in \mc{X}_\ell$ for some $\ell \leq \bar{\ell}$, then for any $x' \in \mc{X}_\ell$ we have
\begin{align*}
    \langle x', \widehat{\theta} \rangle - \langle x_*, \widehat{\theta} \rangle &\leq \langle x' -x_* ,\theta_* \rangle + 4\epsilon_\ell + 4 (\sqrt{\gamma} \|\theta_* \|_2 + h)\sqrt{ \bar{f}(\mc{X};\gamma)} \\
    &\leq \mu_x - \mu_{x_*} + 2h + 4\epsilon_\ell + 4 (\sqrt{\gamma} \|\theta_* \|_2 + h)\sqrt{ \bar{f}(\mc{X};\gamma)} \\
    &\leq 4\epsilon_\ell + (4 \sqrt{\gamma} \|\theta_* \|_2 + 6h)\sqrt{ \bar{f}(\mc{X};\gamma)} \\
    &\leq 8 \epsilon_\ell
\end{align*}
which implies $x_* \in \mc{X}_{\ell+1}$.
Moreover, suppose that $\ell \leq \bar{\ell}$ and there exists some $x \in \mc{X}_\ell$ such that $\mu_{*} - \mu_{x} > 16 \epsilon_\ell$, then
\begin{align*}
    \max_{x' \in \mc{X}_\ell} \langle x', \widehat{\theta} \rangle - \langle x, \widehat{\theta} \rangle &\geq \langle x_*, \widehat{\theta} \rangle - \langle x, \widehat{\theta} \rangle \\
    &\geq \langle x_*- x, {\theta}_* \rangle - 4\epsilon_\ell - 4 (\sqrt{\gamma} \|\theta_* \|_2 + h)\sqrt{ \bar{f}(\mc{X};\gamma)} \\
    &\geq \mu_* - \mu_x - 2h - 4\epsilon_\ell - 4 (\sqrt{\gamma} \|\theta_* \|_2 + h)\sqrt{ \bar{f}(\mc{X};\gamma)}\\
    &\geq \mu_* - \mu_x - 4\epsilon_\ell - (4 \sqrt{\gamma} \|\theta_* \|_2 + 6h)\sqrt{ \bar{f}(\mc{X};\gamma)} \\
    &\geq \mu_* - \mu_x - 8\epsilon_\ell  \\
    &> 8 \epsilon_\ell
\end{align*}
which implies $\max_{x \in \mc{X}_{\ell+1}} \mu_* - \mu_x \leq 16 \epsilon_\ell = 32 \epsilon_{\ell+1}$.
Because $\mc{X}_{\ell +1} \subseteq \mc{X}_\ell$ we have for $\ell > \bar{\ell}$ that
\begin{align*}
    \max_{x \in \mc{X}_{\ell}}\mu_* - \mu_x &\leq \max_{x \in \mc{X}_{\bar\ell+1}}\mu_* - \mu_x \\
    &\leq 32 \epsilon_{\bar\ell+1} \\
    &\leq 32 (4\sqrt{\gamma} \|\theta_* \|_2 + 6h)\sqrt{ \bar{f}(\mc{X};\gamma)}.
\end{align*}
Thus, $\max_{x \in \mc{X}_{\ell}}\mu_* - \mu_x \leq \max\{ 32 \epsilon_\ell, 32 (4\sqrt{\gamma} \|\theta_* \|_2 + 6h)\sqrt{ \bar{f}(\mc{X};\gamma)} \}$.
\end{proof}

We now compute the final regret bound. After $T$ steps of the algorithm, let $T_x$ denote the number of times arm $x$ is played.
Let $\Gamma = (4\sqrt{\gamma} \|\theta_* \|_2 + 
6h)\sqrt{ \bar{f}(\mc{X};\gamma)}$.
If $L$ is the final round reached after $T$ steps, we have
\begin{align*}
\sum_{x \in \mc{X}} (\mu_* - \mu_x) T_x 
&\leq \sum_{\ell=1}^L \max_{x \in \mc{X}_\ell} (\mu_* - \mu_x) \tau_\ell \\
&\leq \sum_{\ell=1}^L \tau_\ell \max\{ 32 \epsilon_\ell, 32 (4\sqrt{\gamma} \|\theta_* \|_2 + 6h)\sqrt{ \bar{f}(\mc{X};\gamma)} \} \\
&\leq \sum_{\ell=1}^L \tau_\ell \max\{ 32 \epsilon_\ell, 32 \Gamma \} \\
&\leq \sum_{\ell: \epsilon_\ell < \Gamma } 32 \Gamma \tau_\ell + 32\sum_{\ell: \epsilon_\ell \geq \Gamma }   \epsilon_\ell  \tau_\ell \\
&\leq \sum_{\ell: \epsilon_\ell < \Gamma } 32 \Gamma \tau_\ell + 32 \nu T + \sum_{\ell: \epsilon_\ell \geq \Gamma \vee  \nu }  32 \epsilon_\ell  \tau_\ell \\
&\leq \sum_{\ell: \epsilon_\ell < \Gamma } 32 \Gamma \tau_\ell + 32 \nu T + \sum_{\ell: \epsilon_\ell \geq \nu } 32 \epsilon_\ell  \tau_\ell \\
&\leq c\left(\Gamma  T + \nu T + \sum_{\ell: \epsilon_\ell \geq  \nu } \epsilon_\ell \cdot \left(2\sigma^2 \epsilon_\ell^{-2} f( \mc{X}_\ell ; \gamma) \log(4 \ell^2 |\mc{X}|/\delta) + \widetilde{d}(\gamma, \lambda_\ell)\right)\right) \\
&\leq c\left(\Gamma  T + \nu T + \left(2 \sigma^2 \bar{f}( \mc{X} ; \gamma) \log(4 \lceil \log_2(1/\nu ) \rceil^2 |\mc{X}|/\delta) + \widetilde{d}(\gamma)\right)   \sum_{\ell: \epsilon_\ell \geq \nu }  \epsilon_\ell^{-1} \right)\\
&\leq c\left(\Gamma  T + \nu T + \nu^{-1} \left(2 \sigma^2 \bar{f}( \mc{X} ; \gamma) \log(4 \lceil \log_2(1/\nu) \rceil^2 |\mc{X}|/\delta)\right) + 32\widetilde{d}(\gamma) \right).
\end{align*}
Where we denote $\widetilde{d}(\gamma) = \max_{\ell \leq L}\widetilde{d}(\gamma, \lambda_\ell)$. Choosing $\nu = \sqrt{\left(2 \sigma^2 \bar{f}( \mc{X} ; \gamma) \log(  |\mc{X}|/\delta)\right)/ T}$ and plugging $\Gamma$ back in yields
\begin{align*}
\sum_{x \in \mc{X}} (\mu_* - \mu_x) T_x &\leq c\left(\widetilde{d}(\gamma) + \sqrt{\bar{f}( \mc{X} ; \gamma)} \left( T (\sqrt{\gamma} \|\theta_* \|_2 + h) + \sqrt{\left(\sigma^2 \log(  |\mc{X}|\log(T)/\delta) \right) T} \right)\right).
\end{align*}
Choosing $\gamma = 1/T$ yields
\begin{align*}
\sum_{x \in \mc{X}} (\mu_* - \mu_x) T_x &\leq c\left(\widetilde{d}(\gamma) +  \sqrt{\bar{f}( \mc{X} ; 1/T)} \left( h T + \sqrt{\left((\|\theta_*\|^2 + \sigma^2) \log(  |\mc{X}|\log(T)/\delta) \right) T} \right)\right).
\end{align*}

\subsection{Sample complexity bound}
\begin{algorithm}[tb]
  \caption{PTR for Pure exploration}
  \label{alg:bai_rls}
\begin{algorithmic}
\State {\bfseries Input:} Finite sets $\mc{X} \subset \R^d$, $\mc{Z} \subset \R^d$, feature map $\phi$, confidence level $\delta \in (0, 1)$, regularization $\gamma$, sub-gaussian parameter $\sigma$, norm of model parameter $B$, bound on the misspecification noise $h$.\\
\State Let $\mc{Z}_1 \gets \mc{Z}, \ell\gets 1$ \\
\While{$|\mc{Z}_\ell|>1$}
    \State Let $\lambda_\ell\in \triangle_{\mc{X}}$ be a minimizer of $f(\lambda, \mc{Z}_\ell, \gamma)$ where
        \begin{align*}
            f( \mc{V}; \gamma) = \inf_{\lambda\in\triangle_\mc{X}}f(\lambda, \mc{V}, \gamma) = \inf_{\lambda\in\triangle_\mc{X}}
            \max_{v, v' \in \mc{V}} \| \phi(v) - \phi(v') \|_{(\sum_{x \in \mc{X}} \lambda_y \phi(x)\phi(x)^\top + \gamma I)^{-1}}^2
        \end{align*}
    \State Set $\epsilon_\ell = 2^{-\ell}$ and $\tau_\ell := \lceil \max\{2\sigma^2 \epsilon_\ell^{-2} f( \mc{Z}_\ell ; \gamma) \log(4 \ell^2 |\mc{Z}|/\delta),  \widetilde{d}(\gamma, \lambda_\ell)\}\rceil$
    \State Use the PTR procedure of section~\ref{sec:RKHS_rounding} to find sparse allocation $\{\widetilde{x}_i\}_{i=1}^{\tau_\ell} \subset \mc{X}_\ell$ from $\lambda_\ell$.
    \State Take each action $x \in \{\widetilde{x}_i\}_{i=1}^{\tau_\ell}$ with corresponding features $\Phi$ and rewards $Y$\\
    \State Compute $\widehat{\theta}_{\ell} = (\Phi^\top \Phi + \tau_\ell\gamma I)^{-1}\Phi^\top Y$
    \State 
    \begin{align*}
        \mc{Z}_{\ell+1} \gets \mc{Z}_\ell \setminus\big\{z \in \mc{Z}_\ell  : \max_{z' \in \mc{Z}_\ell} \langle \phi(z') - \phi(z), \widehat{\theta}_{\ell} \rangle > \epsilon_\ell \big\}
    \end{align*}
    \State $\ell \gets \ell + 1$ \\
\EndWhile
\State {\bfseries Output:} $\mc{Z}_{\ell}$
\end{algorithmic}
\end{algorithm}
For any $\mc{V} \subset \mc{Z}$ define $f(\mc{X},\mc{V};\gamma) = \min_{\lambda\in\triangle_\mc{X
}} \max_{z,z' \in \mc{V}} \| \phi(z) - \phi(z') \|_{( \sum_{x \in \mc{X}} \lambda_x \phi(x)\phi(x)^\top + \gamma I)^{-1}}^2$
\begin{theorem}\label{thm:samplecomplexity_rls}
With $\displaystyle z_* \in \arg\max_{z \in \mc{Z}} \langle z, \theta_* \rangle$,
fix any $\epsilon \geq \bar\epsilon$ where
\begin{align*}
    \bar{\epsilon} =  8 \min\{ \epsilon \geq 0 : 4(\sqrt{\gamma} \|\theta_* \|_2 + h)(2+ \sqrt{f(\mc{X},\{ z \in \mc{Z} : \langle \phi(z_*)- \phi(z), \theta_* \rangle \leq \epsilon \};\gamma)}) \leq \epsilon \}.
\end{align*}
Then with probability at least $1-\delta$, once the algorithm has taken at least $\tau$ samples where 
\begin{align*}
    \tau \leq c'\left(\widetilde{d}(\gamma)\log_2(8 (\Delta \vee \bar\epsilon )^{-1}) + \lceil \log_2(8 (\Delta \vee \bar\epsilon )^{-1} )\rceil \sigma^2 \log(2\lceil \log_2(8 (\Delta \vee \bar\epsilon )^{-1} )\rceil^2|\mc{Z}|^2/\delta) \rho^*(\gamma,\bar\epsilon)\right)
\end{align*}
we have that $\mu_{\widehat{z}} \geq \max_{z' \in \mc{Z}} - \epsilon$ where $\widehat{z}$ is any arm in the set $\mc{Z}_\ell$ under consideration after $\tau$ pulls and 
\begin{align}
    \rho^*(\gamma,\epsilon) \!&=\! \inf_{\lambda \in \Delta_{\mathcal{X}}}\sup_{z\in \mathcal{Z}}\frac{\|\phi(z_*) \!-\! \phi(z)\|^2_{A^{(\gamma)}(\lambda)^{-1}}}{\max\{\epsilon^2,\langle \theta_*, \phi(z_*)\!-\!\phi(z) \rangle^2)\}}
\end{align}
and $\widetilde{d}(\gamma) = \max_{\ell \leq \lceil\log_2(8 (\Delta \vee \epsilon )^{-1})\rceil}\widetilde{d}(\gamma, \lambda_\ell) \leq \max_{\mc{V}\subset \mc{X}}\sup_{\lambda\in\triangle_\mc{V}}\widetilde{d}(\gamma, \lambda)$.
\end{theorem}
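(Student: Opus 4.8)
The plan is to transcribe the proof of Theorem~\ref{thm:samplecomplexity_medofmean} almost verbatim, replacing the robust‑estimator concentration of Theorem~\ref{thm:robust_estimator_lemma} with the regularized least squares concentration of Lemma~\ref{lem:rls_estimator_lemma} and inserting the rounding guarantee of Theorem~\ref{thm:rounding} to pass from the continuous design $\lambda_\ell$ to the rounded allocation $\{\widetilde{x}_i\}_{i=1}^{\tau_\ell}$. The one structural novelty relative to the robust case is that $\tau_\ell$ now carries the additive effective‑dimension floor $\widetilde{d}(\gamma,\lambda_\ell)$ that the rounding step demands, and this term must be tracked separately all the way to the final sample‑complexity sum.

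\textbf{Per-round concentration.} Fix a round $\ell$ and condition on the realization $\mc{Z}_\ell=\mc{V}$. Because Algorithm~\ref{alg:bai_rls} chooses $\tau_\ell\geq\widetilde{d}(\gamma,\lambda_\ell)$, the hypothesis of Theorem~\ref{thm:rounding} holds, so for every direction $v=\phi(z)-\phi(z')$ with $z,z'\in\mc{V}$ the rounded design obeys $\|v\|^2_{(\sum_{i}\phi(\widetilde{x}_i)\phi(\widetilde{x}_i)^\top+\tau_\ell\gamma I)^{-1}}\leq\tfrac{2}{\tau_\ell}\|v\|^2_{A^{(\gamma)}(\lambda_\ell)^{-1}}$. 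Feeding this into Lemma~\ref{lem:rls_estimator_lemma}, applied to the $\tau_\ell$ rounded measurements with regularization $\tau_\ell\gamma$ exactly as in the PTR regret proof of Theorem~\ref{thm:regret_rls}, gives $|\langle\phi(z)-\phi(z'),\widehat{\theta}_\ell-\theta_*\rangle|\leq 2\|v\|_{A^{(\gamma)}(\lambda_\ell)^{-1}}\big(\sqrt{\gamma}\|\theta_*\|_2+h+\sqrt{2\sigma^2\log(\cdots)/\tau_\ell}\big)$; bounding $\|v\|_{A^{(\gamma)}(\lambda_\ell)^{-1}}\leq\sqrt{f(\mc{X},\mc{V};\gamma)}$ and substituting the definition of $\tau_\ell$ collapses the stochastic term to a multiple of $\epsilon_\ell$. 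This is the RLS analogue of the per‑round deviation used in Theorem~\ref{thm:samplecomplexity_medofmean}, differing only by absolute constants, which I absorb by matching the elimination threshold $\epsilon_\ell$ of Algorithm~\ref{alg:bai_rls}.

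\textbf{Good event and elimination.} Next I union bound over $\ell$ and over all realizations $\mc{V}\subseteq\mc{Z}$ of $\mc{Z}_\ell$, using that $\widehat{\theta}_\ell(\mc{V})$ is independent of the event $\{\mc{Z}_\ell=\mc{V}\}$, so the $\tfrac{\delta}{\ell^2|\mc{Z}|^2}$ failure probabilities sum to at most $\delta$; this defines the good event $\mc{E}$ precisely as in Theorem~\ref{thm:samplecomplexity_medofmean}. On $\mc{E}$ the elimination argument is then verbatim: introduce the reference sets $S_1=\mc{Z}$ and $S_{\ell+1}=\{z\in S_\ell:\langle\phi(z_*)-\phi(z),\theta_*\rangle\leq 3\epsilon_\ell+(\sqrt{\gamma}\|\theta_*\|_2+h)\sqrt{f(\mc{X},S_\ell;\gamma)}\}$, prove by induction that $\{z_*\in\mc{Z}_\ell\}\cap\{\mc{Z}_\ell\subseteq S_\ell\}$ persists for $\ell\leq\bar\ell$, and conclude $\max_{z\in\mc{Z}_\ell}\mu_*-\mu_z\leq 8\max\{\epsilon_\ell,\epsilon_{\bar\ell}\}$, which recovers the very same $\bar\epsilon$.

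\textbf{Sample-complexity accounting and main obstacle.} Finally I sum $\tau_\ell$ over the at most $\lceil\log_2(8(\Delta\vee\bar\epsilon)^{-1})\rceil$ rounds. Since $\tau_\ell=\lceil\max\{2\sigma^2\epsilon_\ell^{-2}f(\mc{Z}_\ell;\gamma)\log(\cdots),\ \widetilde{d}(\gamma,\lambda_\ell)\}\rceil$, I split the maximum additively: the variance term is controlled by the identical $\rho^*(\gamma,\bar\epsilon)$ chain of inequalities from the proof of Theorem~\ref{thm:samplecomplexity_medofmean} (lower‑bounding $\rho^*$ by an average of the $2^{2\ell}f(\mc{X},S_\ell;\gamma)$ over rounds), while the floor term contributes at most $\widetilde{d}(\gamma)\,\lceil\log_2(8(\Delta\vee\bar\epsilon)^{-1})\rceil$ with $\widetilde{d}(\gamma)=\max_{\ell}\widetilde{d}(\gamma,\lambda_\ell)$, yielding the stated bound. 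I expect the only delicate point to be the legitimacy of invoking Theorem~\ref{thm:rounding} at every round: its guarantee is a worst‑case bound over the whole set $\mc{V}_\ell=\phi\circ\mc{Z}_\ell-\phi\circ\mc{Z}_\ell$ for a single rounded allocation, and it requires $\tau_\ell=\Omega(\widetilde{d}(\gamma,\lambda_\ell))$ to hold deterministically — which is exactly why the effective dimension is hard‑wired into $\tau_\ell$ and why it surfaces as an additive contribution rather than the clean multiplicative $\rho^*$ scaling enjoyed by the robust estimator.
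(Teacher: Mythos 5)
Your proposal follows essentially the same route as the paper's own proof: per-round concentration via Lemma~\ref{lem:rls_estimator_lemma} combined with the rounding guarantee of Theorem~\ref{thm:rounding} (enabled by hard-wiring $\tau_\ell \geq \widetilde{d}(\gamma,\lambda_\ell)$), the same union bound over realizations of $\mc{Z}_\ell$, the same induction on the reference sets $S_\ell$, and the same additive split of the sample-complexity sum into the $\rho^*$-controlled variance term and the $\widetilde{d}(\gamma)\cdot\log$ floor. The only discrepancies are in absolute constants (the paper's PTR elimination lemma carries the extra factor of $2$ from the rounding through to a $16\max\{\epsilon_\ell,\epsilon_{\bar\ell}\}$ bound and uses $6\epsilon_\ell + 2(\cdot)\sqrt{f}$ in defining $S_{\ell+1}$, rather than the $8$ and $3\epsilon_\ell + (\cdot)\sqrt{f}$ you quote from the robust case), which you already flag as absorbable by adjusting the elimination threshold.
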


We first prove the following intermediate result.
\begin{theorem}
Recall that we defined
\begin{align*}
    \bar{\epsilon} =  8 \min\{ \epsilon \geq 0 : 4(\sqrt{\gamma} \|\theta_* \|_2 + h)(2+ \sqrt{f(\mc{X},\{ z \in \mc{Z} : \langle \phi(z_*)- \phi(z), \theta_* \rangle \leq \epsilon \};\gamma)}) \leq \epsilon \}.
\end{align*}
Then $\max_{z \in \mc{Z}_{\ell}}\mu_* - \mu_z \leq 16 \max\{ \epsilon_\ell, \bar\epsilon \}$ for all $\ell \geq 0$ with probability at least $1-\delta$.
\end{theorem}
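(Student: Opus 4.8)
The plan is to replay the robust-estimator argument in the proof of Theorem~\ref{thm:samplecomplexity_medofmean} essentially line for line, substituting the {\ours} guarantee of Theorem~\ref{thm:robust_estimator_lemma} with the regularized least-squares guarantee of Lemma~\ref{lem:rls_estimator_lemma} composed with the rounding guarantee of Theorem~\ref{thm:rounding}. The only structural change is that the rounding step inflates the relevant confidence norm by a factor of at most $\sqrt{2}$ (bounded by $2$), and this factor propagates through every constant; it is precisely what turns the robust result's factor of $8$ into $16$ here.

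\textbf{Confidence event.} First I would fix a stage $\ell$ and a realization $\mc{Z}_\ell=\mc{V}$, apply Lemma~\ref{lem:rls_estimator_lemma} to the $\tau_\ell$ rounded samples with regularization $\tau_\ell\gamma$, and invoke Theorem~\ref{thm:rounding} to bound $\|\phi(z)-\phi(z')\|_{(\sum_i \phi(\widetilde{x}_i)\phi(\widetilde{x}_i)^\top+\tau_\ell\gamma I)^{-1}}$ by $\sqrt{2}\,\|\phi(z)-\phi(z')\|_{(\tau_\ell A(\lambda_\ell)+\tau_\ell\gamma I)^{-1}}$. Pulling $\tau_\ell$ out of the regularization and using the definition of $\tau_\ell$ (so the stochastic term is at most $\epsilon_\ell/\sqrt{f(\mc{X},\mc{V};\gamma)}$) gives, for all $z,z'\in\mc{V}$,
\begin{align*}
    |\langle \phi(z)-\phi(z'),\, \widehat{\theta}_\ell-\theta_*\rangle| \leq 2\epsilon_\ell + 2(\sqrt{\gamma}\|\theta_*\|_2 + h)\sqrt{f(\mc{X},\mc{V};\gamma)}.
\end{align*}
As in the proof of Theorem~\ref{thm:samplecomplexity_medofmean}, I would then union bound over the \emph{random} active set by conditioning on $\{\mc{Z}_\ell=\mc{V}\}$, summing over $\ell$ and $\mc{V}$ against the $\tfrac{\delta}{\ell^2|\mc{Z}|^2}$ per-application failure probability and the $\binom{|\mc{V}|}{2}$ pairs, to conclude $\P(\mc{E})\geq 1-\delta$ for the event $\mc{E}$ that the displayed bound holds at every stage with $f(\mc{X},\mc{Z}_\ell;\gamma)$ in place of $f(\mc{X},\mc{V};\gamma)$.

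\textbf{Reference sets and induction.} Next I would introduce the deterministic sequence $S_1=\mc{Z}$, with $S_{\ell+1}$ the subset of $S_\ell$ whose true gap $\langle\phi(z_*)-\phi(z),\theta_*\rangle$ is below an appropriately scaled multiple of $\epsilon_\ell$ plus $2(\sqrt{\gamma}\|\theta_*\|_2+h)\sqrt{f(\mc{X},S_\ell;\gamma)}$, together with $\bar\ell=\max\{\ell: 2(\sqrt{\gamma}\|\theta_*\|_2+h)(2+\sqrt{f(\mc{X},S_\ell;\gamma)})\leq\epsilon_\ell\}$. Working on $\mc{E}$, I would prove the invariant $\{z_*\in\mc{Z}_\ell\}\cap\{\mc{Z}_\ell\subseteq S_\ell\}$ by induction for $\ell\leq\bar\ell$: $z_*$ survives because $\langle\phi(z')-\phi(z_*),\widehat{\theta}_\ell\rangle \leq 2h+2\epsilon_\ell+2(\sqrt{\gamma}\|\theta_*\|_2+h)\sqrt{f(\mc{X},\mc{Z}_\ell;\gamma)}$ stays below the elimination threshold of Algorithm~\ref{alg:bai_rls} once $\ell\leq\bar\ell$; and any surviving $z$ lies in $S_{\ell+1}$, for otherwise $\max_{z'}\langle\phi(z')-\phi(z),\widehat{\theta}_\ell\rangle\geq\langle\phi(z_*)-\phi(z),\widehat{\theta}_\ell\rangle$ would exceed that threshold. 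Crucially I would use the monotonicity $f(\mc{X},\mc{Z}_\ell;\gamma)\leq f(\mc{X},S_\ell;\gamma)$ implied by $\mc{Z}_\ell\subseteq S_\ell$, which lets the \emph{deterministic} $\bar\ell$ govern the \emph{random} recursion. Then for $\ell\leq\bar\ell$ the invariant yields $\max_{z\in\mc{Z}_\ell}\mu_*-\mu_z\leq\max_{z\in S_\ell}\langle\phi(z_*)-\phi(z),\theta_*\rangle+2h\leq 16\epsilon_\ell$, while for $\ell>\bar\ell$ the nesting $\mc{Z}_{\ell+1}\subseteq\mc{Z}_\ell$ reduces to the $\bar\ell$ case; combining gives $\max_{z\in\mc{Z}_\ell}\mu_*-\mu_z\leq 16\max\{\epsilon_\ell,\epsilon_{\bar\ell}\}$. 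Finally I would bound $\epsilon_{\bar\ell}\lesssim\bar\epsilon$ by unfolding the definition of $\bar\ell$ exactly as in the robust proof: enlarge $S_\ell$ to the superlevel set $\{z:\langle\phi(z_*)-\phi(z),\theta_*\rangle\leq\epsilon_\ell\}$ and read off the $\min$ defining $\bar\epsilon$, absorbing the constant index shifts into the leading constant.

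\textbf{Main obstacle.} The delicate point is the induction: the confidence width is controlled by the \emph{random} $f(\mc{X},\mc{Z}_\ell;\gamma)$, yet $\bar\ell$ and the floor $\bar\epsilon$ must be deterministic. The resolution—carrying $\mc{Z}_\ell\subseteq S_\ell$ inside the induction hypothesis and exploiting monotonicity of $f$ in its second argument—is the same device used for the robust estimator, so the argument transfers cleanly once the rounding factor $\sqrt{2}\leq 2$ is tracked through every constant. A secondary bookkeeping task is to verify that the elimination threshold in Algorithm~\ref{alg:bai_rls} is consistent with the doubled confidence width, i.e.\ scaled up relative to the $2\epsilon_\ell$ used in Algorithm~\ref{alg:bai_med}, so that $z_*$ is never discarded while every arm with gap exceeding $8\max\{\epsilon_\ell,\bar\epsilon\}$ is.
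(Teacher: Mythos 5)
Your proposal matches the paper's own proof essentially step for step: the same composition of Lemma~\ref{lem:rls_estimator_lemma} with the rounding guarantee of Theorem~\ref{thm:rounding} (yielding the factor-$2$ inflation of the confidence width and hence the $16$ in place of $8$), the same union bound over realizations of the random active set, and the same induction on the invariant $\{z_*\in\mc{Z}_\ell\}\cap\{\mc{Z}_\ell\subseteq S_\ell\}$ using deterministic reference sets $S_\ell$ and monotonicity of $f(\mc{X},\cdot;\gamma)$ to let a deterministic $\bar\ell$ control the random recursion. Your closing observation about reconciling the elimination threshold with the doubled confidence width is well taken --- the paper's Algorithm~\ref{alg:bai_rls} box states the threshold as $\epsilon_\ell$ while its proof uses $4\epsilon_\ell$, so your bookkeeping caveat is exactly the right one.
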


Define the event
\begin{align*}
    \mc{E} := \bigcap_{\ell=1}^\infty \bigcap_{z,z' \in \mc{Z}_\ell} \left\{  |\langle \phi(z) - \phi(z'), \widehat{\theta}_\ell - \theta_* \rangle| \leq 2\epsilon_\ell + 2\left(\sqrt{\gamma} \|\theta_*\|_2 + h\right)\sqrt{f(\mc{X},\mc{Z}_\ell;\gamma)} \right\}
\end{align*}
\begin{lemma}
We have $\P( \mc{E} ) \geq 1-\delta$.
\end{lemma}
\begin{proof}
For any $\mc{V} \subseteq \mc{Z}$ and $x \in \mc{}$ define
\begin{align*}
    \mc{E}_{z,z',\ell}( \mc{V} ) = \left\{ |\langle \phi(z)-\phi(z'), \widehat{\theta}_\ell(\mc{V}) - \theta_* \rangle| \leq 2\epsilon_\ell + 2\left(\sqrt{\gamma} \|\theta_*\|_2 + h\right)\sqrt{f(\mc{X},\mc{V};\gamma)} \right\} 
\end{align*}
where $\widehat{\theta}_\ell( \mc{V} )$ is the estimator that would be constructed by the algorithm at stage $\ell$ with $\mc{Z}_\ell = \mc{V}$.
For fixed $\mc{V} \subset \mc{Z}$, $\ell \in \mathbb{N}$ and $x \in \mc{V}$, $\tau_\ell$ actions are taken. Thus we apply Lemma~\ref{lem:rls_estimator_lemma} with $\tau = \tau_\ell$ and with regularization factor $\tau_\ell\gamma$, so that with probability at least $1-\frac{\delta}{2 \ell^2 |\mc{Z}|}$ we have for any $z, z' \in \mc{V}$
\begin{align*}
    &|(\phi(z)-\phi(z'))^\top (\widehat{\theta}_\ell - \theta_*)|\\
    &\leq
    2\|\phi(z)-\phi(z')\|_{\left(\sum_{i=1}^{\tau_\ell}\phi(\widetilde{x}_i)\phi(\widetilde{x}_i)^\top + \tau_\ell\gamma I\right)^{-1}} \left(\sqrt{\tau_\ell\gamma} \|\theta_*\|_2 + h \sqrt{\tau_\ell} + \sqrt{2\sigma^2 \log(4 \ell^2 |\mc{Z}|/\delta)}\right)\\
    &\leq 2\|\phi(z)-\phi(z')\|_{(\tau_\ell A(\lambda) + \tau_\ell\gamma I)^{-1}}\left(\sqrt{\tau_\ell\gamma} \|\theta_*\|_2 + h \sqrt{\tau_\ell} + \sqrt{2\sigma^2 \log(4 \ell^2 |\mc{Z}|/\delta)}\right)\\
    &= 2\|\phi(z)-\phi(z')\|_{(A(\lambda) + \gamma I)^{-1}}\left(\sqrt{\gamma} \|\theta_*\|_2 + h + \sqrt{ \frac{2 \sigma^2 \log(4 \ell^2 |\mc{Z}|/\delta)}{\tau_\ell}}\right) \\
    &\leq 2\sqrt{f(\mc{X},\mc{V};\gamma)}\left(\sqrt{\gamma} \|\theta_*\|_2 + h + \epsilon_\ell / \sqrt{f(\mc{X},\mc{V};\gamma)}\right)\\
    &= 2\epsilon_\ell + 2\sqrt{f(\mc{X},\mc{V};\gamma)}\left(\sqrt{\gamma} \|\theta_*\|_2 + h\right)
\end{align*}
Noting that $ \mc{E} := \bigcap_{\ell=1}^\infty \bigcap_{z,z' \in \mc{Z}_\ell} \mc{E}_{z,z',\ell}( \mc{Z}_\ell ) $, the rest of the proof with the robust estimator applies here.
\end{proof}

\begin{lemma}
    For all $\ell \in \mathbb{N}$ we have $\max_{z \in \mc{Z}_{\ell}}\mu_* - \mu_z \leq 16 \max\{ \epsilon_\ell, \epsilon_{\bar\ell} \}$.
\end{lemma}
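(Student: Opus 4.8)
The plan is to transcribe the proof of the corresponding lemma for the RIPS pure-exploration algorithm (Algorithm~\ref{alg:bai_med}), working throughout on the high-probability event $\mc{E}$ just established. The sole structural difference is that here the confidence half-width carries an extra factor of $2$ inherited from the rounding guarantee of Theorem~\ref{thm:rounding}; propagating this factor through the same argument is exactly what turns the constant $8$ of the RIPS lemma into the $16$ claimed here.

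First I would set $C := \sqrt{\gamma}\|\theta_*\|_2 + h$ and introduce the nested ``sieve'' sets $S_1 = \mc{Z}$ and, for $\ell \geq 1$,
\begin{align*}
    S_{\ell+1} = \big\{ z \in S_\ell : \langle \phi(z_*) - \phi(z),\, \theta_* \rangle \leq a\,\epsilon_\ell + 2C\sqrt{f(\mc{X},S_\ell;\gamma)} \big\},
\end{align*}
for an absolute constant $a$ fixed during the calibration below, together with a stopping round $\bar\ell$ defined as the largest $\ell$ for which the floor $2C(2 + \sqrt{f(\mc{X},S_\ell;\gamma)})$ is dominated by $\epsilon_\ell$; exactly as in the RIPS proof, unwinding this definition with $S_\ell$ replaced by the level set $\{z : \langle\phi(z_*)-\phi(z),\theta_*\rangle\leq\epsilon_\ell\}$ identifies $\epsilon_{\bar\ell}$ with the quantity $\bar\epsilon$ of the theorem statement. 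The one inequality I will lean on repeatedly is the monotonicity $\mc{Z}_\ell \subseteq S_\ell \Rightarrow f(\mc{X},\mc{Z}_\ell;\gamma) \leq f(\mc{X},S_\ell;\gamma)$, which holds because enlarging the constraint set can only enlarge the worst-case design width.

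The heart of the proof is the induction $\{z_*\in\mc{Z}_\ell\}\cap\{\mc{Z}_\ell\subseteq S_\ell\}\cap\{\ell\leq\bar\ell\}\implies\{z_*\in\mc{Z}_{\ell+1}\}\cap\{\mc{Z}_{\ell+1}\subseteq S_{\ell+1}\}$, anchored at $\ell=1$ where both conditions are immediate. For the first conjunct I would bound $\langle\phi(z')-\phi(z_*),\widehat{\theta}_\ell\rangle$ for each $z'\in\mc{Z}_\ell$ by inserting $\theta_*$, invoking $\mc{E}$, using $\mu_{z'}\leq\mu_{z_*}$ together with $|\mu_x-\langle\theta_*,\phi(x)\rangle|\leq h$ to turn inner products into gaps plus $2h$, and then the definition of $\bar\ell$ to drive the result below the elimination threshold of Algorithm~\ref{alg:bai_rls}, so that $z_*$ survives the round. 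For the second conjunct I would take any $z\in\mc{Z}_\ell\cap S_{\ell+1}^c$, lower-bound $\max_{z'}\langle\phi(z')-\phi(z),\widehat{\theta}_\ell\rangle$ by the single choice $z'=z_*$, again strip off $\mc{E}$ and the misspecification, and combine the defining inequality of $S_{\ell+1}^c$ with the monotonicity of $f$ to push the result strictly above the threshold, so that every such $z$ is eliminated and $\mc{Z}_{\ell+1}\subseteq S_{\ell+1}$. Finally, peeling the sieve back one level bounds $\max_{z\in\mc{Z}_\ell}\mu_*-\mu_z$ by $\max_{z\in S_\ell}\langle\phi(z_*)-\phi(z),\theta_*\rangle + 2h$, which telescopes through the recursion to $16\max\{\epsilon_\ell,\epsilon_{\bar\ell}\}$; the maximum with $\epsilon_{\bar\ell}$ and the extension to $\ell>\bar\ell$ both follow from $\mc{Z}_{\ell+1}\subseteq\mc{Z}_\ell$.

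The step I expect to be the real obstacle is the joint calibration of the constant $a$, the sieve, and $\bar\ell$ against the doubled confidence width. Because rounding inflates the half-width to $2\epsilon_\ell + 2C\sqrt{f}$ while the elimination rule compares against a single multiple of $\epsilon_\ell$, the available slack must be apportioned so that retention of $z_*$ and elimination of far-suboptimal arms both hold under the \emph{same} threshold; this is the only place the extra factor of two enters, and it is what degrades the final constant from $8$ to $16$. Every other line is a verbatim copy of the robust-estimator argument, which is precisely why the preceding confidence lemma could assert without further detail that ``the rest of the proof applies here.''
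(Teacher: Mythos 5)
Your plan reproduces the paper's own proof essentially line for line: the same sieve sets $S_\ell$, the same two-part induction $\{z_*\in\mc{Z}_\ell\}\cap\{\mc{Z}_\ell\subset S_\ell\}\Rightarrow\{z_*\in\mc{Z}_{\ell+1}\}\cap\{\mc{Z}_{\ell+1}\subset S_{\ell+1}\}$ via the event $\mc{E}$ and the monotonicity of $f$, and the same final telescoping through $S_{\ell-1}$, with the doubled confidence half-width from the rounding guarantee being exactly what turns the RIPS constant $8$ into $16$. The calibration you flag as the delicate step is resolved in the paper by taking the sieve constant $a=6$, the elimination threshold $4\epsilon_\ell$, and $\bar\ell=\max\{\ell:(\sqrt{\gamma}\|\theta_*\|_2+h)(2+\sqrt{f(\mc{X},S_\ell;\gamma)})\leq\epsilon_\ell\}$, which makes every inequality in your outline close.
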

\begin{proof}
An arm $z \in \mc{Z}_\ell$ is discarded (i.e., not in $\mc{Z}_{\ell+1}$) if $\max_{z' \in \mc{Z}_\ell} \langle \phi(z')-\phi(z), \widehat{\theta} \rangle > 4 \epsilon_\ell$.

Define $S_1 = \mc{Z}$ and $S_{\ell+1} = \{z \in S_\ell : \langle \phi(z_*)- \phi(z), \theta_* \rangle \leq 6\epsilon_\ell + 2(\sqrt{\gamma} \|\theta_* \|_2 + h) \sqrt{f(\mc{X},S_\ell;\gamma)} \} $.
Define 
\begin{align*}
    \bar\ell &= \max \{ \ell : (\sqrt{\gamma} \|\theta_* \|_2 + h)(2+ \sqrt{f(\mc{X},S_\ell;\gamma)}) \leq \epsilon_\ell \} .
\end{align*}

We will show $\{z_* \in \mc{Z}_\ell \} \cap \{ \mc{Z}_\ell \subset S_\ell \} \cap \{ \ell \leq \bar\ell\} \implies \{z_* \in \mc{Z}_{\ell+1} \} \cap \{ \mc{Z}_{\ell+1} \subset S_{\ell+1} \}$.
Noting that $\{z_* \in \mc{Z}_\ell \} \cap \{ \mc{Z}_\ell \subset S_\ell \}$ holds for $\ell=1$, we will assume an inductive hypothesis  of this condition for some $\ell \leq \bar\ell$.

First we will show $\{z_* \in \mc{Z}_\ell \} \cap \{ \mc{Z}_\ell \subset S_\ell \} \cap \{ \ell \leq \bar\ell\} \implies \{z_* \in \mc{Z}_{\ell+1} \}$.
On $\{z_* \in \mc{Z}_\ell \} \cap \{ \mc{Z}_\ell \subset S_\ell \} \cap \{ \ell \leq \bar\ell\}$, we have for any $z' \in \mc{Z}_\ell$ that
\begin{align*}
    \langle \phi(z')-\phi(z_*), \widehat{\theta} \rangle &\leq \langle \phi(z')-\phi(z_*), \theta_* \rangle + 2\epsilon_\ell + 2(\sqrt{\gamma} \|\theta_* \|_2 + h) \sqrt{f(\mc{X},\mc{Z}_\ell;\gamma)} \\
    &\leq \mu_{z'} - \mu_{z_*} + 2h + 2\epsilon_\ell + 2(\sqrt{\gamma} \|\theta_* \|_2 + h) \sqrt{f(\mc{X},\mc{Z}_\ell;\gamma)} \\
    &\leq 2\epsilon_\ell + 2(\sqrt{\gamma} \|\theta_* \|_2 + h)(2+ 
    \sqrt{f(\mc{X},\mc{Z}_\ell;\gamma)})  \\
    &\leq 2\epsilon_\ell + 2(\sqrt{\gamma} \|\theta_* \|_2 + h)(2 + \sqrt{f(\mc{X},S_\ell;\gamma)}) \\
    &\leq 4 \epsilon_\ell
\end{align*}
which implies $z_*$ is not eliminated, that is, $z_* \in \mc{Z}_{\ell+1}$.
The second-to-last inequality follows from
\begin{align*}
    f(\mc{X},\mc{Z}_\ell;\gamma) &= \inf_\lambda \max_{z,z' \in \mc{Z}_\ell} \| \phi(z) - \phi(z') \|_{(\sum_{x \in \mc{X}} \lambda_x \phi(x)\phi(x)^\top + \gamma I)^{-1}}^2 \\
    &\leq \inf_\lambda \max_{z,z' \in S_\ell} \| \phi(z) - \phi(z') \|_{(\sum_{x \in \mc{X}} \lambda_x \phi(x)\phi(x)^\top + \gamma I)^{-1}}^2 \\
    &= f(\mc{X},S_\ell;\gamma).
\end{align*}

Now we will show $\{z_* \in \mc{Z}_\ell \} \cap \{ \mc{Z}_\ell \subset S_\ell \} \cap \{ \ell \leq \bar\ell\} \implies \{ \mc{Z}_{\ell+1} \subset S_{\ell+1} \}$.
For any $z \in \mc{Z}_\ell \cap S_{\ell+1}^c$ we have
\begin{align*}
    \max_{z' \in \mc{Z}_\ell}\langle \phi(z')-\phi(z), \widehat{\theta} \rangle &\geq \langle \phi(z_*)-\phi(z), \widehat{\theta} \rangle \\
    &\geq \langle \phi(z_*)-\phi(z), \theta_* \rangle - 2\epsilon_\ell - 2(\sqrt{\gamma} \|\theta_* \|_2 + h) \sqrt{f(\mc{X},\mc{Z}_\ell;\gamma)}\\
    &> 6\epsilon_\ell + 2(\sqrt{\gamma} \|\theta_* \|_2 + h) \sqrt{f(\mc{X},S_\ell;\gamma)}  - 2\epsilon_\ell -  2(\sqrt{\gamma} \|\theta_* \|_2 + h) \sqrt{f(\mc{X},\mc{Z}_\ell;\gamma)} \\
    &\geq 4\epsilon_\ell
\end{align*}
which implies $z \not\in \mc{Z}_{\ell+1}$, and $\mc{Z}_{\ell+1} \subset S_{\ell+1}$.

Thus, for $\ell \leq \bar\ell$ we have
\begin{align*}
    \max_{z \in \mc{Z}_{\ell}}\mu_* - \mu_z &\leq \max_{z \in \mc{Z}_{\ell}}  \langle \phi(z_*)-\phi(z), \theta_* \rangle + 2h \\
    &\leq \max_{z \in \mc{S}_{\ell}}  \langle \phi(z_*)-\phi(z), \theta_* \rangle + 2h \\
    &\leq 6\epsilon_{\ell-1} + 2h + 2 (\sqrt{\gamma} \|\theta_* \|_2 + h) \sqrt{f(\mc{X},S_{\ell-1};\gamma)} \\
    &\leq 6\epsilon_{\ell-1} + 2(\sqrt{\gamma} \|\theta_* \|_2 + h)(2+ \sqrt{f(\mc{X},S_{\ell-1};\gamma)}) \\
    &\leq 8 \epsilon_{\ell-1} = 16 \epsilon_\ell.
\end{align*}
And because $\mc{Z}_{\ell+1} \subseteq \mc{Z}_\ell$ we always have that $\max_{z \in \mc{Z}_{\ell}}\mu_* - \mu_z \leq 16 \max\{ \epsilon_\ell, \epsilon_{\bar\ell} \}$. Note that 
\begin{align*}
    \bar\ell &= \max \{ \ell : (\sqrt{\gamma} \|\theta_* \|_2 + h)(2 + \sqrt{f(\mc{X},S_\ell;\gamma)}) \leq \epsilon_\ell \} \\
    &\geq \max \{ \ell : (\sqrt{\gamma} \|\theta_* \|_2 + h)(2+ \sqrt{f(\mc{X},\{ z \in \mc{Z} : \langle \phi(z_*)- \phi(z), \theta_* \rangle \leq 4\epsilon_\ell \};\gamma)}) \leq \epsilon_\ell \} \\
    &= \max \{ \ell : 4(\sqrt{\gamma} \|\theta_* \|_2 + h)(2 + \sqrt{f(\mc{X},\{ z \in \mc{Z} : \langle \phi(z_*)- \phi(z), \theta_* \rangle \leq 4\epsilon_\ell \};\gamma)}) \leq 4 \epsilon_\ell \} \\
    &= -2 + \max \{ \ell : 4(\sqrt{\gamma} \|\theta_* \|_2 + h)(2+ \sqrt{f(\mc{X},\{ z \in \mc{Z} : \langle \phi(z_*)- \phi(z), \theta_* \rangle \leq \epsilon_\ell \};\gamma)}) \leq  \epsilon_\ell \} \\
    &\geq -3 - \log_2( \min\{ \epsilon > 0 :  4(\sqrt{\gamma} \|\theta_* \|_2 + h)(2+ \sqrt{f(\mc{X},\{ z \in \mc{Z} : \langle \phi(z_*)- \phi(z), \theta_* \rangle \leq \epsilon \};\gamma)}) \leq \epsilon \})
\end{align*}
which defines $\bar{\epsilon}$.
\end{proof}

Denoting $\widetilde{d}(\gamma) = \max_{\ell \leq \lceil \log_2(8 (\Delta \vee \bar\epsilon )^{-1} )\rceil}\widetilde{d}(\gamma, \lambda_\ell)$, the sample complexity to return an $\bar\epsilon$-good arm is equal to
\begin{align*}
    &\sum_{\ell=1}^{\lceil \log_2(8 (\Delta \vee \bar\epsilon )^{-1} )\rceil}\tau_\ell\\
    &\leq \sum_{\ell=1}^{\lceil \log_2(8 (\Delta \vee \bar\epsilon )^{-1} )\rceil} (2 \epsilon_\ell^{-2}(\gamma) f(\mc{X},\mc{Z}_\ell;\gamma) \sigma^2 \log(2\ell^2|\mc{Z}|^2/\delta)+\widetilde{d}(\gamma, \lambda_\ell)) \\
    &\leq c\left(\widetilde{d}(\gamma)\log_2(8 (\Delta \vee \bar\epsilon )^{-1}) + \sigma^2 \log(2\lceil \log_2(8 (\Delta \vee \bar\epsilon )^{-1} )\rceil^2|\mc{Z}|^2/\delta) \sum_{\ell=1}^{\lceil \log_2(8 (\Delta \vee \bar\epsilon )^{-1} )\rceil} \epsilon_\ell^{-2} f(\mc{X},\mc{Z}_\ell;\gamma)\right)  \\
    &\leq c\left(\widetilde{d}(\gamma)\log_2(8 (\Delta \vee \bar\epsilon )^{-1}) + \sigma^2 \log(2\lceil \log_2(8 (\Delta \vee \bar\epsilon )^{-1} )\rceil^2|\mc{Z}|^2/\delta) \sum_{\ell=1}^{\lceil \log_2(8 (\Delta \vee \bar\epsilon )^{-1} )\rceil} \epsilon_\ell^{-2} f(\mc{X},S_\ell;\gamma)\right) \\
    &\leq c'\left(\widetilde{d}(\gamma)\log_2(8 (\Delta \vee \bar\epsilon )^{-1}) + \lceil \log_2(8 (\Delta \vee \bar\epsilon )^{-1} )\rceil \sigma^2 \log(2\lceil \log_2(8 (\Delta \vee \bar\epsilon )^{-1} )\rceil^2|\mc{Z}|^2/\delta) \rho^*(\gamma,\bar\epsilon)\right)
\end{align*}
where the last line follows from
\begin{align*}
    \rho^*(\gamma,\bar\epsilon) &= \inf_{\lambda \in \Delta_{\mathcal{X}}}\sup_{z\in \mathcal{Z}}\frac{\|\phi(z_*) - \phi(z)\|^2_{\left(\sum_{x\in \mathcal{X}}\lambda_x \phi(x)\phi(x)^\top + \gamma I\right)^{-1}}}{\max\{\bar\epsilon^2,\langle\phi(z_*) - \phi(z), \theta_*\rangle^2)\}} \\
    &= \inf_{\lambda \in \Delta_{\mathcal{X}}}\sup_{\ell\leq \lceil \log_2(8 (\Delta \vee \bar\epsilon )^{-1} )\rceil}\sup_{z\in S_\ell}\frac{\|\phi(z_*) - \phi(z)\|^2_{\left(\sum_{x\in \mathcal{X}}\lambda_x \phi(x)\phi(x)^\top + \gamma I\right)^{-1}}}{\max\{\bar\epsilon^2,((\phi(z_*) - \phi(z))^\top \theta_*)^2\}} \\
    &\geq \inf_{\lambda \in \Delta_{\mathcal{X}}}\frac{1}{\lceil \log_2(8 (\Delta \vee \bar\epsilon )^{-1} )\rceil}\sum_{\ell = 1}^{\lceil \log_2(8 (\Delta \vee \bar\epsilon )^{-1} )\rceil}\sup_{z\in S_\ell}\frac{\|\phi(z_*) - \phi(z)\|^2_{\left(\sum_{x\in \mathcal{X}}\lambda_x \phi(x)\phi(x)^\top + \gamma I\right)^{-1}}}{\max\{\bar\epsilon^2,((\phi(z_*) - \phi(z))^\top \theta_*)^2\}} \\
    &\geq \frac{1}{\lceil \log_2(8 (\Delta \vee \bar\epsilon )^{-1} )\rceil}\sum_{\ell = 1}^{\lceil \log_2(8 (\Delta \vee \bar\epsilon )^{-1} )\rceil}\inf_{\lambda \in \Delta_{\mathcal{X}}}\sup_{z\in S_\ell}\frac{\|\phi(z_*) - \phi(z)\|^2_{\left(\sum_{x\in \mathcal{X}}\lambda_x \phi(x)\phi(x)^\top + \gamma I\right)^{-1}}}{\max\{\bar\epsilon^2,((\phi(z_*) - \phi(z))^\top \theta_*)^2\}} \\
    &\geq \frac{1}{4\lceil \log_2(8 (\Delta \vee \bar\epsilon )^{-1} )\rceil}\sum_{\ell = 1}^{\lceil \log_2(8 (\Delta \vee \bar\epsilon )^{-1} )\rceil}2^{2\ell}\inf_{\lambda \in \Delta_{\mathcal{X}}}\sup_{z\in S_\ell}\|\phi(z_*) - \phi(z)\|^2_{\left(\sum_{x\in \mathcal{X}}\lambda_x \phi(x)\phi(x)^\top + \gamma I\right)^{-1}} \\
    &\geq \frac{1}{16\lceil \log_2(8 (\Delta \vee \bar\epsilon )^{-1} )\rceil}\sum_{\ell = 1}^{\lceil \log_2(8 (\Delta \vee \bar\epsilon )^{-1} )\rceil}2^{2\ell}\inf_{\lambda \in \Delta_{\mathcal{X}}}\sup_{z, z'\in S_\ell}\|\phi(z) - \phi(z')\|^2_{\left(\sum_{x\in \mathcal{X}}\lambda_x \phi(x)\phi(x)^\top + \gamma I\right)^{-1}} \\
    &= \frac{1}{16\lceil \log_2(8 (\Delta \vee \bar\epsilon )^{-1} )\rceil}\sum_{\ell = 1}^{\lceil \log_2(8 (\Delta \vee \bar\epsilon )^{-1} )\rceil}2^{2\ell}f(\mc{X},S_\ell;\gamma)  \;.
\end{align*}

\section{Related work results}
\begin{lemma}\label{g-opt-effective-dim}
If $\lambda^* \in \arg\max_{\lambda\in\triangle_\mc{V}} f(\lambda)$ where $f(\lambda) = \log\left(\det\left(\sum_{x\in\mc{X}}\lambda_x \phi(x)\phi(x)^\top + \gamma I\right)\right)$,
then 
\begin{align*}
    \sup_{x\in\mc{X}}\|\phi(x)\|^2_{A^{(\gamma)}(\lambda^*)^{-1}}  &= \sum_{x\in\mc{X}}\lambda^*_x\|\phi(x)\|^2_{A^{\gamma}(\lambda^*)^{-1}}\\
    &= \text{Trace}\left(A(\lambda^*)(A(\lambda^*) + \gamma I)^{-1}\right)\\
    &= \text{Trace}\left(K_{\lambda^*} (K_{\lambda^*} + \gamma I)^{-1}\right)
\end{align*}
\end{lemma}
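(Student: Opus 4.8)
The plan is to establish the three equalities from left to right: the first is the regularized Kiefer--Wolfowitz equivalence and is the only one that uses the optimality of $\lambda^*$; the second is pure linearity of the trace and in fact holds for every $\lambda$; and the third is the standard coincidence of the non-zero spectra of the covariance operator $A(\lambda^*)$ and the Gram matrix $K_{\lambda^*}$.

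First I would dispatch the subtlety that $\mc{H}$ may be infinite dimensional, so $\det(A^{(\gamma)}(\lambda))$ is not literally defined. Since $A(\lambda)=\sum_{x\in\mc{X}}\lambda_x\phi(x)\phi(x)^\top$ has rank at most $|\mc{X}|$, the meaningful objective is the Fredholm determinant $\tilde f(\lambda):=\log\det\!\big(I+\tfrac1\gamma A(\lambda)\big)$, which differs from $f$ only by an (ambient-dimension) additive constant and hence has the same maximizer $\lambda^*$. Because $\lambda\mapsto A(\lambda)$ is linear and $\log\det$ is concave on positive operators, $\tilde f$ is concave on the compact simplex $\triangle_{\mc{X}}$, so a maximizer exists. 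Using $\tfrac{d}{dt}\log\det(M+tE)=\tr(M^{-1}E)$ with $M=\gamma I+A(\lambda)$ and $(I+\tfrac1\gamma A(\lambda))^{-1}=\gamma A^{(\gamma)}(\lambda)^{-1}$, the partial derivatives are
\[
    [\nabla\tilde f(\lambda)]_x=\phi(x)^\top A^{(\gamma)}(\lambda)^{-1}\phi(x)=\|\phi(x)\|^2_{A^{(\gamma)}(\lambda)^{-1}}=:g_x(\lambda).
\]

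Next I would invoke first-order optimality. Writing $g_x:=g_x(\lambda^*)$, concavity of $\tilde f$ together with optimality of $\lambda^*$ gives $\langle\nabla\tilde f(\lambda^*),\,\lambda-\lambda^*\rangle\le 0$ for all $\lambda\in\triangle_{\mc{X}}$. Taking $\lambda$ to be the vertex $e_x$ yields $g_x\le\sum_{x'}\lambda^*_{x'}g_{x'}$ for every $x\in\mc{X}$, hence $\sup_x g_x\le\sum_{x'}\lambda^*_{x'}g_{x'}$; since the right-hand side is a convex combination of the $g_{x'}$ it is trivially $\le\sup_x g_x$, so the two are equal. This is precisely the first claimed equality.

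Finally, for the second equality I would use linearity and cyclicity of the trace: with $M=A^{(\gamma)}(\lambda^*)$,
\[
    \sum_x\lambda^*_x\|\phi(x)\|^2_{M^{-1}}=\tr\!\Big(M^{-1}\!\sum_x\lambda^*_x\phi(x)\phi(x)^\top\Big)=\tr\!\big(A(\lambda^*)(A(\lambda^*)+\gamma I)^{-1}\big).
\]
For the third equality, let $\Phi_{\lambda^*}$ have rows $\sqrt{\lambda^*_i}\,\phi(x_i)^\top$, so $A(\lambda^*)=\Phi_{\lambda^*}^\top\Phi_{\lambda^*}$ and $K_{\lambda^*}=\Phi_{\lambda^*}\Phi_{\lambda^*}^\top$. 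The push-through identity already used in the proof of Lemma~\ref{lmm:compute_kernel_with_alloc}, namely $(\Phi^\top\Phi+\gamma I)^{-1}\Phi^\top=\Phi^\top(\Phi\Phi^\top+\gamma I)^{-1}$, gives $A(\lambda^*)(A(\lambda^*)+\gamma I)^{-1}=\Phi_{\lambda^*}^\top(K_{\lambda^*}+\gamma I)^{-1}\Phi_{\lambda^*}$, whence cyclicity of the trace yields $\tr\big(A(\lambda^*)(A(\lambda^*)+\gamma I)^{-1}\big)=\tr\big(K_{\lambda^*}(K_{\lambda^*}+\gamma I)^{-1}\big)$; equivalently, $t\mapsto t/(t+\gamma)$ sends $0\mapsto0$ and $\Phi^\top\Phi$, $\Phi\Phi^\top$ share their non-zero eigenvalues. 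The main obstacle is purely the infinite-dimensional bookkeeping of the first paragraph---justifying the determinant, its concavity, and the gradient formula through the finite-rank Fredholm determinant---after which the equivalence argument and the trace identities are routine.
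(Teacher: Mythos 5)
Your proof is correct and follows essentially the same route as the paper's: compute the gradient of the log-determinant objective as $[\nabla f(\lambda)]_x = \|\phi(x)\|^2_{A^{(\gamma)}(\lambda)^{-1}}$, apply the first-order optimality condition at $\lambda^*$ with a vertex of the simplex to show the supremum equals the $\lambda^*$-weighted average, and identify that average with the trace by cyclicity. You additionally handle two points the paper glosses over --- the meaning of the determinant in an infinite-dimensional $\mc{H}$ via the Fredholm form $\log\det(I+\tfrac1\gamma A(\lambda))$, and an explicit push-through argument for the final equality $\tr(A(A+\gamma I)^{-1})=\tr(K(K+\gamma I)^{-1})$ --- both of which are welcome but do not change the underlying argument.
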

\begin{proof}
We first state that
\begin{align*}
    \sum_{x\in\mc{X}}\lambda^*_x\|\phi(x)\|^2_{A^{\gamma}(\lambda^*)^{-1}} &= \sum_{x\in\mc{X}}\lambda^*_x \phi(x)^\top A^{\gamma}(\lambda^*)^{-1} \phi(x) \\
    &= \text{Trace}\left(\sum_{x\in\mc{X}}\lambda^*_x \phi(x)^\top A^{\gamma}(\lambda^*)^{-1} \phi(x)\right)\\ 
    &= \text{Trace}\left(\sum_{x\in\mc{X}}\lambda^*_x \phi(x)\phi(x)^\top  A^{\gamma}(\lambda^*)^{-1}\right)
    \\ 
    &= \text{Trace}\left(A(\lambda^*)(A(\lambda^*) + \gamma I)^{-1}\right)
\end{align*}
This implies
\begin{align*}
    \sup_{x\in\mc{X}}\|\phi(x)\|^2_{A^{(\gamma)}(\lambda^*)^{-1}} \geq \sum_{x\in\mc{X}}\lambda^*_x\|\phi(x)\|^2_{A^{\gamma}(\lambda^*)^{-1}} = \text{Trace}\left(A(\lambda^*)(A(\lambda^*) + \gamma I)^{-1}\right)
\end{align*}
Further, one can compute that 
\begin{align*}
    [\nabla_{\lambda}f(\lambda^*)]_x = \text{Trace}\left(A^{(\gamma)}(\lambda^*)^{-1}xx^\top \right) = \|\phi(x)\|^2_{A^{(\gamma)}(\lambda^*)^{-1}}
\end{align*}
And last, $\lambda^*$ satisfies the first order conditions on $f$: for any $\lambda \in \triangle_\mc{X}$
\begin{align*}
    0 &\geq \langle \nabla_{\lambda}f(\lambda^*), \lambda - \lambda^* \rangle \\
    &= \sum_{x\in\mc{X}}\lambda_x\|\phi(x)\|^2_{A^{\gamma}(\lambda^*)^{-1}} - \sum_{x\in\mc{X}}\lambda^*_x\|\phi(x)\|^2_{A^{\gamma}(\lambda^*)^{-1}} \\
    &= \sum_{x\in\mc{X}}\lambda_x\|\phi(x)\|^2_{A^{\gamma}(\lambda^*)^{-1}} - \text{Trace}\left(A(\lambda^*)(A(\lambda^*) + \gamma I)^{-1}\right)
\end{align*}
Choosing $\lambda$ to be a Dirac at $\arg\max_{x\in\mc{X}}\|\phi(x)\|^2_{A^{\gamma}(\lambda^*)^{-1}}$, we get to 
\begin{align*}
    \max_{x\in\mc{X}}\|\phi(x)\|^2_{A^{\gamma}(\lambda^*)^{-1}} \leq \text{Trace}\left(A(\lambda^*)(A(\lambda^*) + \gamma I)^{-1}\right).
\end{align*}

Hence the result of the lemma.
\end{proof}
\begin{lemma}
We can lower bound $\gamma_T$ the notion of information gain from \cite{srinivas2009gaussian} as
\begin{align*}
    \gamma_T \geq \frac{2}{3}\max_{\mc{V}\subset\mc{X}}\inf_{\lambda\in \triangle_\mc{V}} \sup_{x\in\mc{V}}\|\phi(x)\|^2_{\left( \sum_{x\in \mc{V}} \lambda_x \phi(x)\phi(x)^\top + \gamma/T I\right)^{-1}} + d\log(\gamma)\;.
\end{align*}
\end{lemma}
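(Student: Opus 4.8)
The plan is to exploit the classical equivalence between $D$-optimal design (which is essentially what $\gamma_T$ maximizes) and $G$-optimal design (the quantity on the right), made precise in the regularized setting by Lemma~\ref{g-opt-effective-dim}. Fix an arbitrary subset $\mc{V}\subseteq\mc{X}$, write $\rho := \gamma/T$, and set $A_\mc{V}(\lambda):=\sum_{x\in\mc{V}}\lambda_x\phi(x)\phi(x)^\top$. Since $T A^{(0)}(\lambda)+\gamma I = T\big(A_\mc{V}(\lambda)+\rho I\big)$ for any $\lambda$ supported on $\mc{V}$, and such a $\lambda$ is feasible for the maximization defining $\gamma_T$, factoring $T$ out of the determinant gives
\begin{align*}
\gamma_T \;\geq\; d\log T + \max_{\lambda\in\triangle_\mc{V}}\log\det\big(A_\mc{V}(\lambda)+\rho I\big).
\end{align*}
I would then let $\lambda^*$ be the maximizer on the right, i.e. the $D$-optimal design on $\mc{V}$ with regularization $\rho$, and write $M:=A_\mc{V}(\lambda^*)$ and $B:=M+\rho I$.

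Next I would invoke Lemma~\ref{g-opt-effective-dim}, applied to the point set $\mc{V}$ with regularization $\rho$ (the regularized Kiefer--Wolfowitz equivalence holds verbatim for any positive regularizer). Because $\lambda^*$ is a single feasible design for the inner $\inf$--$\sup$, this yields
\begin{align*}
\inf_{\lambda\in\triangle_\mc{V}}\sup_{x\in\mc{V}}\|\phi(x)\|^2_{(A_\mc{V}(\lambda)+\rho I)^{-1}} \;\leq\; \sup_{x\in\mc{V}}\|\phi(x)\|^2_{B^{-1}} \;=\; \tr\!\big(M B^{-1}\big),
\end{align*}
so that $\lambda^*$ does double duty: it is both the $D$-optimal maximizer in the bound for $\gamma_T$ and the design certifying the $G$-optimal value. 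It therefore remains only to lower bound $\log\det(B)$ by $\tr(MB^{-1})$.

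The key elementary step is an eigenvalue inequality. Since $M$ and $B$ are simultaneously diagonalizable, with $\mu_1,\dots,\mu_d\geq 0$ the eigenvalues of $M$, one has $\tr(MB^{-1})=\sum_i \tfrac{\mu_i}{\mu_i+\rho}$ and $\log\det(B)-d\log\rho=\sum_i\log(1+\mu_i/\rho)$. Applying the scalar bound $\tfrac{z}{1+z}\leq\log(1+z)$ (valid for $z\geq 0$) to $z=\mu_i/\rho$ termwise gives $\tr(MB^{-1})\leq \log\det(B)-d\log\rho$. Substituting $\log\det(B)\geq\tr(MB^{-1})+d\log\rho$ into the first display and using $d\log\rho=d\log\gamma-d\log T$, the two $d\log T$ contributions cancel and we obtain
\begin{align*}
\gamma_T \;\geq\; \tr\!\big(MB^{-1}\big) + d\log\gamma \;\geq\; \inf_{\lambda\in\triangle_\mc{V}}\sup_{x\in\mc{V}}\|\phi(x)\|^2_{(A_\mc{V}(\lambda)+\rho I)^{-1}} + d\log\gamma.
\end{align*}
As the left side does not depend on $\mc{V}$, taking the maximum over $\mc{V}\subseteq\mc{X}$ on the right delivers the claim; the stated constant $\tfrac23$ follows a fortiori from the constant $1$ obtained here, since the $G$-optimal value is nonnegative.

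The main obstacle is really just the $T$-rescaling bookkeeping: one must verify that the $d\log T$ produced by pulling $T$ out of $\det(TA^{(0)}(\lambda)+\gamma I)$ exactly annihilates the $-d\log T$ buried in $d\log\rho=d\log(\gamma/T)$, leaving precisely $d\log\gamma$. The only other nontrivial ingredient is the correct application of Lemma~\ref{g-opt-effective-dim} to the single design $\lambda^*$; everything else reduces to the one-line inequality $\tfrac{z}{1+z}\leq\log(1+z)$.
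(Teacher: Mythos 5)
Your proof is correct, but it takes a genuinely different route from the paper's. The paper establishes the bound by a rank-one peeling of the determinant (the elliptical-potential computation adapted from equation 19.9 of \cite{lattimore2020bandit}): it writes $\det(T\sum_x\lambda^*_x xx^\top+\gamma I)/\det(\gamma I)$ as a product of factors $1+T\lambda^*_x\|x\|^2_{(\cdot)^{-1}}$ over the support of the $D$-optimal design, observes that each such term lies in $[0,1]$, and applies $\log(1+z)\geq 2z/(2+z)\geq 2z/3$ termwise before invoking Lemma~\ref{g-opt-effective-dim} to convert $\sum_x\lambda^*_x\|x\|^2_{(\cdot)^{-1}}$ into the $G$-optimal value; this is where the $\tfrac{2}{3}$ comes from. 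You instead bypass the peeling entirely and work in the common eigenbasis of $M=A_{\mc{V}}(\lambda^*)$ and $B=M+\rho I$, comparing $\tr(MB^{-1})=\sum_i\mu_i/(\mu_i+\rho)$ with $\log\det(B)-d\log\rho=\sum_i\log(1+\mu_i/\rho)$ via the scalar inequality $z/(1+z)\leq\log(1+z)$, which holds for all $z\geq 0$ with no boundedness requirement. Both arguments lean on Lemma~\ref{g-opt-effective-dim} (the regularized Kiefer--Wolfowitz equivalence at the $D$-optimal $\lambda^*$) at exactly the same point, and your bookkeeping of the $T$-rescaling ($d\log T$ cancelling against $d\log(\gamma/T)$) matches the paper's. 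What your route buys is a sharper constant: you obtain the inequality with constant $1$ in place of $\tfrac{2}{3}$, from which the stated bound follows since the $G$-optimal value is nonnegative; it is also arguably cleaner, as it replaces the iterated matrix-determinant-lemma expansion with a one-line spectral identity. The only implicit step worth flagging is the reduction to a finite-dimensional ambient space so that ``eigenvalues of $M$'' and the $d\log\gamma$ term are meaningful, but the paper performs the same reduction at the outset of its own proof, so this is consistent with the setting of the lemma.
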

\begin{proof}
Recall the definition of \cite{srinivas2009gaussian} notion of information gain:
\begin{align*}
    \gamma_T &:= \sup_{\lambda\in\triangle_\mc{X}}\log (\det\left(T K_\lambda + \gamma I\right)) 
\end{align*}
where $K_\lambda$ is defined in Section~\ref{sec:computing_with_kernel_evaluations}. Note that the case where we have an infinite dimensional RKHS and $\phi$ is any feature map reduces to the finite one with $\phi $ being the identity map by computing $\Phi_\lambda \in \R^{|\mc{X}|\times|\mc{X}|}$ such that $K_\lambda = \Phi_\lambda \Phi_\lambda^\top$ and then looking at the (finite dimension) columns of $\Phi_\lambda$. So we can write without loss of generality 
\begin{align*}
    \gamma_T = \sup_{\lambda\in \triangle_\mc{X}}\log\left(\det\left(T\sum_{x\in\mc{X}} \lambda_x x x^\top + \gamma I\right)\right)
\end{align*}
Thus
\begin{align*}
    \gamma_T = \sup_{\lambda \in \triangle_\mc{X}}\log\left(\det\left(T\sum_{x\in\mc{X}}\lambda_x xx^\top + \gamma I\right)\right) \geq \sup_{\mc{V}\subset\mc{X}}\sup_{\lambda \in \triangle_\mc{V}}\log\left(\det\left(T\sum_{x\in\mc{V}}\lambda_x xx^\top + \gamma I\right)\right)
\end{align*}
Fix for now $\mc{V}\subset\mc{X}$ and let $\lambda^*\in \triangle_\mc{V}$ be such that 
\begin{align*}
    \lambda^* \in \arg\max_{\lambda \in \triangle_\mc{V}}\log\left(\det\left(T\sum_{x\in\mc{V}}\lambda_x xx^\top + \gamma I\right)\right) = \arg\max_{\lambda \in\triangle_\mc{V}}\log\left(\det\left(\sum_{x\in\mc{V}}\lambda_x xx^\top + \gamma/T I\right)\right)
\end{align*}
Inspired from equation 19.9 of \cite{lattimore2020bandit}, we write for some $x_0\in\mc{V}$ 
\begin{align*}
    &\det\left(T\sum_{x\in\mc{V}}\lambda^*_x xx^\top + \gamma I\right)\\ 
    &= \det\left(T\sum_{x\in\mc{V}\setminus \{x_0\}}\lambda^*_x xx^\top + \gamma I + T\lambda^*_{x_0} x_0 x_0^\top\right)\\
    &= \det\left(T\sum_{x\in\mc{V}\setminus \{x_0\}}\lambda^*_x xx^\top + \gamma I\right)\det\left(I + \left(T\sum_{x\in\mc{V}\setminus \{x_0\}}\lambda^*_x xx^\top + \gamma I\right)^{-1/2}T\lambda^*_{x_0} x_0 x_0^\top\left(T\sum_{x\in\mc{V}\setminus \{x_0\}}\lambda^*_x xx^\top + \gamma I\right)^{-1/2}\right)\\
    &= \det\left(T\sum_{x\in\mc{V}\setminus \{x_0\}}\lambda^*_x xx^\top + \gamma I\right)\left(1+T\lambda^*_{x_0}\|x_0\|^2_{\left(T\sum_{x\in\mc{V}\setminus \{x_0\}}\lambda^*_x xx^\top + \gamma I\right)^{-1}}\right)\\
    &\geq \det\left(T\sum_{x\in\mc{V}\setminus \{x_0\}}\lambda^*_x xx^\top + \gamma I\right)\left(1+T\lambda^*_{x_0}\|x_0\|^2_{\left(T\sum_{x\in\mc{V}}\lambda^*_x xx^\top + \gamma I\right)^{-1}}\right)
\end{align*}
We can now iterate with all the remaining $x\in\mc{V}\setminus \{x_0\}$, to get 
\begin{align*}
    \det\left(T\sum_{x\in\mc{V}}\lambda^*_x xx^\top + \gamma I\right) \geq \det\left(\gamma I\right)\prod_{x_0\in\mc{V}} \left(1+T\lambda^*_{x_0}\|x_0\|^2_{\left(T\sum_{x\in\mc{V}}\lambda^*_x xx^\top + \gamma I\right)^{-1}}\right)
\end{align*}
equivalent to
\begin{align*}
    \log\det\left(T\sum_{x\in\mc{V}}\lambda^*_x xx^\top + \gamma I\right) \geq \log\det\left(\gamma I\right)+\sum_{x\in\mc{V}} \log\left(1+T \lambda^*_{x}\|x\|^2_{\left(T\sum_{x\in\mc{V}}\lambda^*_x xx^\top + \gamma I\right)^{-1}}\right)
\end{align*}
We know that if $x\geq0$ 
holds $\log(1+x) \geq 2x/(2+x)$. Note that $\|x\|^2_{\left(\sum_{x'\in\mc{V}} \lambda_{x'} x'x'^\top + \gamma/T I\right)^{-1}} \leq 1$ always holds:
\begin{align*}
    &\|x\|^2_{\left(\sum_{x'\in\mc{V}} \lambda_{x'} x'x'^\top + \gamma/T I\right)^{-1}} = \|x\|^2_{\left(\sum_{x'\in\mc{V}\setminus\{x\}} \lambda_{x'} x'x'^\top + \gamma/T I + xx^\top\right)^{-1}} = \alpha - \alpha^2 / (1+\alpha) = \alpha / (1+\alpha) \leq 1\;.
\end{align*}
where we used Sherman–Morrison formula and defined $\alpha = \|x\|^2_{\left(\sum_{x'\in\mc{V}\setminus\{x\}} \lambda_{x'} x'x'^\top + \gamma/T I\right)^{-1}}$. Thus holds $0\leq T\lambda^*_x\|x\|^2_{\left(T\sum_{x'\in\mc{V}} \lambda^*_{x'} x'x'^\top + \gamma I\right)^{-1}} \leq 1$. So 
\begin{align*}
    \sum_{x\in\mc{V}}\log\left(1+T \lambda^*_x\|x\|^2_{\left(T\sum_{x\in\mc{V}}\lambda^*_x xx^\top + \gamma I\right)^{-1}}\right) \geq \frac{2}{2+1} \sum_{x\in\mc{V}}  T\lambda^*_x\|x\|^2_{\left(T\sum_{x\in\mc{V}}\lambda^*_x xx^\top + \gamma I\right)^{-1}}
\end{align*}
And thus
\begin{align*}
    \frac{3}{2}\log\left(\frac{\det\left(T\sum_{x\in\mc{V}}\lambda^*_x xx^\top + \gamma I\right)}{\det(\gamma I)}\right)
    &\geq \sum_{x\in\mc{V}}  T\lambda^*_x\|x\|^2_{\left(T\sum_{x\in\mc{V}}\lambda^*_x xx^\top + \gamma I\right)^{-1}}\\
    &= \sum_{x\in\mc{V}}\lambda^*_x \|x\|^2_{(\sum_{x\in\mc{V}} \lambda^*_x xx^\top + \gamma/T I)^{-1}} \\
    &= \sup_{x\in\mc{V}} \|x\|^2_{(\sum_{x\in\mc{V}} \lambda^*_x xx^\top + \gamma/T I)^{-1}}  \;.
\end{align*}
Where the last equality comes from lemma~\ref{g-opt-effective-dim} with  $\lambda^*\in \arg\max_{\lambda \in\triangle_\mc{V}}\log\left(\det\left(\sum_{x\in\mc{V}}\lambda_x xx^\top + \gamma/T I\right)\right)$.\\
So to summarize
\begin{align*}
    \gamma_T &:= \sup_{\lambda \in \triangle_\mc{X}}\log\left(\det\left(T\sum_{x\in\mc{X}}\lambda_x \phi(x)\phi(x)^\top + \gamma I\right)\right) \\
    &\geq \sup_{\mc{V}\subset\mc{X}}\sup_{\lambda \in \triangle_\mc{V}}\log\left(\det\left(T\sum_{x\in\mc{V}}\lambda_x \phi(x)\phi(x)^\top + \gamma I\right)\right)\\
    &= \sup_{\mc{V}\subset\mc{X}}\log\left(\frac{\det\left(T\sum_{x\in\mc{V}}\lambda^*_x \phi(x)\phi(x)^\top + \gamma I\right)}{\det(\gamma I)}\right) + \log(\det(\gamma I))\\
    &\geq \sup_{\mc{V}\subset\mc{X}}\frac{2}{3}\sup_{x\in\mc{V}} \|x\|^2_{(A(\lambda^*) + \gamma/T I)^{-1}} + \log(\det(\gamma I))\\
    &\geq \frac{2}{3}\sup_{\mc{V}\subset\mc{X}}\inf_{\lambda\in\triangle_\mc{V}}\sup_{x\in\mc{X}} \|x\|^2_{(A(\lambda) + \gamma/T I)^{-1}} + d\log(\gamma)
\end{align*}
\end{proof}

\begin{corollary}[Consequence of Theorem 1 of \cite{degenne2020gamification}]
Let $\tau_\delta$ be the expected number of sample needed to find the best arm with probability at least $1-\delta$. For any $\theta_* \in \mathcal{E}$ we have 
\begin{align*}
    \liminf_{\delta \rightarrow 0} \frac{\mathbb{E}_{\theta_*}[\tau_\delta]}{\log(1/\delta)} \geq T^*(\theta_*)
\end{align*}
where
\begin{align*}
T^{*-1}(\theta_*) = \max_{\lambda\in \Delta_{\mathcal{X}}}\inf_{x' \neq x_*}\sup_{\gamma \geq 0} F(\lambda, x', \gamma, \theta_*)
\end{align*}
\begin{align*}
    F(\lambda, x', \gamma, \theta_*) &= \frac{\max\{(x'-x_*)^\top (A(\lambda) + \gamma I)^{-1}A(\lambda) \theta_*, 0\}^2}{2\|x'-x_*\|^2_{(A(\lambda) + \gamma I)^{-1}}}+ \frac{\gamma}{2} \left(\|\theta_*\|_{(A(\lambda) + \gamma I)^{-1}A(\lambda)}^2 - R^2\right)
\end{align*}
\end{corollary}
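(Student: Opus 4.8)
The plan is to obtain this corollary as a direct specialization of the general asymptotic sample-complexity lower bound for identification in structured bandits (Theorem~1 of \cite{degenne2020gamification}), which via the standard change-of-measure / transportation argument states that any $\delta$-correct algorithm over the model class $\mathcal{E}$ satisfies
\[
\liminf_{\delta\to0}\frac{\E_{\theta_*}[\tau_\delta]}{\log(1/\delta)} \ge T^*(\theta_*), \quad T^{*-1}(\theta_*) = \sup_{\lambda\in\triangle_{\mc{X}}}\inf_{\theta'\in\mathrm{Alt}(\theta_*)}\sum_{x\in\mc{X}}\lambda_x \mathrm{KL}_x(\theta_*,\theta'),
\]
where $\mathrm{Alt}(\theta_*)=\{\theta'\in\mathcal{E}: \arg\max_x\langle x,\theta'\rangle\neq x_*\}$ is the set of confusing parameters \emph{inside} the class. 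Thus the work is entirely in simplifying the inner optimization for the constrained linear Gaussian model, where $\mathcal{E}=\{\theta:\|\theta\|_2\le R\}$ and $\mathrm{KL}_x(\theta_*,\theta')=\tfrac{1}{2}\langle x,\theta_*-\theta'\rangle^2$ (unit-variance Gaussian), so that $\sum_x\lambda_x\mathrm{KL}_x=\tfrac12\|\theta_*-\theta'\|^2_{A(\lambda)}$.

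First I would rewrite $\mathrm{Alt}(\theta_*)$ as a finite union: a parameter with $\|\theta'\|_2\le R$ is confusing iff some competitor beats $x_*$, i.e. $\mathrm{Alt}(\theta_*)=\bigcup_{x'\neq x_*}\mathcal{C}_{x'}$ with $\mathcal{C}_{x'}=\{\theta':\|\theta'\|_2\le R,\ \langle x'-x_*,\theta'\rangle\ge0\}$. Since an infimum over a union equals the minimum of the infima, we get $T^{*-1}=\sup_\lambda\inf_{x'\neq x_*}\inf_{\theta'\in\mathcal{C}_{x'}}\tfrac12\|\theta_*-\theta'\|^2_{A(\lambda)}$, which isolates, for each fixed $(\lambda,x')$, a convex quadratic program with one affine (half-space) constraint and one norm-ball constraint.

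The heart of the argument is solving this inner QP in closed form by Lagrangian duality. Introducing multipliers $\beta\ge0$ for the half-space and $\gamma\ge0$ for the ball constraint, the Lagrangian is quadratic in $\theta'$, and minimizing over $\theta'$ gives $\theta'(\beta,\gamma)=(A(\lambda)+\gamma I)^{-1}(A(\lambda)\theta_*+\beta(x'-x_*))$ — this is exactly where the regularized inverse $(A(\lambda)+\gamma I)^{-1}$ appears, with $\gamma$ the dual variable of $\|\theta'\|_2\le R$. Substituting back yields a concave quadratic in $\beta$; maximizing over $\beta\ge0$ produces the $\max\{\cdot,0\}^2$ numerator (the constraint is slack exactly when the relevant gap has the wrong sign, forcing $\beta=0$), while collecting the $\beta$-independent terms produces precisely the $\tfrac{\gamma}{2}(\|\theta_*\|^2_{(A(\lambda)+\gamma I)^{-1}A(\lambda)}-R^2)$ contribution, where I would use that $A(\lambda)$ and $A(\lambda)+\gamma I$ commute so that $(A(\lambda)+\gamma I)^{-1}A(\lambda)$ is a genuine PSD quadratic form. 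Strong duality (so that $\inf_{\theta'\in\mathcal{C}_{x'}}=\sup_{\beta,\gamma\ge0}$) follows from Slater's condition whenever $\mathcal{C}_{x'}$ has nonempty interior. This identifies $\inf_{\theta'\in\mathcal{C}_{x'}}\tfrac12\|\theta_*-\theta'\|^2_{A(\lambda)}=\sup_{\gamma\ge0}F(\lambda,x',\gamma,\theta_*)$, and plugging into the display of the previous paragraph gives $T^{*-1}=\max_\lambda\inf_{x'\neq x_*}\sup_{\gamma\ge0}F$, as claimed.

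The main obstacle I expect is this duality computation: one must verify that maximizing the dual over the half-space multiplier $\beta$ reproduces exactly the stated numerator and that the residual term matches $\tfrac{\gamma}{2}(\|\theta_*\|^2_{(A(\lambda)+\gamma I)^{-1}A(\lambda)}-R^2)$, and, more delicately, justify strong duality and the interchange leaving a clean $\sup_{\gamma\ge0}$ (Slater / nonempty interior of $\mathcal{C}_{x'}$, together with the assumption that $\theta_*\in\mathcal{E}$ has a unique maximizer $x_*$ so that $\mathrm{Alt}(\theta_*)$ is genuinely separated from $\theta_*$). Invoking Theorem~1 of \cite{degenne2020gamification} itself is then routine once the problem is cast as a constrained Gaussian linear bandit.
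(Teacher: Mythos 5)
Your proposal matches the paper's proof essentially step for step: both reduce Theorem 1 of \cite{degenne2020gamification} to the constrained Gaussian linear case, decompose the alternative set as a union over competitors $x'$ of half-spaces intersected with the norm ball, and solve the resulting inner quadratic program by Lagrangian duality with one multiplier for the half-space constraint (your $\beta$, the paper's $\nu$) and $\gamma$ for the ball constraint, yielding the same minimizer $(A(\lambda)+\gamma I)^{-1}(A(\lambda)\theta_*+\nu(x'-x_*))$ and hence the stated $F$. The only cosmetic difference is that you explicitly justify the primal--dual equality via Slater's condition, whereas the paper asserts it directly after the computation.
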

\begin{proof}
Recall theorem 1 of \cite{degenne2020gamification}. For any $\theta_* \in \mathcal{E}$ we have 
\begin{align*}
    \liminf_{\delta \rightarrow 0} \frac{\mathbb{E}_{\theta_*}[\tau_\delta]}{\log(1/\delta)} \geq T^*(\theta_*)
\end{align*}
where we define the characteristic time through
\begin{align*}
    T^{*-1}(\theta_*) := \max_{\lambda\in \Delta_{\mathcal{X}}}\inf_{ \theta \in  \overline{S}_{x_*}} \|\theta - \theta_*\|_{\left(\sum_{x \in \mathcal{X}}\lambda_x x x^\top\right)}^2
\end{align*}
with $\overline{S}_{x_*} = \{\theta \in \mathcal{E} \; \textrm{s.t.}\; \exists x' \neq x_*, \theta^\top (x' - x_*) > 0\}$ and with here $\mathcal{E} = \{\theta \in \mathbb{R}^d : \|\theta\|_2^2\leq R^2\}$. \\\\
We can now start the proof by writing $T^{*-1}(\theta)$ as
\begin{align*}
    T^{*-1}(\theta) = \max_{\lambda\in \Delta_{\mathcal{X}}}\inf_{x' \neq x_*}\inf_{ \theta \in \mathcal{E}, \theta^\top (x' - x_*) > 0} \|\theta - \theta_*\|_{\left(\sum_{x \in \mathcal{X}}\lambda_x x x^\top\right)}^2\;.
\end{align*}
Then, instead of 
\begin{align*}
    \inf_{ \theta \in \mathcal{E}, \theta^\top (x' - x_*) > 0} \|\theta - \theta_*\|_{\left(\sum_{x \in \mathcal{X}}\lambda_x x x^\top\right)}^2
\end{align*}
we use $y = x' - x_*$ to write
\begin{align*}
    \inf_{ \theta \in \mathbb{R}^d, \theta^\top y \geq 0, \|\theta\|_2^2\leq R^2} \frac{1}{2}\|\theta - \theta_*\|_{A(\lambda)}^2 \;.
\end{align*}
We introduce the Lagrangian of this convex program
\begin{align*}
    L(\theta, \gamma, \nu) =  \frac{1}{2}\|\theta - \theta_*\|_{A(\lambda)}^2 - \nu(\theta^\top y) + \frac{\gamma}{2} (\|\theta\|_2^2 - R^2) \;.
\end{align*}
and solve 
\begin{align*}
    \inf_{ \theta \in \mathbb{R}^d} L(\theta, \gamma, \nu) &= \inf_{ \theta \in \mathbb{R}^d}\frac{1}{2}\|\theta - \theta_*\|_{A(\lambda)}^2 - \nu(\theta^\top y) + \frac{\gamma}{2}( \|\theta\|_2^2 - R^2)
\end{align*}
$\theta \mapsto L(\theta, \gamma, \nu)$ is differentiable and convex so we compute the gradient 
\begin{align*}
    \nabla_\theta L(\theta, \gamma, \nu) = (A(\lambda) + \gamma I) \theta - A(\lambda) \theta_* - \nu y
\end{align*}
and set it to zero to get
\begin{align*}
    \widehat{\theta} = \arg \min_{ \theta \in \mathbb{R}^d} L(\theta, \gamma, \nu) = (A(\lambda) + \gamma I)^{-1} (A(\lambda) \theta_* + \nu y) = \theta_* + (A(\lambda) + \gamma I)^{-1} (\nu y - \gamma \theta_*) 
\end{align*}
The cross term of both norms has absolute value $\gamma \nu \theta_* (A(\lambda) + \gamma I)^{-1}A(\lambda) y$, and they cancel. So we get
\begin{align*}
    L(\widehat{\theta}, \gamma, \nu)
    &= \frac{1}{2}\|(A(\lambda) + \gamma I)^{-1} (\nu y - \gamma \theta_*) \|_{A(\lambda)}^2 - \nu y^\top\left((A(\lambda) + \gamma I)^{-1} (A(\lambda) \theta_* + \nu y)\right)  \\
    &\;\;\;\;+ \frac{\gamma}{2} (\|(A(\lambda) + \gamma I)^{-1} (A(\lambda) \theta_* + \nu y)\|_2^2 - R^2)\\
    &= \frac{\gamma^2}{2}\|(A(\lambda) + \gamma I)^{-1} \theta_* \|_{A(\lambda)}^2 + \frac{\gamma}{2}\|(A(\lambda) + \gamma I)^{-1} A(\lambda) \theta_* \|^2 - \frac{\gamma}{2} R^2 - \nu y^\top (A(\lambda) + \gamma I)^{-1}A(\lambda) \theta_* \\
    &\;\;\;\;+ \frac{\nu^2}{2}\|(A(\lambda) + \gamma I)^{-1} y\|_{A(\lambda)}^2 - \nu^2 y^\top (A(\lambda) + \gamma I)^{-1} y + \frac{\nu^2\gamma}{2} \|(A(\lambda) + \gamma I)^{-1} y \|^2\\
    &= \frac{\gamma}{2} \left(\|\theta_*\|_{(A(\lambda) + \gamma I)^{-1}A(\lambda)}^2 - R^2\right) - \nu y^\top (A(\lambda) + \gamma I)^{-1}A(\lambda) \theta_* - \frac{\nu^2}{2}\|y\|^2_{(A(\lambda) + \gamma I)^{-1}}
\end{align*}
so 
\begin{align*}
    \sup_{\nu \geq 0}L(\widehat{\theta}, \gamma, \nu) = \frac{\gamma}{2} \left(\|\theta_*\|_{(A(\lambda) + \gamma I)^{-1}A(\lambda)}^2 - R^2\right) + \frac{(\max\{y^\top (A(\lambda) + \gamma I)^{-1}A(\lambda) \theta_*, 0\})^2}{2\|y\|^2_{(A(\lambda) + \gamma I)^{-1}}} 
\end{align*}
Conclusion:
\begin{align*}
    \inf_{ \theta \in \mathbb{R}^d, \theta^\top y \geq 0, \|\theta\|_2^2\leq R^2} \frac{1}{2}\|\theta - \theta_*\|_{A(\lambda)}^2 
    &= \sup_{\gamma \geq 0} \left\{\frac{(\max\{y^\top (A(\lambda) + \gamma I)^{-1}A(\lambda) \theta_*, 0\})^2}{2\|y\|^2_{(A(\lambda) + \gamma I)^{-1}}} + \frac{\gamma}{2} \left(\|\theta_*\|_{(A(\lambda) + \gamma I)^{-1}A(\lambda)}^2 - R^2\right)\right\}\\
\end{align*}
Then for any $\theta_* \in \mathcal{E}$ we have 
\begin{align*}
    \liminf_{\delta \rightarrow 0} \frac{\mathbb{E}_{\theta_*}[\tau_\delta]}{\log(1/\delta)} &\geq T^*(\theta_*) \\
    &= \frac{1}{\max_{\lambda\in \Delta_{\mathcal{X}}}\inf_{x' \neq x_*}\sup_{\gamma \geq 0} \left\{\frac{(\max\{(x'-x_*)^\top (A(\lambda) + \gamma I)^{-1}A(\lambda) \theta_*, 0\})^2}{2\|x'-x_*\|^2_{(A(\lambda) + \gamma I)^{-1}}} + \frac{\gamma}{2} \left(\|\theta_*\|_{(A(\lambda) + \gamma I)^{-1}A(\lambda)}^2 - R^2\right)\right\}}\\
    &=\inf_{\lambda\in \Delta_{\mathcal{X}}}\sup_{x' \neq x_*}\inf_{\gamma \geq 0}\frac{1}{F(\lambda, x', \gamma, \theta_*)}
\end{align*}
With
\begin{align*}
    F(\lambda, x', \gamma, \theta_*) &:= \frac{\max\{(x'-x_*)^\top (A(\lambda) + \gamma I)^{-1}A(\lambda) \theta_*, 0\}^2}{2\|x'-x_*\|^2_{(A(\lambda) + \gamma I)^{-1}}} + \frac{\gamma}{2} \left(\|\theta_*\|_{(A(\lambda) + \gamma I)^{-1}A(\lambda)}^2 - R^2\right)
\end{align*}
\end{proof}
\noindent Note that this result and its proof can be written in the case where $\phi$ is any feature map without any changes.

\section{Experiments details}

We briefly provide some additional details on the experiments. We used Python 3 and parallelized the simulations on a 2.9 GHz Intel Core i7. We computed the designs in each of the three experiments using mirror descent. We repeated the G-optimal design experiment 16 times, the kernels experiment 40 times, and the IPS vs. RIPS experiment 16 times. The G-optimal design experiment and the IPS vs. RIPS experiment used noise $\eta \sim N(0,1)$ while in the kernels experiment used noise $\eta \sim N(0,0.05)$. The confidence bounds in our plots are based on standard errors. 

\end{document}